\newcommand{\jk}[1]{\textcolor{red}{#1}}
\newcommand{\mv}[1]{\textcolor{blue}{#1}}
\renewcommand{\paragraph}[1]{\noindent\textbf{#1}\quad}
\newcommand{\UCB}{\mathrm{UCB}}
\newcommand{\ga}{{\mathcal{G}}}
\newcommand{\ba}{{\mathcal{B}}}
\newcommand{\rot}{\varrho}
\newtheorem{theorem}{Theorem}[section]
\newtheorem{lemma}[theorem]{Lemma}
\newtheorem{remark}[theorem]{Remark}
\def\munderbar#1{\underline{\sbox\tw@{$#1$}\dp\tw@\z@\box\tw@}}
\title{Rotting infinitely many-armed bandits
}
\author{\normalsize
\begin{tabular}{c} Jung-hun Kim \\ KAIST \\ junghunkim@kaist.ac.kr \end{tabular} \and
\normalsize\begin{tabular}{c} Milan Vojnovi{\' c} \\ London School of Economics \\ m.vojnovic@lse.ac.uk \end{tabular} \and
\normalsize\begin{tabular}{c} Se-Young Yun \\ KAIST \\ yunseyoung@kaist.ac.kr \end{tabular} }
\date{}
\begin{document}

\maketitle

\begin{abstract} We consider the infinitely many-armed bandit problem with rotting rewards, where the mean reward of an arm decreases at each pull of the arm according to an arbitrary trend with maximum rotting rate $\varrho=o(1)$. We show that this learning problem has an $\Omega(\max\{\varrho^{1/3}T, \sqrt{T}\})$ worst-case regret lower bound where $T$ is the time horizon. We show that a matching upper bound $\tilde{O}(\max\{\varrho^{1/3}T, \sqrt{T}\})$, up to a poly-logarithmic factor, can be achieved by an algorithm that uses a UCB index for each arm and a threshold value to decide whether to continue pulling an arm or remove the arm from further consideration, when the algorithm knows the value of the maximum rotting rate $\varrho$. We also show that an $\tilde{O}(\max\{\varrho^{1/3}T, T^{3/4}\})$ regret upper bound can be achieved by an algorithm that does not know the value of $\varrho$, by using an adaptive UCB index along with an adaptive threshold value.
\end{abstract}
    
\section{Introduction}

We consider a fundamental sequential learning problem in which an agent must play one option at a time from an infinite set of options with non-stationary reward distributions, where the mean reward of an option decreases at each play of this option. This is naturally studied as the infinitely many-armed bandit problem with rotting rewards. The assumption of infinitely many arms models practical situations when there is a finite but large number of arms relative to the number of available experiments. There is an abundance of applications in which one must choose from a large set of options with rotting rewards, e.g. online advertising where arms correspond to ads and rewards decrease over exposures of an ad to a user, content recommendation systems where arms correspond to media items and rotting arises because of user boredom when watching the same content, and clinical trials where the efficacy of a medicine
may decrease because of drug tolerance when a patient takes the same medicine several times.
While there has been a lot of work on multi-armed bandits with a finite number of arms with stationary or non-stationary rewards, and an infinite number of arms with stationary rewards, not much seems to be known for the case of infinitely many arms with non-stationary rewards. 

In this paper we make first steps to understand the fundamental limits of sequential learning for infinite number of arms whose mean rewards decrease with the number of pulls---the case commonly referred to as the \emph{rested rotting bandits}. Our focus is on rotting trends where the mean reward of an arm decreases arbitrary for at most a fixed amount $\rot$ at each pull of this arm. The initial mean rewards of arms are assumed to be independent and identically distributed according to uniform distribution on $[0,1]$. The objective is to find a policy that minimizes the expected cumulative regret over a time horizon of $T$ time steps with respect to playing the best arm. 

We show that the worst-case regret for this problem is lower bounded by $\Omega(\max\{\rot^{1/3}T,\sqrt{T}\})$,
where $\rot$ is the maximum rotting rate, and show that this lower bound is tight up to a poly-logarithmic factor. 
This reveals that the rotting trend starts to have an effect on regret precisely at the threshold $\rot = \Theta(1/T^{3/2})$. Our result implies that the rotting rested bandit problem with infinitely many arms is harder than for the stationary rewards case, as in the latter case the regret lower bound is $\Omega(\sqrt{T})$ \cite{Wang}. This stands in stark contrast to the case of finite $K$ arms in which case it is known that
$\tilde{O}(\sqrt{KT})$ can be achieved for the rotting case \cite{Seznec}, which matches the optimal bound in the stationary case \cite{auer} up to a poly-logarithmic factor. 

In the case of infinitely many arms with stationary rewards, it is not possible to explore all arms to find an optimal arm, hence, it is required to find a near-optimal arm; contrast this with the case of finitely many arms, where all arms must be explored to identify an optimal arm. Further, when we consider rotting rewards, the learner must keep exploring new arms because a near-optimal arm may become suboptimal as it is being pulled. Based on this fact, we design algorithms for the rotting infinitely many-armed bandit problem to achieve tight regret bounds. We summarize our contributions in more details in what follows.  




\subsection{Summary of our contributions}

We show an $\Omega(\max\{\rot^{1/3}T,\sqrt{T}\})$ worst-case regret lower bound for the rotting rested bandit case with maximum rotting rate $\rot = o(1)$. This regret lower bound matches the regret lower bound $\Omega(\sqrt{T})$ that is known to hold for the case of stationary rewards, when rotting is sufficiently small---precisely when $\rot = O(1/T^{3/2})$. Otherwise, when $\rot = \omega(1/T^{3/2})$, the regret lower bound becomes worse than for the stationary case. 

We show that an $\tilde{O}(\max\{\rot^{1/3}T,\sqrt{T}\})$ regret can be achieved by an algorithm when the maximum rotting rate $\rot$ is known to the algorithm. This algorithm uses a UCB index to decide whether to continue pulling an arm or remove the arm from further consideration and switch to exploring a new arm by comparing the index with a threshold. This threshold is set to account for rotting of rewards. 

We further show that an $\tilde{O}(\max\{\rot^{1/3}T,T^{3/4}\})$ regret can be achieved by an algorithm that does not know the value of the maximum rotting rate $\rot$. This algorithm uses an adaptive UCB index and an adaptive threshold value to compare the UCB index of an arm with the threshold to decide whether to continue pulling this arm or remove the arm from further consideration. This upper bound matches the lower bound up to poly-logarithmic factors when the rotting rate $\rot$ is sufficiently large, i.e. when $\rot=\Omega(1/T^{3/4})$.

We present results of numerical experiments for randomly generated problem instances of rotting infinitely many-armed bandits. These results validate the insights derived from our theoretical results.


\subsection{Related work} 


The work on multi-armed bandits can be distinguished with respect to two criteria, first whether the number of arms is finite or infinite, and second whether rewards of arms are stationary or non-stationary. For the case of non-stationary rewards, we can further distinguish rested from restless multi-armed bandit problems --- in the former case, an arm's distribution of reward may change only when the arm is pulled, while in the latter case, it may change at each time step. Our work falls in the category of multi-armed bandit problems with infinitely many non-stationary rested arms. 

The case of finitely many arms with stationary rewards has been studied by many, following on \cite{lai,auer2002finite}. There exist algorithms having $\tilde{O}(\sqrt{KT})$ worst-case regret, where $K$ is the number of arms, and this matches the lower bound $\Omega(\sqrt{KT})$ up to a poly-logarithmic factor \cite{auer,Slivkins}. 

We next discuss the case of finitely many arms with non-stationary rewards. The non-stationarity in rewards can be quantified by the number of abrupt changes or a variation budget, which is referred to as \textit{abrupt-changing} and \textit{slow-varying} environments, respectively. The non-stationary environments were studied by \cite{auer,Garivier,Besbes,auer2019adaptively} in which proposed algorithms are based on a strategy of adapting current state rapidly and fading old history memory (e.g. sliding window, discount factor, and restarting). 
In addition to this, non-stationary environments were studied under various assumptions, e.g. contextual bandits and MDPs  \cite{cheung2019learning,chen2019new,zhao2020simple,russac2019weighted,cheung2020reinforcement}, mortal bandits where arms have a stochastic lifetime  \cite{Chakrabarti,Traca}, and bandits where arm rewards evolve according to a continuous-time stochastic process \cite{Brownian}.

The multi-armed bandit problem with a finite number of arms, where each arm's mean reward decays with the number of pulls of this arm, was first studied by \cite{Komiyama,Heidari,Bouneffouf,Levine}. Following \cite{Levine}, this problem is referred to as \emph{rotting bandits} problem. \cite{Levine} showed that a sliding-window algorithm has a $\tilde{O}(K^{1/3}T^{2/3})$ regret in a non-parametric rested rotting setting where the only assumption is that mean rewards are positive and non-increasing in the number of pulls. 
The non-parametric rotting bandit problem, allowing mean rewards to be negative with bounded decay, was subsequently studied by \cite{Seznec}, showing an algorithm that has 
an $\tilde{O}(\sqrt{KT})$ problem instance independent bound. \cite{Seznec2} showed that a single algorithm, an adaptive-window UCB index policy, achieves near-optimal regret for both rested and restless rotting bandits. In this paper, we follow the non-parametric rested rotting setting where mean rewards can only decrease with bounded decrements. 

We next discuss the case of infinitely many arms with stationary rewards. \cite{Berry,Bonald} proposed algorithms with asymptotically optimal regret $O(\sqrt{T})$ for the case of arms with Bernoulli rewards and independent mean values according to uniform distribution on $[0,1]$. \cite{Wang} studied the case where the mean reward distribution has support on $[0,\mu^*]$ with $\mu^*\leq 1$, and for each arm $a$ the distribution of mean reward $\mu(a)$ is such that $\mathbb{P}(\mu(a) \geq \mu^* - z) = \Theta(z^\beta)$, for some $\beta > 0$. 
\cite{Carpentier} studied the same problem but focused on simple regret, defined as the instantaneous regret at time step $T$. \cite{Bayati} showed that a subsampled UCB algorithm (SSUCB) that samples $\Theta(\sqrt{T})$ arms and executes UCB only on this subset of arms has $\tilde{O}(\sqrt{T})$ regret under 1-sub-Gaussian rewards with mean rewards according to uniform distribution on $[0,1]$. In this setting, for mean reward distributions such that there is a large enough number of near-optimal arms, an algorithm may find a near-optimal arm by exploring a restricted number of arms. There also exist several works dealing with infinitely many arms under structured reward functions such as contextual linear bandits \cite{abbasi2011improved} and Lipschitz bandits \cite{bubeck2011lipschitz}. In this paper, however, we focus on infinitely many arms under a mean reward distribution, where the structured-reward assumptions may not hold because of rotting.

Our work is different from the work discussed in this section in that we consider the case of infinitely many arms with non-stationary rotting arms. 
In the case of rotting bandits with a finite number of arms, as we mentioned, \cite{Seznec,Seznec2} achieves worst-case regret bound  $\tilde{O}(\sqrt{KT})$ which matches the near-optimal regret in the stationary stochastic setting. This result indicates that the rotting in the finitely many arms setting is not a harder problem than in the stationary rewards setting. However, in the setting of infinitely many arms, rotting of rewards makes the problem harder than in the stationary rewards case.
This is because the value of the optimal mean reward is not decreasing as the arms are being pulled as there are infinitely many near-optimal arms, which requires an additional exploration to recurrently search for a new optimal arm outside of the set of already pulled arms. 
Our algorithms are different from previously-proposed algorithms for the case of infinitely many arms with stationary rewards in keeping to explore new arms to find a near-optimal arm over time because of rotting rewards. In more details, our algorithms use UCB policies to decide whether to continue pulling an arm or remove the arm from further consideration and explore a new arm, by comparing its UCB index with a threshold which is adjusted by using the rotting rate or an estimated value of the rotting rate.



\section{Problem formulation}
\label{sec:prob}

We consider a non-stationary bandit problem with infinitely many arms where the reward distributions of arms vary over time. We consider the case when the mean reward of an arm may decrease only when this arm is pulled by an agent that uses a policy $\pi$, which is referred to as the \emph{rested rotting bandit} setting. Let $\mathcal{A}$ be an infinite set of arms, $\mu_t(a)$ be the mean reward of arm $a$ at time $t$ before pulling an arm at time $t$, and $n_t(a)$ be the number of times arm $a\in\mathcal{A}$ is pulled by $\pi$ before time $t$. Also, denote by $r_t$ the stochastic reward gained by pulling arm $a_t^\pi$ at time $t$. Let $r_t=\mu_t(a_t^\pi)+\eta_t$ where $\eta_t$ is a noise term with a $1$-sub-Gaussian distribution. We assume that initial mean rewards $\{\mu_1(a)\}_{a\in \mathcal{A}}$ are i.i.d. random variables with uniform distribution on $[0,1]$.
The rotting of arms is defined as follows. Given a rotting rate $0\le \rot_t\le \rot$ at time $t\ge1$ with maximum rotting rate $\rot=o(1)$, the mean reward of the selected arm at time $t$ changes as follows
\begin{align*}
    \mu_{t+1}(a_t^\pi)=\mu_t(a_t^\pi)-\rot_t
\end{align*}
whereas the mean rewards of other arms remain unchanged.
The mean reward of every arm $a\in\mathcal{A}$ at time $t>1$ can be represented as follows. With $0\le\rot_s\le \rot$ for all time steps $0< s < t$,  $$\mu_t(a)=\mu_1(a)-\sum_{s=1}^{t-1}\rot_s\mathbbm{1}(a_s^\pi=a).$$

 The objective is to find a policy that minimizes the expected cumulative regret over a time horizon of $T$ time steps, which for a given policy $\pi$ is defined as follows 
$$
\mathbb{E}[R^\pi(T)]=\mathbb{E}\left[\sum_{t=1}^T(1-\mu_t(a_t^\pi))\right].
$$
In the regret definition, we use that the mean reward of the optimal arm at any time $t$ is equal to $1$. This is because there is an infinite number of arms in $\mathcal{A}$ with i.i.d. mean rewards according to uniform distribution on $[0,1]$, so that there always exist sufficiently many arms whose mean rewards are close enough to $1$. In what follows, `selecting an arm' means that a policy chooses an arm in $\mathcal{A}$ before playing it and `pulling an arm' means that the policy plays a selected arm and receives a reward.

\section{Regret lower bound}
\label{sec:lb}

We first discuss two different regimes for regret depending on the value of the maximum rotting rate $\rot$. When $\rot\le 1/T^{3/2}$, the mean reward of any arm over the time horizon of $T$ time steps changes for at most $\rot T\le1/\sqrt{T}.$ Therefore, for any arm, there can be at most a gap of $1/\sqrt{T}$ between the initial mean reward and the mean reward after $T$ time steps, which causes an additional regret of at most $\sqrt{T}$ over the horizon of $T$ time steps to the case of stationary arms (i.e. when $\rot = 0$). In Theorem 3 in \cite{Wang}, the optimal regret for the stationary case, with uniform distribution of mean rewards of arms, is shown to be of the order $\sqrt{T}$. Therefore, the extra regret of $\sqrt{T}$ from the rotting with $\rot\le 1/T^{3/2}$ does not affect the order of the regret. When $\rot>1/T^{3/2}$, we expect that the regret lower bound may be different than for the stationary case. 

By analyzing the regret lower bound for the specific case with $\rot_t=\rot$ for all time steps $t>0$, we provide a lower bound for the worst-case regret with respect to arbitrary rotting as given in the following theorem.

\begin{theorem} For the rotting infinitely many-armed bandit problem, there exist rotting rates $0\le\rot_t\le \rot=o(1)$ for all time steps $t>0$ such that any policy $\pi$ has the regret over a time horizon of $T$ time steps such that
$$
\mathbb{E}[R^\pi(T)]=\Omega(\max\{\rot^{1/3}T,\sqrt{T}\}).
$$\label{thm:lower_bd_e}
\end{theorem}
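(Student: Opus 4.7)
The plan is to use the constant rotting sequence $\rot_t=\rot$ for every $t\ge 1$ and extract both components of the lower bound from this single instance. For the $\Omega(\sqrt{T})$ component, note that $\mu_t(a)\le \mu_1(a)$ gives $R^\pi(T)\ge \sum_a n_a(1-\mu_1(a))$. Any policy $\pi$ in the rotting instance can be simulated by a stationary-case policy $\tilde\pi$ that subtracts off the deterministic rotting amount $\rot\,n_t(a_t)$ from each observed reward, inducing the same distribution over arm pulls. Hence the right-hand side equals the stationary regret of $\tilde\pi$, which is $\Omega(\sqrt{T})$ by Theorem~3 of \cite{Wang}.

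For the $\Omega(\rot^{1/3}T)$ component, expanding $\mu_t(a)=\mu_1(a)-\rot\,n_t(a)$ yields the exact decomposition
\begin{align*}
R^\pi(T) = \sum_a n_a(1-\mu_1(a)) + \frac{\rot}{2}\sum_a n_a(n_a-1),
\end{align*}
where $n_a$ is the random number of pulls of arm $a$ under $\pi$. The first sum is the cumulative initial-gap regret and the second is the rotting penalty. I would close the argument via a case split at $\Delta=c\,\rot^{1/3}$: call $a$ \emph{good} if $\mu_1(a)\ge 1-\Delta$, and set $T_b=\sum_{a:\mu_1(a)<1-\Delta} n_a$ and $K_g=|\{a: n_a\ge 1,\,\mu_1(a)\ge 1-\Delta\}|$. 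On the event $T_b\ge T/2$, the first sum is already $\ge \Delta T_b=\Omega(\rot^{1/3}T)$. On the complement, $T_g:=T-T_b\ge T/2$, and Cauchy--Schwarz on the good-arm summands of the rotting term yields $\ge \rot T_g^2/(2K_g)\ge \rot T^2/(8K_g)$, which combined with an upper bound $\mathbb{E}[K_g]=O(T\Delta^2)=O(\rot^{2/3}T)$ gives rotting $\ge \Omega(\rot^{1/3}T)$ in expectation (via Jensen applied to the convex function $1/x$).

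The main obstacle is establishing $\mathbb{E}[K_g]=O(\rot^{2/3}T)$ rigorously, since $K_g$ is policy-dependent and coupled to the unknown means $\{\mu_1(a)\}$. I would obtain this from a standard information-theoretic argument: classifying an arm as good versus bad at constant error probability requires $\Omega(1/\Delta^2)$ pulls (by a Le~Cam or KL-divergence argument), so within a budget of $T$ pulls no policy can confidently label more than $O(T\Delta^2)=O(\rot^{2/3}T)$ arms as good. Exchangeability of arms under the $U[0,1]$ prior then reduces the step to a Bayesian best-arm-identification sub-problem on the set of pulled arms, which makes the count precise and closes the proof of the $\rot^{1/3}T$ bound.
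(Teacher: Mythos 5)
Your $\Omega(\sqrt{T})$ component is fine: the reduction that simulates $\pi$ inside the stationary instance by subtracting the deterministic quantity $\rot\,n_t(a_t)$ from the observed rewards is valid and arguably cleaner than the paper, which instead re-runs the argument of Theorem~3 of \cite{Wang} directly (classifying arms by two thresholds $1-c$ and $1-\delta$ with $\delta=1/\sqrt{T}$ and using the geometric law of the number of bad arms before the first good one). Your decomposition of the regret into an initial-gap term and a rotting term, and the case split on whether at least $T/2$ pulls land on arms with $\mu_1<1-\Delta$, also mirrors the paper's split on whether the number of selected good arms $N_T$ is below or above $m=\lceil(1/2)T\rot^{2/3}\rceil$; the Cauchy--Schwarz/pigeonhole step on the rotting term plays the same role as the paper's observation that a good arm can stay above the threshold $1-\delta$ for only $O(\delta/\rot)$ pulls.

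The gap is in the counting lemma you need to close the second case. The claim $\mathbb{E}[K_g]=O(T\Delta^2)=O(\rot^{2/3}T)$ is false as stated: $K_g$ counts good arms that are pulled \emph{at least once}, and a policy that pulls $T$ distinct fresh arms once each has $\mathbb{E}[K_g]=T\Delta=\Theta(\rot^{1/3}T)\gg \rot^{2/3}T$. The information-theoretic argument you propose (classifying an arm as good at constant error requires $\Omega(1/\Delta^2)$ pulls) bounds the number of arms a policy can \emph{certify} as good, which is not the quantity appearing in your Cauchy--Schwarz bound; no certification is needed to pull an arm, so the $1/\Delta^2$ sample-complexity bound does not control $K_g$. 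Moreover, even with a bound on $\mathbb{E}[K_g]$, the step $\mathbb{E}[\mathbbm{1}(T_g\ge T/2)\,T_g^2/K_g]\ge \Omega(T^2)\cdot\mathbb{E}[1/K_g]\ge \Omega(T^2)/\mathbb{E}[K_g]$ does not follow from Jensen because of the indicator and the correlation between $T_g$ and $K_g$. The correct resolution, which is the paper's, replaces the identification argument by a \emph{scarcity} argument on the prior: good arms have density $\Theta(\Delta)$ among arms outside $(1-c,1-\Delta)$, so the number $K_m$ of constant-gap bad arms encountered before the $m$-th distinct good arm is negative binomial with success probability $\Theta(\Delta)$, hence $K_m=\Omega(m/\Delta)=\Omega(\rot^{1/3}T)$ with probability at least $1-e^{-1/6}$; each such bad arm is pulled at least once and costs constant regret. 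Thus the case $K_g\ge m$ is charged to the bad arms that must be sampled along the way (cost $\Theta(1/\Delta)$ bad arms per good arm found, not $\Theta(1/\Delta^2)$ pulls per arm certified), while the case $K_g<m$ is handled by your pigeonhole bound. Without replacing your labeling argument by this negative-binomial count, the proof of the $\Omega(\rot^{1/3}T)$ component does not go through.
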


From the result of the theorem, when the rotting is small enough, i.e. precisely when $\rot \leq 1/T^{3/2}$, the lower bound corresponds to  $\Omega(\sqrt{T})$, which is known to hold when rewards are  stationary. Otherwise, when the rotting is sufficiently large, then the lower bound is $\Omega(\rot^{1/3}T)$. For example, when $\rot = 1/T^\gamma$ for some $\gamma > 0$, we have the lower bound $\Omega(\sqrt{T})$, if $\gamma \geq 3/2$ (small rotting case), and,  otherwise (large rotting case), we have $\Omega(T^{1-\gamma/3})$. We can observe that $\rot=\Theta(1/T^{3/2})$ is a transition point at which the lower bound switches from $\Omega(\sqrt{T})$ to $\Omega(\rot^{1/3}T)$.



\begin{proof}[Proof sketch]
Here we present a proof sketch of the theorem with the full version of the proof provided in Appendix~\ref{app:lower_bd_e}.

We assume that $\rot_t=\rot=o(1)$ for all time steps $t>0$.
When $\rot=O( 1/T^{3/2})$, the lower bound for the stationary case of the order $\sqrt{T}$ \cite{Wang} is tight enough for the non-stationary case. This is because we only need to pay an extra regret of at most of order $\sqrt{T}$ for small $\rot$. Therefore, when $\rot=O( 1/T^{3/2})$, we have
\begin{align}
    \mathbb{E}[R^\pi(T)]=\Omega(\sqrt{T}).
    \label{eq:lowbd_small}
\end{align}
We note that even though the mean rewards are rotting in our setting, we can easily obtain \eqref{eq:lowbd_small} by following the same proof steps of Theorem 3 in \cite{Wang}. For the sake of completeness, we provide a proof in the Appendix~\ref{app:lower_bd_e}.

When $\rot=\omega(1/T^{3/2})$, however, the lower bound of the stationary case is not tight enough. Here we provide the proof of the lower bound $\Omega(\rot^{1/3}T)$ for the case when $\rot=\omega(1/T^{3/2})$.
For showing the lower bound, we will classify each arm to be either bad or good or else according to the definition given shortly. To distinguish bad and good arms, we use two thresholds $1-c$ and $1-\delta$, respectively, where $c$ and $\delta$ are such that $0< 1-c < 1-\delta < 1$, $\delta=\rot^{1/3}$, and $c$ is a constant. An arm $a$ is said to be a \emph{bad arm} if $\mu_1(a) \leq 1-c$, and is said to be a \emph{good arm} if $\mu_1(a) > 1-\delta$.


Let $N_T$ be the number of distinct selected good arms until time step $T$.
We separately consider two cases when $N_T < m$ and $N_T\ge m$, where  $m=\lceil (1/2)T\rot^{2/3}\rceil$, and show that each case has $\Omega(\rot^{1/3}T)$ as the regret lower bound. The main ideas for each case are outlined as follows. When the number of selected good arms is relatively small ($N_T<m$), any policy $\pi$ must pull arms with mean rewards less than $1-\delta$ at least $T/2$ time steps until $T$, amounting to at least $\delta$ regret for each pull (gap between 1 and mean reward of a pulled arm). Therefore, the regret is lower bounded by $\Omega(\delta(T/2))=\Omega(\rot^{1/3}T)$. When the number of selected good arms is relatively large ($N_T\ge m$), we can show that any policy $\pi$ is likely to select at least of order $\rot^{1/3}T$ number of distinct bad arms until $T$. From the fact that the selected bad arms are pulled at least once and each pull adds a constant regret of value at least $c$, the regret is shown to be lower bounded by  $\Omega (c\rot^{1/3}T)= \Omega (\rot^{1/3}T)$.
Therefore, when $\rot=\omega(1/T^{3/2})$, we can obtain
\begin{align}
  &\mathbb{E}[R^\pi(T)]\cr&=\mathbb{E}[R^\pi(T)\mathbbm{1}(N_T<m)+R^\pi(T)\mathbb{1}(N_T\ge m)]\cr&=\Omega(\rot^{1/3}T).    \label{eq:lowbd_large}
  \end{align}
Finally, from \eqref{eq:lowbd_small} and \eqref{eq:lowbd_large}, we have 
$\mathbb{E}[R^\pi(T)]=\Omega(\max\{\rot^{1/3}T,\sqrt{T}\}).$
\end{proof}

\section{Algorithms and regret upper bounds}
\label{sec:algo}

In this section, we first present an algorithm for the rested rotting bandit problem with infinitely many arms for the case when the algorithm knows the value of the maximum rotting rate. We show a regret upper bound of the algorithm that matches the regret lower bound in Theorem~\ref{thm:lower_bd_e} up to a poly-logarithmic factor. Second, we present an algorithm that does not know the maximum rotting rate and show a regret upper bound that matches the regret lower bound up to a poly-logarithmic factor, when the maximum rotting rate is large enough.

\begin{algorithm}[t]
\caption{UCB-Threshold Policy (UCB-TP)}
\begin{algorithmic}
\STATE Given: $T, \delta, \mathcal{A}$; Initialize: $ \mathcal{A}^\prime\leftarrow\mathcal{A}$
\STATE Select an arm $a\in\mathcal{A}^\prime$
\STATE Pull arm $a$ and get reward $r_1$
\FOR{$t=2,\dots,T$}
\STATE Update the initial mean reward estimator $\tilde{\mu}_t^o(a)$
\IF{$\UCB_t(a)\ge 1-\delta$}
\STATE Pull arm $a$ and get reward $r_t$
\ELSE
\STATE$\mathcal{A}^\prime\leftarrow \mathcal{A}^\prime/\{a\}$
\STATE Select an arm $a\in\mathcal{A}^\prime$
\STATE Pull arm $a$ and get reward $r_t$
\ENDIF 
\ENDFOR
\end{algorithmic}
\label{alg:Alg1}
\end{algorithm}

\subsection{An algorithm knowing maximum rotting rate}

We present an algorithm which requires knowledge of the maximum rotting rate in Algorithm~\ref{alg:Alg1}. The algorithm selects an arm and pulls this arm as long as the arm is tested to be a good arm, by using a test comparing an upper confidence bound of this arm with a threshold value. Specifically, if $a$ is the selected arm at time step $t$, the algorithm computes an estimator $\tilde{\mu}_t^o(a)$ of the initial mean reward of arm $a$ and uses this estimator to compute an estimator of the mean reward of arm $a$ at time step $t$, considering the worst-case rotting rate $\rot$ for the estimators. Comparing the upper confidence bound for the mean reward with the threshold $1-\delta$, the algorithm tests whether the arm is a good arm. If the arm is tested to be a good arm, then the algorithm continues to pull this arm. Otherwise, it discards the arm and selects a new one, and repeats the procedure described above until time horizon $T$. 
 We consider Algorithm~\ref{alg:Alg1} with the initial mean reward estimator defined as
$$
\tilde{\mu}_t^o(a)\coloneqq\frac{\sum_{s=1}^{t-1}(r_s+\rot n_{s}(a))\mathbbm{1}(a_s=a)}{n_t(a)},
$$ 
where $n_t(a)=\sum_{s=1}^{t-1} \mathbbm{1}(a_s=a)$ and the upper confidence bound term defined as 
$$
\UCB_t(a)\coloneqq \tilde{\mu}_t^o(a)-\rot n_t(a)+\sqrt{10\log(T)/n_t(a)}.
$$
Note that  when $\rot_s=\rot$ for all $0<s<t$, $\tilde{\mu}_t^o(a)$ is an unbiased estimator of the initial mean reward $\mu_1(a)$ of arm $a$ 
and $\tilde{\mu}_t^o(a)-\rot n_t(a)$ is an unbiased estimator of the mean reward $\mu_t(a)$ of arm $a$ at time step $t$. The upper confidence bound $\UCB_t(a)$ follows the standard definition of an upper confidence bound. By designing the estimators to deal with the maximum rotting rate $\rot$, for  any arbitrary $\rho_t\le \rot$ for all time steps $t>0$, we show that it has a near-optimal worst-case regret upper bound in the following theorem.

\begin{theorem} For the policy $\pi$ defined by Algorithm~\ref{alg:Alg1} with $\delta=\max\{\rot^{1/3},1/\sqrt{T}\}$, and $\rot=o(1)$, the regret satisfies
$$
\mathbb{E}[R^\pi(T)]=\tilde{O}(\max\{\rot^{1/3}T,\sqrt{T}\}).
$$
\label{thm:R_upper_bd_e}
\end{theorem}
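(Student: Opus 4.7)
The plan is to pass to a high-probability concentration event, derive a per-pull regret bound from the UCB test the algorithm maintains while pulling an arm, and then aggregate over all $T$ pulls by exploiting that the initial mean rewards of newly selected arms are i.i.d.\ Uniform on $[0,1]$. I would introduce the good event $\mathcal{E}$ on which $|\tilde\mu_t^o(a)-\mathbb{E}[\tilde\mu_t^o(a)]|\leq \sqrt{10\log T/n_t(a)}$ for every selected arm $a$ and every $t\leq T$. Since $\tilde\mu_t^o(a)-\mathbb{E}[\tilde\mu_t^o(a)]$ is the average of $n_t(a)$ independent $1$-sub-Gaussian noise terms, a sub-Gaussian tail and a union bound give $\mathbb{P}(\mathcal{E}^c)=O(1/T)$, contributing only $O(1)$ to $\mathbb{E}[R^\pi(T)]$. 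A direct calculation using $0\leq \rot_s\leq\rot$ shows $\mu_1(a)\leq\mathbb{E}[\tilde\mu_t^o(a)]\leq \mu_1(a)+\rot n_t(a)/2$, which together with $\mathcal{E}$ yields the two-sided bound
\begin{align*}
\mu_1(a)-\rot n_t(a)\ \leq\ \UCB_t(a)\ \leq\ \mu_1(a)-\rot n_t(a)/2+2\sqrt{10\log T/n_t(a)}.
\end{align*}

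Whenever the algorithm pulls $a_t$ at time $t$ with $n_t(a_t)\geq 1$, the test $\UCB_t(a_t)\geq 1-\delta$ passes; combining with the upper UCB bound and $\mu_t(a_t)\geq \mu_1(a_t)-\rot n_t(a_t)$ gives the per-pull regret bound $1-\mu_t(a_t)\leq \delta+\rot n_t(a_t)/2+2\sqrt{10\log T/n_t(a_t)}$. Summing over all pulls and regrouping by arm, with $N_a$ the pull count of arm $a$ and $K$ the number of distinct arms ever selected,
\begin{align*}
\mathbb{E}[R^\pi(T)\mathbbm{1}(\mathcal{E})]\ \leq\ \mathbb{E}[K]+T\delta+\tfrac{\rot}{4}\mathbb{E}\!\left[\textstyle\sum_a N_a^2\right]+4\sqrt{10\log T}\,\mathbb{E}\!\left[\textstyle\sum_a\sqrt{N_a}\right],
\end{align*}
where $\mathbb{E}[K]$ absorbs the trivial $\leq 1$ per-pull regret on each arm's first pull. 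The term $T\delta$ contributes $\max\{T\rot^{1/3},\sqrt{T}\}$ by the choice of $\delta$. The lower UCB bound shows $N_a\leq n^*$ where $n^*$ solves $\rot n^*/2-2\sqrt{10\log T/n^*}=\delta$, giving $n^*=\tilde{O}(1/\rot^{2/3})$ in the large-rotting regime and $n^*\leq T$ in the small-rotting regime; in either case $\tfrac{\rot}{4}\sum_a N_a^2\leq \tfrac{\rot}{4}n^* T=\tilde{O}(\max\{T\rot^{1/3},\sqrt{T}\})$.

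The main obstacle is the concentration term $\sqrt{\log T}\sum_a\sqrt{N_a}$. The plan is to exploit that the initial means of distinct selected arms are i.i.d.\ Uniform on $[0,1]$: writing $u=(1-\delta-\mu_1(a))_+$, the two-sided UCB bounds pin down $N_a=\tilde{\Theta}(\min\{\log T/u^2,n^*\})$, the cap $n^*$ taming the blow-up near $u=0$. Integrating $N_a$ and $\sqrt{N_a}$ against the uniform density of $u$ yields, up to polylogarithmic factors, $\mathbb{E}[N_a]=\tilde{\Theta}(\sqrt{n^*})$ and $\mathbb{E}[\sqrt{N_a}]=\tilde{O}(1)$. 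A Wald-type identity then gives $\mathbb{E}[K]=T/\mathbb{E}[N_a]=\tilde{O}(T/\sqrt{n^*})$ and
\begin{align*}
\mathbb{E}\!\left[\textstyle\sum_a\sqrt{N_a}\right]=\mathbb{E}[K]\,\mathbb{E}[\sqrt{N_a}]=\tilde{O}(T/\sqrt{n^*})=\tilde{O}(\max\{T\rot^{1/3},\sqrt{T}\}),
\end{align*}
so all four terms in the aggregation match the target. The chief subtlety is handling the dependence between $K$ and $\{N_a\}$ rigorously (via optional stopping on the i.i.d.\ sequence of initial means), together with careful bookkeeping of polylogarithmic factors near the $u=0$ boundary.
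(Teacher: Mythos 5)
Your proposal is correct in outline but takes a genuinely different route from the paper. The paper's proof is episodic: it classifies arms as good ($\Delta_1(a)\le\delta/2$) or bad, defines episodes ending at each good-arm selection (so the number of bad arms per episode is geometric with parameter $\delta/2$), writes the regret of each arm as $\Delta_1(a)N_a+\rot N_a^2/2$, and decouples the random episode count from the per-episode regret by \emph{fixing} the number of episodes $m^{\ga}$ (and extending the policy's behaviour past $T$) so that $R^\pi(T)\le R^\pi_{m^{\ga}}$ deterministically under the concentration event; the final bound is $m^{\ga}\bigl(\mathbb{E}[R^{\ga}_i]+\mathbb{E}[m^{\ba}_i]\,\mathbb{E}[R^{\ba}_{i,j}]\bigr)$. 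You instead bound each pull's instantaneous regret directly by the slack in the UCB test, $1-\mu_t(a_t)\le\delta+\rot n_t(a_t)/2+2\sqrt{10\log T/n_t(a_t)}$, which never references the arm's actual gap $\Delta_1(a)$, and then aggregate into $K+T\delta+(\rot/4)\sum_aN_a^2+O(\sqrt{\log T})\sum_a\sqrt{N_a}$, controlling $K$ and $\sum_a\sqrt{N_a}$ by Wald's identity over the i.i.d.\ initial means. What your route buys is a cleaner per-pull decomposition (the standard ``sum of confidence widths'' trick) in which the only place the uniform law of $\mu_1(a)$ enters is the distributional analysis of $N_a$; what the paper's route buys is that the ``how many arms are tried'' question is transparent via the geometric episode structure. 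Both must confront the same decoupling problem---the number of distinct arms and their pull counts are dependent through the budget constraint $\sum_aN_a=T$---and your optional-stopping device and the paper's fixed-$m^{\ga}$ device are essentially equivalent resolutions. Your integrals check out: with $u=(1-\delta-\mu_1(a))_+$ one gets $\mathbb{E}[N_a]=\tilde{\Theta}(\sqrt{n^*})$ (the dominant contributions being $\delta n^*$ from near-optimal arms and $\sqrt{n^*\log T}$ from the crossover region) and $\mathbb{E}[\sqrt{N_a}]=\tilde{O}(1)$, so $\mathbb{E}[K]=\tilde{O}(T/\sqrt{n^*})$ and all four terms land on $\tilde{O}(\max\{\rot^{1/3}T,\sqrt{T}\})$ in both regimes. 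The one step you should execute with real care (and which you correctly flag) is that conditioning on the global event $\mathcal{E}$ breaks the i.i.d.\ structure needed for Wald; the clean fix is to define untruncated, per-arm pull counts $N_a'$ clipped at $n^*+1$ on a per-arm concentration event, apply Wald to the stopping time $K'=\inf\{k:\sum_{a\le k}N_a'\ge T\}\ge K$, and dump the complement of $\mathcal{E}$ into an $o(1)$ remainder exactly as the paper does.
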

\begin{proof}[Proof sketch]
Here we present a proof sketch of the theorem with a full version of the proof provided in Appendix~\ref{sec:proof_thm_R_upper_bd_e}.

 Observe that initial mean rewards of selected arms are i.i.d. random variables with uniform distribution on $[0,1]$. We first define arms to be good or bad arms depending on the initial mean rewards. We assume $\rot=o(1)$ and set $\delta=\max\{\rot^{1/3},1/\sqrt{T}\}$. Then we define an arm $a$ to be a good arm if $\Delta(a)\le \delta/2$, where $\Delta(a)=1-\mu_1(a)$, and otherwise, $a$ is a bad arm. Because of rotting, initially good arm may become bad by pulling the arm several times. Therefore, the policy $\pi$ may select several good arms over the entire time and we analyze the regret over time episodes defined by the selections of good arms. Given the policy, we refer to the period starting from selecting the $i-1$-st good arm until selecting the $i$-th good arm as the $i$-th episode. Because of the uniform distribution of mean rewards with small $\delta/2$, it is likely to have several consecutive selected bad arms in each episode.
 
\begin{figure}[t]
\includegraphics[width=1\linewidth]{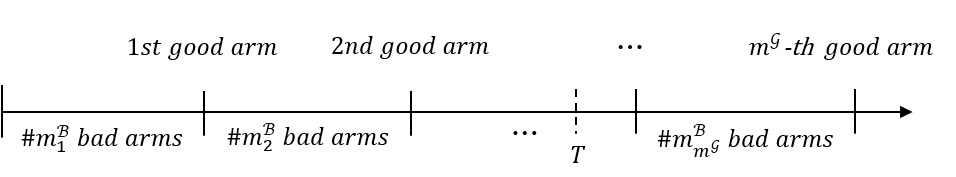}
\caption{Episodes in a time line.}
\label{fig:episode}
\end{figure}
 
We do an episodic analysis. We first analyze the expected regret per episode and multiply it by the expected number of episodes in $T$. However, this proof strategy has an issue that the regret of each episode and the number of episodes in $T$ are not independent. To resolve the issue, we fix the number of episodes to $m^\ga$ and analyze the regret not for $T$ but for $m^\ga$ episodes. Note that $m^\ga$ is a fixed value, and the time after $m^\ga$ episodes can exceed $T$. For obtaining a regret bound, we set $m^{\ga}$ so that the total number of time steps for $m^\ga$ episodes is larger than $T$ with high probability and thus $R^\pi(T)\le R^{\pi}_{m^\ga}$  where $R^{\pi}_{m^\ga}$ is the regret accumulated for $m^\ga$ episodes. For the regret analysis, we denote by $m^{\ba}_i$ the number of selections of distinct bad arms in the $i$-th episode. See Figure~\ref{fig:episode} for an illustration of the episodes in a time line.

In what follows, we provide an overview of the regret analysis by considering two separate cases depending on the value of the maximum rotting rate $\rot$; one for a large rotting case, and the other for a small rotting case, which we may interpret as a near-stationary case. 
For the analysis, we use $R_i^\ga$ to denote the regret accumulated by pulling the good arm in the $i$-th episode, and $R_{i,j}^\ba$ to denote the regret accumulated by pulling the $j$-th bad arm in the $i$-th episode.

  \textbf{Case of large rotting $\rot=\omega(1/T^{3/2})$}: We first show that by setting $m^\ga=\lceil2T\rot^{2/3}\rceil$, we have $R^\pi(T)\le R_{m^\ga}^\pi$. If the policy selects a good arm $a$, where $\mu_1(a)\ge 1- \delta/2$, then it must pull the arm at least $\delta/(2\rot)$ times, with high probability, to decrease the mean reward below the threshold $1-\delta$. Then the total number of time steps for $m^\ga$ episodes is at least $(\delta/(2\rot))m^\ga=(1/(2\rot^{2/3}))\lceil2T\rot^{2/3}\rceil\ge T$. This implies that $R^\pi(T)\le R_{m^\ga}^\pi$. We next provide a bound for $\mathbb{E}[R^\pi(T)]$ using $\mathbb{E}[R^\pi_{m^\ga}]$. For bounding $\mathbb{E}[R^{\pi}_{m^\ga}]$, 
  we can show that for any $i\in[m^\ga]$ and $j\in[m^\ba_i]$, we have
\begin{align}
    \mathbb{E}[R_i^\ga]
    =\tilde{O}\left(\frac{1}{\rot^{1/3}}\right)  \text{ and }
    \mathbb{E}[R_{i,j}^\ba]= \tilde{O}\left(1\right).\label{eq:R_good_bad_bd}
\end{align}
 Observe that $m^\ba_1,\ldots, m^\ba_{m^\ga}$ are i.i.d. random variables with geometric distribution with parameter $\delta/2$. Therefore, for any non-negative integer $k$, we have $\mathbb{P}(m^\ba_i=k)=(1-\delta/2)^k\delta/2$ and $\mathbb{E}[m_i^\ba]=(2/\delta)-1$ for all $i\in [m^\ga]$. We have set $\delta=\max\{\rot^{1/3},1/\sqrt{T}\}$.
  Then when $\rot=\omega(1/T^{3/2})$, with $m^\ga=\lceil 2T\rot^{2/3}\rceil$ and from \eqref{eq:R_good_bad_bd}, $\mathbb{E}[m_i^\ba]=2/\delta-1$, and $\delta=\rot^{1/3}$, we have
\begin{align}
 &\mathbb{E}[R^\pi(T)]=O(\mathbb{E}[R^\pi_{m^\ga}])\cr &=O\left(\mathbb{E}\left[\sum_{i\in[m^\ga]}\left(R^\ga_i+\sum_{j\in[m^\ba_i]}R^\ba_{i,j}\right)\right]\right)\cr
&= \tilde{O}\left(\left(\frac{1}{\rot^{1/3}}+\mathbb{E}[m_i^\ba]\right)m^\ga\right)=\tilde{O}\left(\rot^{1/3}T\right).
 \label{eq:regret_bd_large_ps}
\end{align}
 
 \textbf{Case of small rotting $\rot=O( 1/T^{3/2})$}: By setting $m^\ga=C$ for some large constant $C>0$, we can show that $R^\pi(T)\le R^\pi_{m^\ga}$. This is because if the policy selects a good arm $a$, then it must pull the arm for at least order $T$ times with high probability. This is because the small rotting case is a near-stationary setting so that the policy can pull a good arm for a large amount of time steps. For bounding $\mathbb{E}[R^{\pi}_{m^\ga}]$, we can show that for any $i\in[m^\ga]$ and $j\in[m^\ba_i]$, 
\begin{align}
    \mathbb{E}[R_i^\ga]
    =O(\sqrt{T})
\text{ and }    \mathbb{E}[R_{i,j}^\ba]=\tilde{O}\left(1\right).\label{eq:R_good_bad_bd_small_e}
\end{align}
  
Then when $\rot=O( 1/T^{3/2})$, with $m^\ga=C$ and from \eqref{eq:R_good_bad_bd_small_e}, $\mathbb{E}[m_i^\ba]=2/\delta-1$, and $\delta=\Theta(1/\sqrt{T})$, we have
\begin{align}
 &\mathbb{E}[R^\pi(T)]=O(\mathbb{E}[R^\pi_{m^\ga}]) \cr
&=\tilde{O}\left(\left(\sqrt{T}+\mathbb{E}[m_i^\ba]\right)m^\ga\right)=\tilde{O}(\sqrt{T}).
\label{eq:regret_bd_small_ps}
\end{align}
We note that in the small rotting case, the policy achieves $\tilde{O}(\sqrt{T})$ which matches a near-optimal bound for the stationary setting \cite{Wang}.

 \textbf{Putting the pieces together}: From \eqref{eq:regret_bd_large_ps} and \eqref{eq:regret_bd_small_ps},
by taking $\rot=o(1)$ and $\delta=\max\{\rot^{1/3},1/\sqrt{T}\}$, it follows
$$
\mathbf{E}[R^\pi(T)]=\tilde{O}(\max\{\rot^{1/3}T,\sqrt{T}\}).
$$

\end{proof}
\subsection{An algorithm not knowing maximum rotting rate}

\begin{algorithm}[t]
\caption{Adaptive UCB-Threshold Policy (AUCB-TP)}
\begin{algorithmic}
\STATE Given: $T,H,\mathcal{B}, \mathcal{A},\alpha$ 
\STATE Initialize: $\mathcal{A}^\prime\leftarrow\mathcal{A},w(\beta)\leftarrow 1$ for $ \beta\in \mathcal{B}$ 
\FOR{$i=1,2,\dots,\lceil T/H\rceil$}
\STATE Select an arm $a\in\mathcal{A}^\prime$
\STATE Pull arm $a$ and get reward $r_{(i-1)H+1}$
\STATE $p(\beta)\leftarrow(1-\alpha)\frac{w(\beta)}{\sum_{k\in \mathcal{B}}w(k)}+\alpha\frac{1}{B}$ for $\beta\in \mathcal{B}$
\STATE Select $\tilde{\beta}\leftarrow \beta$ with probability $p(\beta)$ for $\beta\in \mathcal{B}$ 
\STATE $\delta\leftarrow \tilde{\beta}^{1/3}$
\FOR{$t=(i-1)H+2,\dots,i\cdot H\wedge T$}
\IF{$\UCB_{i,t}(a,\tilde{\beta})\ge 1-\delta$} 
\STATE Pull arm $a$ and get reward $r_t$
\ELSE
\STATE$\mathcal{A}^\prime\leftarrow \mathcal{A}^\prime/\{a\}$
\STATE Select an arm $a\in\mathcal{A}^\prime$
\STATE Pull arm $a$ and get reward $r_t$
\ENDIF 
\ENDFOR
\STATE $w(\tilde{\beta})$
\STATE $\quad \leftarrow w(\tilde{\beta})\exp\left(\frac{\alpha}{Bp(\tilde{\beta})}\left(\frac{1}{2}+\frac{\sum_{t=(i-1)H}^{i\cdot H\wedge T}r_t}{2CH\log T+4\sqrt{H\log T}}\right)\right)$
\ENDFOR
\end{algorithmic}
\label{alg:Alg2}
\end{algorithm}

In this section we present an algorithm which does not require information about the maximum rotting rate $\rot$ defined in Algorithm~\ref{alg:Alg2}, and provide a regret upper bound for this algorithm. The algorithm adopts the strategy of hierarchical bandit algorithms similar to BOB (bandit-over-bandit)~\cite{cheung2019learning}. It consists of a master algorithm and several base algorithms, where EXP3 \cite{auer} is used for the master algorithm whose goal is to find a near-optimal base algorithm, and for each base algorithm, a UCB index policy similar to that in Algorithm~\ref{alg:Alg1} is used with a candidate rotting rate and an adaptive threshold to decide whether to continue pulling currently selected arm. In Algorithm~\ref{alg:Alg2}, the time horizon of $T$ time steps is partitioned into blocks of $H$ time steps. Before starting each block, the master algorithm selects a rotting estimator $\tilde{\beta}$ of the unknown maximum rotting rate $\rot$ from a set of candidate values denoted by $\mathcal{B}$. Then it runs a base algorithm over $H$ time steps which decides whether to continue pulling the selected arm based on a UCB index and a threshold tuned using the selected $\tilde{\beta}$.
By utilizing the obtained rewards over the block as a feedback for the decision of the master algorithm, it updates the master to find a near-optimal base and repeats the procedure described above until time horizon $T$. 

We note that the term $1/2+\sum_{t=(i-1)H}^{i\cdot H\wedge T}r_t/(2CH\log T+4\sqrt{H\log T})$ for some large enough constant $C>0$ in updating $w(\tilde{\beta})$ in Algorithm~\ref{alg:Alg2} is for re-scaling and translating rewards, which makes the rewards lie in $[0,1]$ with high probability. Also by optimizing the block size $H$, we can control regrets induced from the master and a base. By increasing $H$, the regret induced from the master increases and the regret induced from a base decreases. Those facts are shown later in the poof of Theorem~\ref{thm:R_upper_bd_no_e}. 


In what follows, we define the inputs $\mathcal{B}$ and $\alpha$, and the upper confidence bound index $\mathrm{UCB}_{i,t}(a,\beta)$ for $\beta\in\mathcal{B}$. 
$\mathcal{B}$ contains candidate values of $\beta$ to optimize $\mathrm{UCB}_{i,t}(a,\beta)$ and the threshold parameter $\delta$. We find that the optimal base parameter $\beta^\dagger\in\mathcal{B}$ is when $\beta^\dagger=\max\{1/H^{3/2},\rot\}$ including a clipped domain for the optimal threshold value $\delta$ as in Theorem~\ref{thm:R_upper_bd_e}. This implies that the optimized $\beta^\dagger\ge 1/H^{3/2}=2^{-3/2\log_2 H}$. Also from $\rot=o(1)$, $\beta^\dagger \leq 1/8$.
Therefore, we set \[\mathcal{B}=\{2^{-3},2^{-4},\dots,2^{-\lceil(3/2)\log_2H\rceil} \},\] in which the cardinality of the set is restricted by $O(\log H)$ which does not hurt the regret from EXP3 up to a logarithmic factor. Let $B=|\mathcal{B}|$. Then we set $\alpha=\min\{1,\sqrt{B\log B/((e-1)\lceil T/H\rceil)}\},$ which is used to guarantee a least selection probability for each base.
Let $n_{i,t}(a)$ be the number of times that arm $a\in\mathcal{A}$ is pulled by the algorithm from time step $(i-1)H+1$ before time step $t$.
Let $\tilde{\mu}_{i,t}^o(a,\beta)$ be defined as
$$
\tilde{\mu}_{i,t}^o(a,\beta)\coloneqq\frac{\sum_{s=(i-1)H+1}^{t-1}(r_s+\beta n_{i,s}(a))\mathbbm{1}(a_s=a)}{n_{i,t}(a)}
$$ 
and the $\UCB_{i,t}(a,\beta)$ index be defined as 
$$
\UCB_{i,t}(a,\beta)\coloneqq \tilde{\mu}_{i,t}^o(a,\beta)-\beta n_{i,t}(a)+\sqrt{10\log(H)/n_{i,t}(a)}.
$$
We provide a worst-case regret upper bound for Algorithm~\ref{alg:Alg2} in the following theorem.
\begin{theorem}
 With $\rot=o(1)$, for the policy $\pi$ defined by Algorithm~\ref{alg:Alg2} with $H=\lceil T^{1/2}\rceil$, the regret satisfies
 \begin{align*}
     \mathbb{E}[R^\pi(T)]=\tilde{O}(\max\{\rot^{1/3}T,T^{3/4}\}).
 \end{align*}
 \label{thm:R_upper_bd_no_e} 
 \end{theorem}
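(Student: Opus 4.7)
The plan is to bound $\mathbb{E}[R^\pi(T)]$ via the standard bandit-over-bandit decomposition, splitting the regret into (i) an EXP3 master-level term measuring the loss of not always playing the best candidate $\beta^\dagger\in\mathcal{B}$, and (ii) a base-level term equal to the regret of running the UCB-threshold base procedure with the fixed parameter $\beta^\dagger$ in every block. I would take $\beta^\dagger$ to be the smallest element of $\mathcal{B}$ that is at least $\max\{H^{-3/2},\rot\}$. Because $\mathcal{B}$ is a dyadic grid covering $[\Theta(H^{-3/2}),1/8]$ and $\rot=o(1)\le 1/8$ for $T$ large, such a $\beta^\dagger$ exists and satisfies $\beta^\dagger\le 2\max\{H^{-3/2},\rot\}$, hence $(\beta^\dagger)^{1/3}\le 2^{1/3}\max\{H^{-1/2},\rot^{1/3}\}$.

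For the base-level term I would observe that within each block the per-block statistics $\tilde{\mu}_{i,t}^o(a,\beta^\dagger)$ and $n_{i,t}(a)$ reset, and because $\beta^\dagger\ge\rot$ the index $\UCB_{i,t}(a,\beta^\dagger)$ remains a valid upper confidence bound on the mean reward, with threshold $\delta=(\beta^\dagger)^{1/3}\ge\max\{1/\sqrt{H},\rot^{1/3}\}$. This is exactly the parameter choice of Theorem~\ref{thm:R_upper_bd_e} applied with block length $H$ in place of the horizon and maximum rotting rate $\beta^\dagger$ in place of $\rot$. Because $\mathcal{A}$ is infinite, each freshly selected arm within a block can be treated as an untouched arm with i.i.d.\ uniform initial reward, so Theorem~\ref{thm:R_upper_bd_e} applies block-by-block and yields expected per-block regret $\tilde{O}(\max\{(\beta^\dagger)^{1/3}H,\sqrt{H}\})$. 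Summing over the $\lceil T/H\rceil$ blocks gives a total base-level contribution of $\tilde{O}(\max\{(\beta^\dagger)^{1/3}T,\ T/\sqrt{H}\})$.

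For the master-level term, the affine rescaling $1/2+(\sum_{t\in\mathrm{block}} r_t)/(2CH\log T+4\sqrt{H\log T})$ ensures, on a high-probability event obtained by sub-Gaussian concentration of the noise across $H$ consecutive time steps combined with the deterministic bound $\mu_t(a)\in[-\rot H,1]$, that the feedback passed to EXP3 lies in $[0,1]$. The standard EXP3 bound with $B=|\mathcal{B}|=O(\log H)$ actions over $N=\lceil T/H\rceil$ rounds gives $O(\sqrt{BN\log B})$ regret in normalized units against the best fixed base $\beta\in\mathcal{B}$, which becomes $\tilde{O}(\sqrt{HT})$ in the original reward scale after multiplying by the scaling factor $\tilde{O}(H)$; the rescaling-failure event contributes only lower-order regret since the per-step loss is $O(1+\rot T)$.

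Combining both parts and tuning $H=\lceil T^{1/2}\rceil$ gives
\[
\mathbb{E}[R^\pi(T)]=\tilde{O}\bigl(\max\{(\beta^\dagger)^{1/3}T,\ T/\sqrt{H},\ \sqrt{HT}\}\bigr)=\tilde{O}\bigl(\max\{\rot^{1/3}T,\ T^{3/4}\}\bigr),
\]
since with $H=T^{1/2}$ both $T/\sqrt{H}$ and $\sqrt{HT}$ equal $T^{3/4}$ and $(\beta^\dagger)^{1/3}T\le\tilde{O}(\max\{\rot^{1/3}T,\ T/H^{1/2}\})=\tilde{O}(\max\{\rot^{1/3}T,\ T^{3/4}\})$; the choice of $H$ is pinned down precisely by balancing the master's $\sqrt{HT}$ term against the base's $T/\sqrt{H}$ term. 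The main obstacle I expect is making the per-block reduction to Theorem~\ref{thm:R_upper_bd_e} fully rigorous despite the pool $\mathcal{A}'$ being shared across blocks: one must both justify treating newly selected arms within a block as fresh arms with i.i.d.\ uniform initial rewards, and pass the per-block expected-regret bound through EXP3's adversarial regret guarantee after the affine rescaling, including accounting for the low-probability event on which the rescaling leaves $[0,1]$.
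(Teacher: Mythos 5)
Your proposal is correct and follows essentially the same route as the paper: the same bandit-over-bandit decomposition into a base-level term (bounded by applying the Theorem~\ref{thm:R_upper_bd_e} analysis block-by-block with $\beta^\dagger$ chosen as the smallest element of $\mathcal{B}$ exceeding $\max\{\rot,1/H^{3/2}\}$) and a master-level EXP3 term (bounded after the affine rescaling of block rewards via concentration), balanced by $H=\lceil T^{1/2}\rceil$. The technical concern you flag about treating newly selected arms as fresh i.i.d.\ draws is exactly the issue the paper handles in Appendix~\ref{sec:R_upper_bd_no_e_proof} by redoing the episodic argument of Theorem~\ref{thm:R_upper_bd_e} within a single block.
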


 \begin{proof}[Proof sketch]
 Here we present a proof sketch of the theorem with a full version of the proof given in Appendix~\ref{sec:R_upper_bd_no_e_proof}.
 
The policy $\pi$ consists of two strategies: EXP3 for the master and UCB-Threshold Policy (Algorithm~\ref{alg:Alg1}) for bases. We can decompose the regret into two parts: regret incurred by playing a base with $\beta\in\mathcal{B}$ over each block of $H$ time steps and regret incurred due to the master trying to find a near-optimal base parameter in $\mathcal{B}$. In what follows, we define the regret decomposition formally.
 Let $\pi_i(\beta)$ for $\beta \in \mathcal{B}$ denote the base policy with $\beta$ for time steps between $(i-1)H+1$ and $i\cdot H\wedge T$. Denote by $a_t^{\pi_i(\beta)}$ the pulled arm at time step $t$ by policy $\pi_i(\beta).$ Then, for $\beta^\dagger \in \mathcal{B}$, which is set later for a near-optimal base, we have
\begin{align}
\mathbb{E}[R^\pi(T)]&=\mathbb{E}\left[\sum_{t=1}^T 1-\sum_{i=1}^{\lceil T/H\rceil}\sum_{t=(i-1)H+1}^{i\cdot H\wedge T}\mu_t(a_t^{\pi})\right] \cr &= \mathbb{E}[R_1^\pi(T)]+\mathbb{E}[R_2^\pi(T)],\label{eq:R_decom}
\end{align}
where 
$$
R_1^\pi(T) = \sum_{t=1}^T 1-\sum_{i=1}^{\lceil T/H\rceil}\sum_{t=(i-1)H+1}^{i\cdot H\wedge T}\mu_t(a_t^{\pi_i(\beta^\dagger)}),
$$
$$
R_2^\pi(T) = \sum_{i=1}^{\lceil T/H\rceil}\sum_{t=(i-1)H+1}^{i\cdot H\wedge T}\left(\mu_t(a_t^{\pi_i(\beta^\dagger)})-\mu_t(a_t^{\pi})\right).
$$
Note that $R_1^\pi(T)$ accounts for the regret caused by the near-optimal base algorithm $\pi_i(\beta^\dagger)$ against the optimal mean reward and $R_2^\pi(T)$ accounts for the regret caused by the master algorithm by selecting a base with $\beta\in\mathcal{B}$ at every block against the base with $\beta^\dagger$ as illustrated in Figure~\ref{fig:regret_decom}. We set $\beta^\dagger$ to be the smallest value in $\mathcal{B}$ which is larger than $\max\{\rot,1/H^{3/2}\}.$ Then the base has the threshold parameter ${\beta^{\dagger}}^{1/3}$ of order $\max\{\rot^{1/3},1/\sqrt{H}\}$ which coincides with the optimal threshold parameter of Algorithm~\ref{alg:Alg1} by replacing $T$ with $H$.
We note that the policy $\pi$ does not require knowing $\beta^\dagger$ and it is defined only for the proof. 

\begin{figure}[t]
\includegraphics[width=1\linewidth]{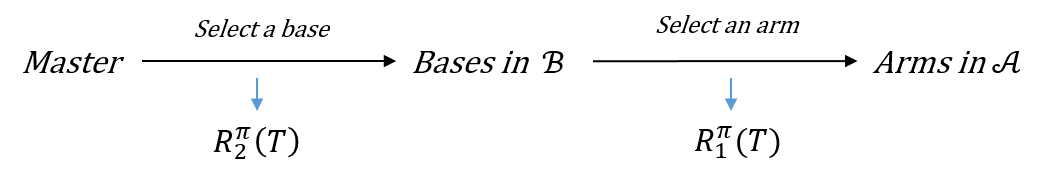}
\caption{Regret decomposition for Algorithm~\ref{alg:Alg2}.}
\label{fig:regret_decom}
\end{figure}

In what follows, we provide upper bounds for each regret component. We first provide an upper bound for $\mathbb{E}[R_1^\pi(T)]$ by following the proof steps in Theorem~\ref{thm:R_upper_bd_e}. We can easily find that regret of the base with $\beta^\dagger$ for each block of size $H$ that has the same regret bound as in Theorem~\ref{thm:R_upper_bd_e} by replacing $T$ with $H$ amounting to $\tilde{O}(\max\{\rot^{1/3}H,\sqrt{H}\})$. Then by adding the regret for $\lceil T/H\rceil$ number of blocks, we have
\begin{align}
\mathbb{E}[R_1^\pi(T)]&=\tilde{O}((T/H)\max\{\rot^{1/3}H,\sqrt{H}\})\cr &=\tilde{O}(\max\{\rot^{1/3}T,T/\sqrt{H}\}).\label{eq:R_1_bd}
\end{align}
Then we provide an upper bound for $\mathbb{E}[R_2^\pi(T)]$ using a regret bound for EXP3 in \cite{auer}. The EXP3 in policy $\pi$ selects a base in $\mathcal{B}$ before starting a block and gets feedback at the end of the block and repeats this over $\lceil T/H \rceil$ blocks. Therefore,
EXP3 in $\pi$ can be thought to be run for $\lceil T/H \rceil$ decision rounds and the number of decision options for each round is $B$. Let $Q$ be an upper bound for the absolute sum of rewards for any block with length $H$ with high probability. Then from Corollary 3.2 in \cite{auer}, we can show that
\begin{align}
\mathbb{E}[R_2^\pi(T)]=\tilde{O}(Q\sqrt{B(T/H)}).\label{eq:R_2_bd_1}
\end{align}
By considering that mean rewards may become negative because of rotting and using a Chernoff's bound, we show that with high probability
\begin{align}
    \left|\sum_{t=(i-1)H+1}^{i\cdot H\wedge T}r_t\right|\le CH\log T + 2\sqrt{H\log T}.\label{eq:Q_bd}
\end{align}
Then with $B=O(\log T)$, from \eqref{eq:R_2_bd_1} and \eqref{eq:Q_bd}, we have
\begin{align}  
\mathbb{E}[R_2^\pi(T)] &= \tilde{O}\left(H\log( T)\sqrt{B(T/H)}\right)\cr&=\tilde{O}\left(\sqrt{HT}\right).\label{eq:R_2_bd_2}
\end{align}

Finally, from \eqref{eq:R_decom}, \eqref{eq:R_1_bd}, and \eqref{eq:R_2_bd_2}, with $H=\lceil T^{1/2}\rceil$, we have
\begin{align*}
\mathbb{E}[R^\pi(T)]
&=\tilde{O}(\max\{\rot^{1/3}T,T/\sqrt{H}\}+\sqrt{HT})\cr
&=\tilde{O}(\max\{\rot^{1/3}T,T^{3/4}\}). 
\end{align*}
This concludes the proof.
\end{proof}


The regret bound for Algorithm~\ref{alg:Alg2} in Theorem~\ref{thm:R_upper_bd_no_e} is larger than or equal to that for Algorithm~\ref{alg:Alg1} in Theorem~\ref{thm:R_upper_bd_e}. This is because the master in Algorithm~\ref{alg:Alg2} needs to learn the unknown maximum rotting rate to find a near-optimal base algorithm, which produces extra regret. In the following remarks, we discuss the region of the maximum rotting rate $\rot$ for which Algorithm~\ref{alg:Alg2} achieves the near-optimal regret and discuss the computation and memory efficiency of our algorithms.

\begin{remark}
When $\rot=\Omega(1/T^{3/4})$, Algorithm~\ref{alg:Alg2} achieves the optimal regret bound $\tilde{O}(\rot^{1/3}T)$ up to a poly-logarithmic factor. This is because when $\rot=\Omega(1/T^{3/4})$, the additional regret from learning the maximum rotting rate is negligible compared with the regret from the rotting of rewards. It is an open problem to achieve the optimal regret bound for any value of $\rot$, without knowing the value of this parameter.

\end{remark}

\section{Numerical experiments}
\label{sec:num}

\begin{figure}[t]
\centering
\begin{subfigure}[b]{.45\textwidth}
\includegraphics[width=\linewidth]{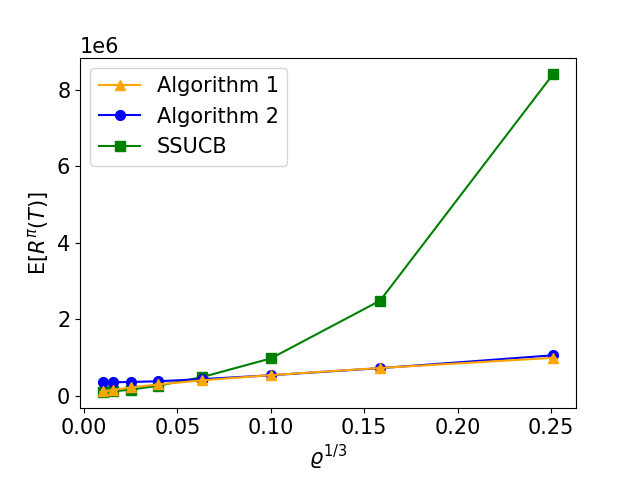}\caption{}\end{subfigure}
\begin{subfigure}[b]{.45\textwidth}\includegraphics[width=\linewidth]{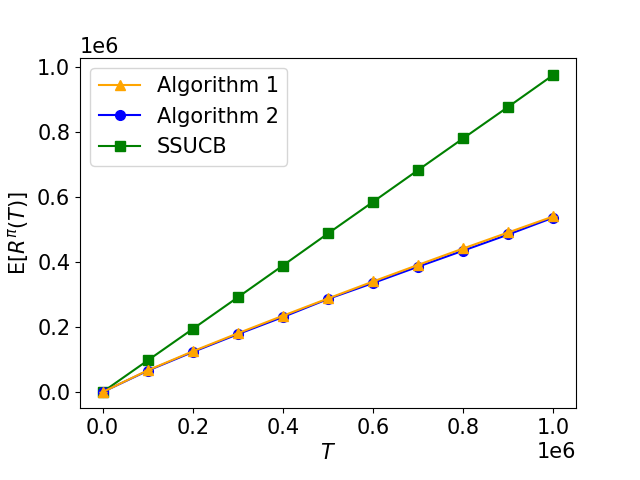}\caption{}\end{subfigure}
\begin{subfigure}[b]{.45\textwidth}\includegraphics[width=\linewidth]{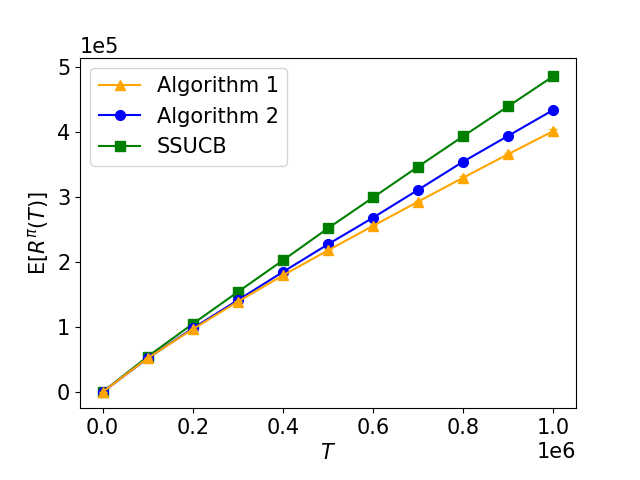}\caption{}\end{subfigure}
\begin{subfigure}[b]{.45\textwidth}
\includegraphics[width=\linewidth]{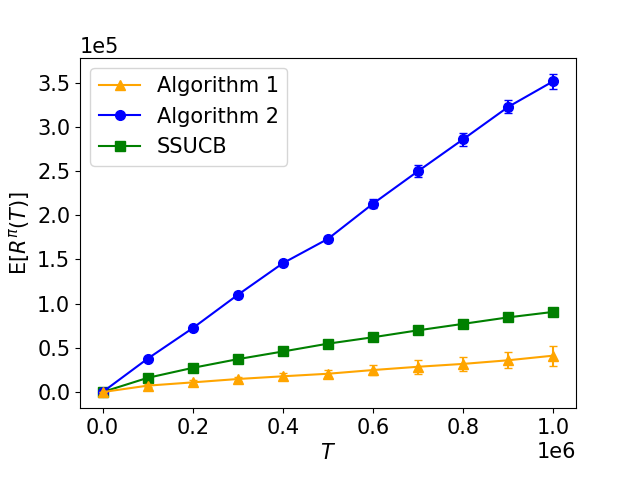}\caption{}\end{subfigure}

\caption{Performance of Algorithms~\ref{alg:Alg1} and \ref{alg:Alg2}, and SSUCB: (a) regret versus $\rot^{1/3}$ for fixed time horizon $T$, and (b,c,d) regret versus time horizon $T$ with rotting rates $\rot=1/T^{1/2}$ (b), $1/T^{3/5}$ (c), and $1/T^{3/2}$ (d).  
}
\label{fig:1}
\end{figure}

In this section we present results of our numerical experiments using synthetic data in the rotting setting with infinitely many arms\footnote{Our code is available at \url{https://github.com/junghunkim7786/rotting_infinite_armed_bandits}}.
To the best of our knowledge, there are no previously-proposed algorithms for the setting of rotting with infinitely many arms. We compare the performance of Algorithms~\ref{alg:Alg1} and \ref{alg:Alg2} with SSUCB \cite{Bayati}, which was proposed for infinitely many arms with stationary rewards, and is known to have a near-optimal regret for stationary sub-Gaussian reward distributions. 
From the theoretical results in Section~\ref{sec:lb} and Section~\ref{sec:algo}, we expect that Algorithm~\ref{alg:Alg1} and SSUCB would have similar performance when the maximum rotting rate is sufficiently small, which may be regarded as a nearly stationary case, and that both Algorithm~\ref{alg:Alg1} and Algorithm~\ref{alg:Alg2} would outperform SSUCB when the rotting rate is sufficiently large. We expect that Algorithm~\ref{alg:Alg2} would not be competitive in the nearly stationary case because of the extra regret from the rotting rate estimation. It requires the rotting rate to be sufficiently large to have the near-optimal regret bound. We will confirm these insights by our numerical results in what follows.   

 In our experiments, we consider the case of identical rotting with $\rot_t=\rot$ for all $t\in[T]$. We generate initial mean rewards of arms by sampling from uniform distribution on $[0,1]$. In each time step, stochastic reward from pulling an arm has a Gaussian noise with mean zero and variance $1$. We repeat each experiment $10$ times and compute confidence intervals for confidence probability 0.95.
 
 We first investigate the performance of algorithms for varied rotting rate $\rot$ and fixed time horizon $T$. We ran experiments for rotting rate $\rot$ set to  $1/T,1/T^{0.9},1/T^{0.8},\ldots,1/T^{0.3},$ and measured the expected regret for the time horizon $T=10^6$. 
 In Figure~\ref{fig:1} (a), we can confirm that our algorithms show more robust performance than SSUCB for various rotting rates with linearly increasing regret with respect to $\rot^{1/3}$ for large rotting, which matches Theorems~\ref{thm:R_upper_bd_e} and \ref{thm:R_upper_bd_no_e}. We observe that for large rotting cases, our algorithms have similar performance and both outperform SSUCB. For sufficiently small rotting, we can observe that all the three algorithms have comparable performance while  Algorithm~\ref{alg:Alg2} performs slightly worse. These results conform to the insights derived from our theoretical analysis.
 
We next investigate the performance of algorithms versus time horizon $T$ and rotting rate $\rot$ depending on $T$. We ran experiments for time horizon $T$ taking values $1, 1\times 10^5, 2\times 10^5, \ldots, 10^6$, and measured expected regret of each case. We set the rotting rate $\rot$ to $ 1/T^{1/2}$, $ 1/T^{3/5}$ and $1/T^{3/2}$. Note that the case $1/T^{3/2}$ may be considered as a nearly-stationary case, because in this case the regret lower bound is $\Omega(\sqrt{T})$ from Theorem~\ref{thm:lower_bd_e}. In Figure~\ref{fig:1} (b) and (c), corresponding to large rotting rates, we observe that Algorithms \ref{alg:Alg1} and \ref{alg:Alg2} have similar performance and outperform SSUCB. The gaps between regrets of our algorithms and SSUCB become smaller by decreasing the rotting rate from $1/T^{1/2}$ to $1/T^{3/5}$. This is because SSUCB is designed for the case of stationary rewards and has a near-optimal regret in the stationary case. In the near-stationary case, in Figure~\ref{fig:1} (d), we observe that Algorithm~\ref{alg:Alg1} has best performance and, as expected, Algorithm~\ref{alg:Alg2} has worst performance.

\section{Conclusion}

\label{sec:conc}

In this paper we studied the infinitely many-armed bandit problem with rested rotting rewards. We provided a regret lower bound and proposed an algorithm which achieves a near-optimal regret, when the the maximum rotting rate is known to the algorithm. We also proposed an algorithm which does not require knowledge of the rotting rate and we showed that it achieves a near-optimal regret for any large enough rotting rate. 
In future work, it may be of interest to relax the assumption of uniform distribution for initial mean rewards.




\bibliography{mybib}
\bibliographystyle{ieeetr}

\newpage
\appendix
\onecolumn
\section{Appendix}




\subsection{Preliminaries}

Here we state some known concentration inequalities that we use in our proofs. 

\begin{lemma}[Theorem 6.2.35 in \cite{Alex}] Let $X_1,\dots,X_n$ be identical independent Bernoulli random variables. Then, for $0<\nu<1$, we have 
\begin{align*}
    \mathbb{P}\left(\sum_{i=1}^{n}X_i\ge(1+\nu)\mathbb{E}\left[\sum_{i=1}^{n}X_i\right]\right)\le\exp\left(-\frac{\nu^2\mathbb{E}[\sum_{i=1}^{n}X_i]}{3} \right).
\end{align*} \label{lem:chernoff_bino}
\end{lemma}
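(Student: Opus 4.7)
The plan is to prove this multiplicative Chernoff bound by the standard exponential moment (Cram\'er--Chernoff) method. Let $p\in[0,1]$ denote the common Bernoulli parameter, $S_n=\sum_{i=1}^n X_i$, and $\mu=\mathbb{E}[S_n]=np$. For any $t>0$, Markov's inequality applied to the nonnegative random variable $e^{tS_n}$ gives
\[
\mathbb{P}\bigl(S_n\ge (1+\nu)\mu\bigr)
=\mathbb{P}\bigl(e^{tS_n}\ge e^{t(1+\nu)\mu}\bigr)
\le e^{-t(1+\nu)\mu}\,\mathbb{E}[e^{tS_n}].
\]

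Next I would bound the moment generating function. By independence, $\mathbb{E}[e^{tS_n}]=\prod_{i=1}^{n}\mathbb{E}[e^{tX_i}]=(1-p+pe^t)^n$. Using the elementary inequality $1+x\le e^x$ with $x=p(e^t-1)$ yields
\[
\mathbb{E}[e^{tS_n}]\le \exp\!\bigl(np(e^t-1)\bigr)=\exp\!\bigl(\mu(e^t-1)\bigr).
\]
Substituting back gives, for every $t>0$,
\[
\mathbb{P}\bigl(S_n\ge (1+\nu)\mu\bigr)\le \exp\!\bigl(\mu(e^t-1)-t(1+\nu)\mu\bigr).
\]
I would then optimize over $t$ by choosing $t=\log(1+\nu)>0$, which produces
\[
\mathbb{P}\bigl(S_n\ge (1+\nu)\mu\bigr)\le \exp\!\bigl(-\mu\bigl[(1+\nu)\log(1+\nu)-\nu\bigr]\bigr).
\]

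The remaining and really the only nontrivial step is to verify the scalar inequality
\[
(1+\nu)\log(1+\nu)-\nu\ \ge\ \frac{\nu^2}{3}\qquad\text{for all }0<\nu<1,
\]
which then immediately yields the claimed bound $\exp(-\nu^2\mu/3)$. To prove this, define $f(\nu)=(1+\nu)\log(1+\nu)-\nu-\nu^2/3$, compute $f(0)=0$, $f'(\nu)=\log(1+\nu)-\tfrac{2}{3}\nu$, $f'(0)=0$, and $f''(\nu)=\tfrac{1}{1+\nu}-\tfrac{2}{3}=\tfrac{1-2\nu/3-\ldots}{1+\nu}$; a direct check shows $f''(\nu)\ge 0$ on $(0,1)$ (since $1/(1+\nu)\ge 2/3$ iff $\nu\le 1/2$, and for $\nu\in[1/2,1)$ one can conclude by monotonicity of $f'$ together with the value $f'(1)=\log 2-2/3>0$). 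Thus $f'\ge 0$ on $[0,1)$, hence $f\ge 0$ on $[0,1)$, as required. This scalar inequality is the main (and only real) obstacle; everything else is routine Chernoff manipulation.
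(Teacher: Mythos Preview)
The paper does not give its own proof of this lemma; it is stated in the preliminaries as a quoted result (Theorem~6.2.35 in the cited reference), so there is nothing to compare against. Your argument is the standard Cram\'er--Chernoff derivation and is correct: the MGF bound and the optimization at $t=\log(1+\nu)$ are routine, and your verification of the scalar inequality $(1+\nu)\log(1+\nu)-\nu\ge \nu^2/3$ on $(0,1)$ via the sign analysis of $f'(\nu)=\log(1+\nu)-\tfrac{2}{3}\nu$ (increasing on $[0,1/2]$ from $0$, then decreasing on $[1/2,1]$ but still positive at $\nu=1$ since $\log 2>2/3$) is valid.
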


\begin{lemma}[Corollary 1.7 in \cite{rigollet2015high}]
Let $X_1,\dots,X_n$ be independent random variables with $\sigma$-sub-Gaussian distributions. Then, for any $a = (a_1, \ldots, a_n)^\top \in \mathbb{R}^n$ and $t\geq 0$,  we have
$$\mathbb{P}\left(\sum_{i=1}^na_iX_i>t\right)\le \exp\left(-\frac{t^2}{2\sigma^2\|a\|_2^2}\right) \text{ and } 
\mathbb{P}\left(\sum_{i=1}^na_iX_i<-t\right)\le \exp\left(-\frac{t^2}{2\sigma^2\|a\|_2^2}\right).
$$
\label{lem:chernoff_sub-gau}
\end{lemma}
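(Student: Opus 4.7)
The plan is to prove this concentration inequality via the standard Chernoff-bound/moment-generating-function argument, exploiting that a linear combination of independent sub-Gaussian variables is again sub-Gaussian with an explicit parameter.

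First I would recall the working definition of $\sigma$-sub-Gaussian: a (mean-zero) random variable $X$ satisfies $\mathbb{E}[e^{\lambda X}] \le \exp(\lambda^2 \sigma^2/2)$ for every $\lambda \in \mathbb{R}$. I would then compute the moment-generating function of the weighted sum $S = \sum_{i=1}^n a_i X_i$. Using independence and applying the sub-Gaussian bound coordinate-wise with parameter $\lambda a_i$ in place of $\lambda$, we get
\begin{equation*}
\mathbb{E}[e^{\lambda S}] = \prod_{i=1}^n \mathbb{E}[e^{\lambda a_i X_i}] \le \prod_{i=1}^n \exp\!\left(\tfrac{1}{2}\lambda^2 a_i^2 \sigma^2\right) = \exp\!\left(\tfrac{1}{2}\lambda^2 \sigma^2 \|a\|_2^2\right),
\end{equation*}
so $S$ is $\sigma\|a\|_2$-sub-Gaussian.

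Next I would apply the Chernoff method to the upper tail. For any $\lambda > 0$, by Markov's inequality,
\begin{equation*}
\mathbb{P}(S > t) = \mathbb{P}(e^{\lambda S} > e^{\lambda t}) \le e^{-\lambda t}\, \mathbb{E}[e^{\lambda S}] \le \exp\!\left(-\lambda t + \tfrac{1}{2}\lambda^2 \sigma^2 \|a\|_2^2\right).
\end{equation*}
I would then optimize over $\lambda$ by choosing $\lambda^\star = t/(\sigma^2 \|a\|_2^2)$ (the zero of the derivative of the quadratic in $\lambda$), which yields the claimed bound $\exp(-t^2/(2\sigma^2\|a\|_2^2))$. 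For the lower-tail inequality I would invoke symmetry: apply the upper-tail argument to $-S = \sum_i (-a_i) X_i$, noting that $\|-a\|_2 = \|a\|_2$, so the same bound holds with $S$ replaced by $-S$.

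The main obstacle, if any, is the boundary case $\|a\|_2 = 0$, which I would dispatch separately by noting $S \equiv 0$ and the stated bound degenerates to $\mathbb{P}(0 > t) \le 1$ for $t = 0$ and is vacuous for $t > 0$ since the right-hand side is $e^{-\infty}=0$ and the left-hand side is also $0$. Apart from that, the proof is essentially a one-line optimization after combining the product form of the MGF with Markov's inequality, so no deeper machinery is needed.
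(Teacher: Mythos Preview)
Your proof is correct and is the standard Chernoff/MGF argument for sub-Gaussian concentration. Note, however, that the paper does not actually prove this lemma---it simply cites it as Corollary~1.7 from Rigollet's lecture notes and uses it as a black box, so there is no ``paper's own proof'' to compare against; your argument is exactly the textbook derivation one would find in that reference.
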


\subsection{Proof of Theorem~\ref{thm:lower_bd_e}}\label{app:lower_bd_e}
For showing the lower bound, we will classify arms to be either bad or good depending on the value of their initial mean rewards, in a way that will be defined shortly. We will analyze the number of bad arm pulls which is the main contributor to the regret. For the analysis, we count the number of bad arm pulls until the first or $m$-th good arm is pulled over a fixed time horizon in order to preserve i.i.d. property of mean rewards in the proof. To distinguish good from bad arms, we utilize two thresholds in the proof as follows.
Let $0<\delta<c<1$ be such that $1>1-\delta> 1-c> 0$ for $\delta$, which will be set small, and some constant $c$. Then, $1-\delta$ and $1-c$ represent threshold values for distinguishing good arms and bad arms, respectively. In $\mathcal{A}$, let $\bar{a}_1,\bar{a}_2,\dots,$ be a sequence of arms. Given a policy $\pi$, without loss of generality let $\pi$ select arms following the sequence of $\bar{a}_1,\bar{a}_2,\dots,$.

\textbf{Case of small rotting:} When $\rot=O( 1/T^{3/2})$, the lower bound of order $\sqrt{T}$ for the stationary case, from Theorem 3 in \cite{Wang}, is tight enough for the non-stationary case. This is because we only need to pay the extra regret of at most of order $\sqrt{T}$ for small $\rot$. From Theorem 3 in \cite{Wang}, we have
\begin{align}
    \mathbb{E}[R^\pi(T)]=\Omega(\sqrt{T}).\label{eq:lowbd_small_e}
\end{align}
We note that even though the mean rewards are rotting in our setting, Theorem 3 in \cite{Wang} can be applied to our setting without any proof changes providing a tight regret bound for the near-stationary case. For the sake of completeness, we provide the proof of the theorem.  Let $K_1$ denote the number of ``bad" arms $a$ that satisfy $\mu_1(a)\le1-c$  before selecting the first ``good" arm, which satisfies $\mu_1(a)>1-\delta$, in the sequence of arms $\bar{a}_1,\bar{a}_2,\ldots.$  Let $\overline{\mu}$ be the initial mean reward of the best arm among the selected arms by $\pi$ over time horizon $T$. Then for some $\kappa>0$, we have 
\begin{align}
    R^\pi(T)&=R^\pi(T)\mathbbm{1}(\overline{\mu}\le 1-\delta)+R^\pi(T)\mathbbm{1}(\overline{\mu}> 1-\delta)\cr 
    &\ge T\delta \mathbbm{1}(\overline{\mu}\le 1-\delta)+K_1c\mathbbm{1}(\overline{\mu}> 1-\delta)\cr
    &\ge T\delta \mathbbm{1}(\overline{\mu}\le 1-\delta)+\kappa c\mathbbm{1}(\overline{\mu}> 1-\delta, K_1\ge \kappa).\label{eq:regret_lower_decom_small}
\end{align}
By taking expectations on the both sides in \eqref{eq:regret_lower_decom_small} and setting $\kappa=T\delta/c$, we have
\begin{align*}
    \mathbb{E}[R^\pi(T)]\ge T\delta \mathbb{P}(\overline{\mu}\le 1-\delta)+\kappa c(\mathbb{P}(\overline{\mu}>1-\delta)-\mathbb{P}(K_1<\kappa))=c\kappa \mathbb{P}(K_1\ge \kappa). 
\end{align*}
Let $\delta^\prime=\delta/(1-c+\delta)$. Then we observe that $K_1$ follows a geometric distribution with success probability $\mathbb{P}(\mu_1(a)>1-\delta)/p(\mu_1(a)\notin (1-c,1-\delta))=\delta^\prime,$ in which the success probability is the probability of selecting a good arm given that the arm is either a good or bad arm. Then by setting $\delta=1/\sqrt{T}$ with $\kappa=\sqrt{T}/c$, for some constant $C>0$ we have
\begin{align*}
    \mathbb{E}[R^\pi(T)]\ge c\kappa(1-\delta^\prime)^\kappa=\Omega\left(\sqrt{T}(1-C/\sqrt{T})^{\sqrt{T}/c} \right)=\Omega(\sqrt{T}),
\end{align*} where the last equality is obtained from $\log x\ge 1-1/x$ for all $x>0$.

\textbf{Case of large rotting:} When $\rot=\omega(1/T^{3/2})$, however, the lower bound of the stationary case is not tight enough. Here we provide the proof of the lower bound $T\rot^{1/3}$ for the case of $\rot=\omega(1/T^{3/2})$. 
Let $K_m$ denote the number of ``bad" arms $a$ that satisfy $\mu_1(a)\le1-c$  before selecting $m$-th ``good" arm, which satisfies $\mu_1(a)>1-\delta$, in the sequence of arms $\bar{a}_1,\bar{a}_2,\ldots.$ Let $N_T$ be the number of selected good arms $a$ such that $\mu_1(a)>1-\delta$ until $T$. 

We can decompose $R^\pi(T)$ into two parts as follows: 
\begin{align}
R^\pi(T)=R^\pi(T)\mathbbm{1}(N_T<m)+R^\pi(T)\mathbbm{1}(N_T\ge m).\label{eq:R_decom_lower_e}    
\end{align}

We set $m=\lceil (1/2)T\rot^{2/3}\rceil$ and $\delta=\rot^{1/3}$with $\rot=o(1)$. For getting a lower bound for the first term in \eqref{eq:R_decom_lower_e}, $R^\pi(T)\mathbbm{1}(N_T<m)$, we use the fact that the minimal regret is obtained from the situation where there are $m-1$ arms whose mean rewards are $1$. Then we can think of the optimal policy that selects the best $m-1$ arms until their mean rewards become below the threshold $1-\delta$ (step 1) and then selects the best arm at each time for the remaining time steps (step 2). The required number of pulling each arm for the best $m-1$ arms until the mean reward becomes below $1-\delta$ is upper bounded by $\delta/\rot+1$. Therefore, the regret from step 2 is $R=\Omega((T-m\delta/\rot)\delta)=\Omega(T\rot^{1/3})$ in which the optimal policy pulls arms which  mean rewards are below $1-\delta$ for the remaining time after step 1.
Therefore, we have
\begin{align}
    R^\pi(T)\mathbbm{1}(N_T<m)=\Omega(R\mathbbm{1}(N_T<m))=\Omega(T\rot^{1/3}\mathbbm{1}(N_T<m)).\label{eq:lowbd_eq1_e}
\end{align}
For getting a lower bound of the second term in \eqref{eq:R_decom_lower_e}, $R^\pi(T)\mathbbm{1}(N_T\ge m)$, we use the minimum number of selected arms $a$ that satisfy $\mu_1(a)\le 1-c.$ When $N_T\ge m$ and $K_m\ge \kappa$, the policy selects at least $\kappa$ number of distinct arms $a$ satisfying $\mu_1(a)\le 1-c$ until $T$. Therefore, we have
\begin{align}
    R^\pi(T)\mathbbm{1}(N_T\ge m)\ge c\kappa\mathbbm{1}(N_T\ge m,K_m\ge \kappa).\label{eq:lowbd_eq2_e}
\end{align}
Let $\delta^\prime=\delta/(1-c+\delta)$. By setting $\kappa=m/\delta^\prime-m-\sqrt{m}/\delta^\prime$, with $\rot=o(1)$, we have
\begin{align}
\kappa=\Theta(T\rot^{1/3}).\label{eq:kappa}
\end{align}
Then from \eqref{eq:lowbd_eq1_e}, \eqref{eq:lowbd_eq2_e}, and \eqref{eq:kappa}, we have
\begin{align}
  \mathbb{E}[R^\pi(T)]=\Omega(T\rot^{1/3}\mathbb{P}(N_T<m)+T\rot^{1/3}\mathbb{P}(N_T\ge m,K_m\ge \kappa))\ge\Omega( T\rot^{1/3}\mathbb{P}(K_m\ge \kappa)). \label{eq:lowbd_eq3_e}
\end{align}
Next we provide a lower bound for $\mathbb{P}(K_m\ge \kappa).$ Observe that $K_m$ follows a negative binomial distribution with $m$ successes and the success
probability $\mathbb{P}(\mu_1(a)>1-\delta)/\mathbb{P}(\mu_1(a)\notin(1-c,1-\delta))=\delta/(1-c+\delta)=\delta^\prime$, in which the success probability is the probability of selecting a good arm given that the arm is either a good or bad arm. In the following lemma, we provide a concentration inequality for $K_m$.
\begin{lemma}
For any $1/2+\delta^\prime/m<\alpha <1$,
\begin{align}
    \mathbb{P}(K_m\ge\alpha m(1/\delta^\prime)-m)\ge1-\exp(-(1/3)(1-1/\alpha )^2(\alpha m-\delta^\prime)).
\end{align} \label{lem:concen_geo}
\end{lemma}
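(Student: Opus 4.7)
The plan is to reduce the negative binomial tail estimate to a binomial tail estimate and then apply the Chernoff bound given in Lemma~\ref{lem:chernoff_bino}. First I would exploit the standard coupling between $K_m$ and i.i.d.\ Bernoulli trials. Let $X_1, X_2, \ldots$ be i.i.d.\ Bernoulli random variables with success probability $\delta^\prime$, and set $S_n = \sum_{i=1}^n X_i$. Since $K_m + m$ is the number of trials needed to obtain the $m$-th success in this sequence, the two events $\{K_m + m \ge N\}$ and $\{S_{N-1} < m\}$ coincide for every positive integer $N$. Taking $N = \lceil \alpha m/\delta^\prime \rceil$ and $n = N-1$, and using that $K_m + m$ is integer-valued, the event $\{K_m \ge \alpha m/\delta^\prime - m\}$ is equivalent to $\{S_n < m\}$, so it suffices to upper bound $\mathbb{P}(S_n \ge m)$.

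The next step is to set up Lemma~\ref{lem:chernoff_bino} by choosing $\nu > 0$ such that $(1+\nu)\mathbb{E}[S_n] = m$, i.e., $\nu = m/(n\delta^\prime) - 1$. The choice $n = \lceil \alpha m/\delta^\prime \rceil - 1$ yields the bracketing
$$
\alpha m - \delta^\prime \le n\delta^\prime \le \alpha m,
$$
so $\nu$ is well-defined and positive because $\alpha < 1$. The hypothesis $\alpha > 1/2 + \delta^\prime/m$ rearranges to $\alpha m - \delta^\prime > m/2$, which together with the left inequality above gives $n\delta^\prime > m/2$ and hence $\nu < 1$; this is precisely the regime in which Lemma~\ref{lem:chernoff_bino} applies, and it then yields $\mathbb{P}(S_n \ge m) \le \exp(-\nu^2 n\delta^\prime/3)$.

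Finally, I would lower bound the exponent in the desired form. From $n\delta^\prime \le \alpha m$ one gets $\nu \ge 1/\alpha - 1$, so $\nu^2 \ge (1-1/\alpha)^2$; combined with $n\delta^\prime \ge \alpha m - \delta^\prime$ this gives $\nu^2 n\delta^\prime \ge (1-1/\alpha)^2(\alpha m - \delta^\prime)$, and exponentiating produces exactly the bound claimed. The only subtlety in the whole argument is the bookkeeping around the integer rounding in $n$: it is this slack that forces the term $\alpha m - \delta^\prime$, rather than $\alpha m$, to appear inside the exponential, and it is also what makes the condition $\alpha > 1/2 + \delta^\prime/m$ (instead of merely $\alpha > 1/2$) the natural hypothesis for Chernoff to apply. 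No tools beyond the already-quoted Chernoff inequality are required.
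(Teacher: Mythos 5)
Your proof is correct and follows essentially the same route as the paper's: the negative-binomial-to-binomial duality followed by the multiplicative Chernoff bound of Lemma~\ref{lem:chernoff_bino}, with the rounding slack absorbed into the $\alpha m-\delta^\prime$ term. The only cosmetic difference is that you work directly with the event $\{K_m\ge \alpha m/\delta^\prime-m\}$ via $n=\lceil \alpha m/\delta^\prime\rceil-1$, whereas the paper bounds the complementary event using $n=\lfloor \alpha m/\delta^\prime\rfloor$; both yield the stated inequality.
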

\begin{proof}
Let $X_i$ for $i>0$ be i.i.d. Bernoulli random variables with success probability $\delta^\prime.$ From Section 2 in \cite{brown2011wasted}, we have 
\begin{align}
    \mathbb{P}\left(K_m\le \left\lfloor\alpha m \frac{1}{\delta^\prime}\right\rfloor-m\right)
    =\mathbb{P}\left(\sum_{i=1}^{\left\lfloor\alpha m \frac{1}{\delta^\prime}\right\rfloor}X_i\ge m\right).\label{eq:geo_binom}
\end{align}

From \eqref{eq:geo_binom} and Lemma~\ref{lem:chernoff_bino}, for any $1/2+\delta^\prime/m<\alpha<1$ we have
\begin{align*}
    \mathbb{P}\left(K_m\le \alpha m \frac{1}{\delta^\prime}-m\right)&=\mathbb{P}\left(K_m\le\left\lfloor\alpha m \frac{1}{\delta^\prime}\right\rfloor-m\right)\cr \cr &
    =\mathbb{P}\left(\sum_{i=1}^{\left\lfloor\alpha m \frac{1}{\delta^\prime}\right\rfloor}X_i\ge m\right)\cr 
    &\le \exp\left(-\frac{(1-1/\alpha)^2}{3}\left\lfloor\alpha m \frac{1}{\delta^\prime}\right\rfloor\delta^\prime\right)\cr
    &\le \exp\left(-\frac{(1-1/\alpha)^2}{3}(\alpha m-\delta^\prime)\right),
\end{align*}
in which the first inequality comes from Lemma~\ref{lem:chernoff_bino},
which concludes the proof.
\end{proof}
From Lemma~\ref{lem:concen_geo} with $\alpha=1-1/\sqrt{m}$ and large enough $T$, we have
\begin{align}
    \mathbb{P}(K_m\ge \kappa)&\ge 1-\exp\left(-\frac{1}{3}(m-\sqrt{m}-\delta^\prime)\left(\frac{1}{\sqrt{m}-1}\right)^2\right)\cr
    &\ge 1-\exp\left(-\frac{1}{6}(m-\sqrt{m})\left(\frac{1}{\sqrt{m}-1}\right)^2\right)\cr
    &=1-\exp\left(-\frac{1}{6}\frac{\sqrt{m}}{\sqrt{m}-1}\right)\cr
    &\ge  1-\exp(-1/6). \label{eq:lowbd_eq4_e}
\end{align}
Therefore, from \eqref{eq:lowbd_eq3_e} and \eqref{eq:lowbd_eq4_e}, we have
\begin{align}
\mathbb{E}[R^\pi(T)]=\Omega(T\rot^{1/3}).    \label{eq:lowbd_large_e}
\end{align}
Finally, from \eqref{eq:lowbd_small_e} and \eqref{eq:lowbd_large_e} we conclude that for any policy $\pi$, we have $$\mathbb{E}[R^\pi(T)]=\Omega(\max\{T\rot^{1/3},\sqrt{T}\}).$$

\subsection{Proof of Theorem~\ref{thm:R_upper_bd_e}}
\label{sec:proof_thm_R_upper_bd_e}

Observe that initial mean rewards of sampled arms are i.i.d. with a uniform distribution which should simplify analysis of the expected regret. However, by fixing the number of sampled arms by a policy over the time horizon $T$, the mean rewards of arms become dependent. To deal with this dependence, we analyze the regret by controlling the number of distinct sampled arms instead of fixing the time horizon. We explain this in more details in the following proofs. 

We set $\delta=\max\{\rot^{1/3},1/\sqrt{T}\}$. Let $\Delta_1(a)=1-\mu_1(a)$. Then we define an arm $a$ to be a \emph{good} arm if $\Delta_1(a)\le \delta/2$, and, otherwise, $a$ is a \emph{bad} arm. In $\mathcal{A}$, let $\bar{a}_1,\bar{a}_2,\dots,$ be a sequence of arms, which have i.i.d. mean rewards with uniform distribution on $[0,1]$. Given a policy sampling arms in the sequence order,
let $m^\ga$ be the number of selections of distinct good arms and $m^{\ba}_i$ be the number of consecutive selections of distinct bad arms between the $i-1$-st and $i$-th selection of a good arm among $m^\ga$ good arms. We refer to the period starting from sampling the $i-1$-st good arm before sampling the $i$-th good arm as the $i$-th \emph{episode}.
Observe that $m^\ba_1,\ldots, m^\ba_{m^\ga}$ are i.i.d. random variables with geometric distribution with parameter $\delta/2$, given a fixed value of $m^\ga$. Therefore, for non-negative integer $k$ we have $\mathbb{P}(m^\ba_i=k)=(1-\delta/2)^k\delta/2$, for $i = 1, \ldots, m^\ga$. Define $\tilde{m}$ to be the number of episodes from the policy $\pi$ over the horizon $T$, $\tilde{m}^\ga$ to be the total number of selections of a good arm by the policy $\pi$ over the horizon $T$ such that $\tilde{m}^\ga=\tilde{m}$ or $\tilde{m}^\ga=\tilde{m}-1$, and $\tilde{m}_i^\ba$ to be the number of selections of a bad arm in the $i$-th episode by the policy $\pi$ over the horizon $T$. Without loss of generality, we assume that the policy selects arms in the sequence of $\bar{a}_1,\bar{a}_2,\dots,.$ Let $\mathcal{A}_T$ be the set of sampled arms over the horizon of $T$ time steps, which satisfies $|\mathcal{A}_T|\le T$. Let 
\begin{align*}
  \hat{\mu}_t(a)=\frac{\sum_{s=1}^{t-1}r_s\mathbbm{1}(a_s=a)}{n_t(a)} \text{ and } \bar{\mu}_t(a)=\frac{\sum_{s=1}^{t-1}\mu_s(a)\mathbbm{1}(a_s=a)}{n_t(a)}.  
\end{align*}
We define the event $E_1=\{|\hat{\mu}_t(a)-\bar{\mu}_t(a)|\le \sqrt{2\log(T^5)/n_t(a)} \hbox{ for all } t\in [T], a\in\mathcal{A}_T\}$ to guarantee that the estimators of initial mean reward are well estimated. Using Lemma~\ref{lem:chernoff_sub-gau}, we have
\begin{align*}
P\left(\left|\hat{\mu}_t(a)-\bar{\mu}_{t}(a)\right|\ge \sqrt{\frac{10\log T}{n_{t}(a)}}\right) \le \frac{2}{T^4}.
\end{align*}
Using union bound for $t$ and $a$, we have $\mathbb{P}(E_1^c)\le 2/T^2$.
 Recall that $R^\pi(T)=\sum_{t=1}^T1-\mu_t(a_t)$. It is true that $R^\pi(T)=o(T^2)$ because the maximum mean reward gap from rotting is bounded by $1+T\rot=o(T).$  Given that $E_1$ does not hold, the regret is $\mathbb{E}[R^\pi(T)|E_1^c]\mathbb{P}(E_1^c)=o(1)$, which is negligible comparing with the regret when $E_1$ holds true which we show later. Therefore, in the rest of the proof we assume that $E_1$ holds true.

Under a policy $\pi$, let $R_i^\ga$ be the regret (summation of mean reward gaps) contributed by pulling the good arm in the $i$-th episode and $R_{i,j}^\ba$ be the regret contributed by pulling the $j$-th bad arm in the $i$-th episode. Then let $R^{\pi}_{m^\ga}=\sum_{i=1}^{m^\ga}(R_i^\ga+\sum_{j\in[m_i^\ba]}R_{i,j}^\ba),$\footnote{Note that $R_{m^\ba}^\pi$ does not contain undefined $R_{i,j}^\ba$ such that $R_{i,j}^\ba$ when $m_i^\ba=0$.} which is the regret over the period of $m^\ga$ episodes. 
For obtaining a regret bound, we first focus on finding a required number of episodes, $m^{\ga}$, such that $R^\pi(T)\le R^{\pi}_{m^\ga}$. Then we provide regret bounds for each bad arm and good arm in an episode. Lastly, we obtain a regret bound for $\mathbb{E}[R^\pi(T)]$ using the episodic regret bound.

 For $i\in [\tilde{m}^\ga]$, $j\in [\tilde{m}_i^\ba]$, let $a(i)$ be the sampled arm for the $i$-th good arm and $a(i,j)$ be a sampled arm for $j$-th bad arm in the $i$-th episode. Then $n_T(a(i))$ is the number of pulls of the good arm in the $i$-th episode and $n_T(a(i,j))$ is the number of pulls of the $j$-th bad arm in the $i$-th episode by the policy $\pi$ over the horizon $T$. Let $\tilde{a}$ be the last sampled arm over time horizon $T$ by $\pi$. We denote by $\hat{m}^\ga$ and $\hat{m}^\ba_i$ for $i\in[\tilde{m}]$ the number of arms excluding $\tilde{a}$ in the sampled $\tilde{m}^\ga$ number of good arms and $\tilde{m}^\ba_i$ number of bad arms for $i\in[\tilde{m}]$ as follows:  \begin{equation*}
    \hat{m}^\ga=
    \begin{cases}
    \tilde{m}^\ga-1 &\text{if  $\tilde{a}$ is a good arm}  \\
     \tilde{m}^\ga& \text{otherwise}
    \end{cases},
\end{equation*}
\begin{equation*}
        \hat{m}_i^\ba=\tilde{m}_i^\ba \text{ for $i\in[\tilde{m}-1]$},
    \text{ and }
    \hat{m}_{\tilde{m}}^\ba=
    \begin{cases}
    \tilde{m}_{\tilde{m}}^\ba  &
    \text{if $\tilde{a}$ is a good arm}  \\
     \tilde{m}_{\tilde{m}}^\ba-1 & \text{otherwise}.
    \end{cases}
\end{equation*}
Those notations are defined only if they exist.\footnote{$n_T(a(i))$,  $n_T(a(i,j))$, and $\hat{m}_i^\ba$ are not defined for $i\in[0]$ or $j\in[0]$.}
 Excluding the last arm $\tilde{a}$ which the policy $\pi$ may stop to pull suddenly by reaching the horizon $T$, we provide lower bounds of the number of pulling a good arm, $n_T(a(i))$ for $i\in[\hat{m}^\ga]$ in the following lemma if they exist. 
  
 \begin{lemma}
  Under $E_1$, given $\hat{m}^\ga$, for any $i\in[\hat{m}^\ga]$ we have $$n_T(a(i))\ge\delta/(2\rot).$$
  \label{lem:n_low_bd}
 \end{lemma}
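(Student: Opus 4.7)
The plan is to argue that as long as a good arm $a(i)$ has been pulled fewer than $\delta/(2\rot)$ times, the UCB condition $\UCB_t(a(i)) \geq 1-\delta$ necessarily holds under $E_1$, and therefore the algorithm cannot have discarded it yet. For $i\in[\hat{m}^\ga]$, the arm $a(i)$ is not the straggling last arm $\tilde{a}$, so it is actually discarded at some time step within $[1,T]$, and that forces $n_T(a(i))\ge \delta/(2\rot)$.

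First I would unpack $\tilde{\mu}_t^o(a)$. Using the elementary identity $\sum_{s=1}^{t-1} n_s(a)\mathbbm{1}(a_s=a)=n_t(a)(n_t(a)-1)/2$, the definition becomes $\tilde{\mu}_t^o(a)=\hat{\mu}_t(a)+\rot(n_t(a)-1)/2$. Second, under $E_1$ we have $\hat{\mu}_t(a)\ge \bar{\mu}_t(a)-\sqrt{10\log(T)/n_t(a)}$. Third, because the per-pull rotting rate satisfies $\rot_s\le \rot$, the mean reward at the $k$-th pull of $a$ is at least $\mu_1(a)-k\rot$, so averaging over pulls indexed $0,1,\dots,n_t(a)-1$ gives $\bar{\mu}_t(a)\ge \mu_1(a)-\rot(n_t(a)-1)/2$. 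Chaining these three facts, the two $\rot(n_t(a)-1)/2$ contributions cancel and I obtain
\[
\tilde{\mu}_t^o(a)\ge \mu_1(a)-\sqrt{10\log(T)/n_t(a)}.
\]

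Plugging into the UCB definition, the $\pm\sqrt{10\log(T)/n_t(a)}$ terms also cancel, leaving the clean bound $\UCB_t(a)\ge \mu_1(a)-\rot n_t(a)$. Since $a(i)$ is a good arm, $\mu_1(a(i))\ge 1-\delta/2$, hence $\UCB_t(a(i))\ge 1-\delta/2-\rot n_t(a(i))$. Requiring the right-hand side to be $\ge 1-\delta$ is equivalent to $n_t(a(i))\le \delta/(2\rot)$. Therefore, as long as $n_t(a(i))\le\delta/(2\rot)$, the algorithm's test is passed and the arm is not discarded.

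Finally, for $i\in[\hat{m}^\ga]$ the arm $a(i)$ is by definition abandoned at some time step $t\le T$, which happens only when $\UCB_t(a(i))<1-\delta$. Combining with the previous display, at that discard time we must have $n_t(a(i))>\delta/(2\rot)$, and since $n_T(a(i))\ge n_t(a(i))$, the claim follows. The only delicate point I foresee is a careful justification of the lower bound $\bar{\mu}_t(a)\ge \mu_1(a)-\rot(n_t(a)-1)/2$ when the rotting rates $\rot_s$ vary with $s$; this is routine (use $\mu_s(a)=\mu_1(a)-\sum_{u<s}\rot_u\mathbbm{1}(a_u=a)$ and note each $\rot_u\le\rot$), but it is where the worst-case rotting assumption is consumed and is worth spelling out explicitly.
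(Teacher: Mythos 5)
Your proof is correct and follows essentially the same route as the paper's: under $E_1$ you show $\UCB_t(a(i))\ge \mu_1(a(i))-\rot\, n_t(a(i))\ge 1-\delta$ whenever $n_t(a(i))\le\delta/(2\rot)$, so a non-final good arm can only be discarded after at least $\delta/(2\rot)$ pulls. The only difference is presentational: the paper routes the argument through an auxiliary de-biased estimator $\hat{\mu}_t^o$ and asserts its concentration, whereas you expand $\tilde{\mu}_t^o$ directly via $\hat{\mu}_t$ and $\bar{\mu}_t$ and spell out the step where $\rot_s\le\rot$ is used, which is exactly the detail the paper leaves implicit.
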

 \begin{proof}
 Let $$\hat{\mu}_t^o(a)=\frac{\sum_{s=1}^{t-1}(r_s+\rot_sn_s(a))\mathbbm{1}(a_s=a)}{n_t(a)},$$ which satisfies $\tilde{\mu}_t^o(a)\ge \hat{\mu}_t^o(a)$ from $\rot_s\le \rot$ for all $s$. Under $E_1$, we can easily show that for all $t\in[T]$ and $a\in\mathcal{A}_T$, 
 $$|\hat{\mu}^o_t(a)-\mu_1(a)|\le \sqrt{10\log(T)/n_t(a)}.$$
 Let $a(i)$ be a sampled good arm in the $i$-th episode. Suppose that $n_t(a(i))=\lfloor\delta/(2\rot)\rfloor$ for some $t>0$, then we have \begin{align*}
    \tilde{\mu}_t^o(a(i))-\rot n_t(a(i))+\sqrt{10\log(T)/n_T(a(i))}&\ge\hat{\mu}_t^o(a(i))-\rot n_t(a(i))+\sqrt{10\log(T)/n_T(a(i))}\cr &\ge \mu_1(a(i))-\delta/2\cr &\ge 1-\delta,
\end{align*}
where the second inequality is obtained from $E_1$ and  $n_t(a(i))\le \delta/(2\rot)$, and the third inequality is from $\mu_1(a(i))\ge 1- \delta/2$. Therefore, policy $\pi$ must pull arm $a$ more times than $\lfloor\delta/(2\rot)\rfloor$, which implies $n_T(a(i))\ge\delta/(2\rot)$ for any $i\in[\hat{m}^\ga]$. 
 \end{proof}

\textbf{We first consider the case where $\rot=\omega(1/T^{3/2})$.} We have $\delta=\rot^{1/3}$. For getting $R^{\pi}_{m^\ga}$, here  we define the policy $\pi$ after time $T$ such that it pulls $\lceil\delta/(2\rot)\rceil$ amount for a good arm and $0$ for a bad arm. We note that defining how $\pi$ works after $T$ is only for the proof to get a regret bound over time horizon $T$. We define $n_T(a(i))=0$ for $i\in[m^\ga]/[\tilde{m}^\ga]$ for convenience. For the last arm $\tilde{a}$ over the horizon $T$, it pulls the arm  up to $\max\{\lceil\delta/(2\rot)\rceil,n_T(\tilde{a})\}$ amounts if $\tilde{a}$ is a good arm.
For $i\in[m^\ga]$, $j\in[m_i^\ba]$ let $n_i^\ga$ and $n_{i,j}^\ba$ be the number of pulling the good arm in $i$-th episode and $j$-th bad arm in $i$-th episode from the policy, respectively. Here we define $n_i^\ga$'s and $n_{i,j}^\ba$'s as follows:

If $\tilde{a}$ is a good arm,
\begin{equation*}
    n_i^\ga=
    \begin{cases}
    n_T(a(i)) &\text{for } i\in[\tilde{m}^\ga-1]  \\
     \max\{\lceil\delta/(2\rot)\rceil,n_T(a(i))\}& \text{for } i\in[m^\ga]/[\tilde{m}^\ga-1]
    \end{cases}, 
     \end{equation*}
    \begin{equation*}
    n_{i,j}^\ba=
    \begin{cases}
    \tilde{n}_T(a(i,j))&\text{for } i\in[\tilde{m}^\ga],j\in[\tilde{m}_i^\ba]\\
    0 &\text{for } i\in[m^\ga]/[\tilde{m}^\ga],j\in[m^\ba_i]/[\tilde{m}_i^\ba].
    \end{cases}
\end{equation*}

Otherwise,
\begin{equation*}
    n_i^\ga=
    \begin{cases}
    n_T(a(i)) &\text{for } i\in[\tilde{m}^\ga]  \\
     \lceil\delta/(2\rot)\rceil& \text{for } i\in[m^\ga]/[\tilde{m}^\ga]
    \end{cases}, 
    n_{i,j}^\ba=
    \begin{cases}
    \tilde{n}_T(a(i,j))&\text{for } i\in[\tilde{m}^\ga],j\in[\tilde{m}_i^\ba]\\
    0 &\text{for } i\in[m^\ga]/[\tilde{m}^\ga-1],j\in[m^\ba_i]/[\tilde{m}_i^\ba].
    \end{cases}
\end{equation*}
We note that $n_{i}^\ga$'s and $n_{i,j}^\ba$'s are defined only if they exist.\footnote{$n_i^\ga$ and $n_{i,j}^\ba$ are not defined for $i\in[0]$ or $j\in[0]$.}
Then we provide $m^{\ga}$ such that $R^\pi(T)\le R^{\pi}_{m^\ga}$ in the following lemma.
\begin{lemma}
Under $E_1$, when $m^\ga=\lceil 2T\rot^{2/3}\rceil$ we have 
$$R^\pi(T)\le R^{\pi}_{m^\ga}.$$\label{lem:regret_bd_prob}
\end{lemma}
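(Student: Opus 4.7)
The plan is to show that if $m^\ga=\lceil 2T\rot^{2/3}\rceil$, then the total number of pulls accounted for in $R^\pi_{m^\ga}$ (counting both actual pulls during episodes up to time $T$ and virtual pulls in the forward extension beyond $T$) is at least $T$, and that every pull contributes non-negative regret, so $R^\pi_{m^\ga}\ge R^\pi(T)$.

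First I would invoke Lemma~\ref{lem:n_low_bd}: under $E_1$, for every $i\in[\hat{m}^\ga]$ the good arm $a(i)$ selected in the $i$-th episode is pulled at least $\delta/(2\rot)$ times by policy $\pi$ within the horizon $T$. Since $n_T(a(i))$ is an integer, this means $n_T(a(i))\ge\lceil\delta/(2\rot)\rceil$ for $i\in[\hat{m}^\ga]$. Combining with the construction of the virtual extension (the $n_i^\ga$'s for $i\in[m^\ga]\setminus[\hat{m}^\ga]$ are set to either $\lceil\delta/(2\rot)\rceil$ or $\max\{\lceil\delta/(2\rot)\rceil,n_T(a(i))\}$), I get that $n_i^\ga\ge\lceil\delta/(2\rot)\rceil$ for every $i\in[m^\ga]$.

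Summing this lower bound over all $m^\ga$ episodes and using $\delta=\rot^{1/3}$ with $m^\ga=\lceil 2T\rot^{2/3}\rceil$, I obtain
\[
\sum_{i\in[m^\ga]}n_i^\ga\ \ge\ m^\ga\,\frac{\delta}{2\rot}\ \ge\ 2T\rot^{2/3}\cdot\frac{\rot^{1/3}}{2\rot}\ =\ T,
\]
so the pulls on good arms alone across $m^\ga$ episodes already cover at least $T$ time steps in the virtual timeline, and consequently the combined pull count $\sum_{i}(n_i^\ga+\sum_j n_{i,j}^\ba)$ is $\ge T$ as well. Because initial mean rewards lie in $[0,1]$ and rotting only decreases them, every pulled arm has instantaneous regret $1-\mu_t(a_t^\pi)\ge 0$; therefore each additional pull contributes a nonnegative term to $R^\pi_{m^\ga}$.

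Finally I would assemble the argument: $R^\pi(T)$ sums nonnegative per-time-step regrets over the actual horizon, and since $R^\pi_{m^\ga}$ sums the same nonnegative per-pull regrets over a (virtual) timeline of length at least $T$ that contains the real one as a prefix, $R^\pi(T)\le R^\pi_{m^\ga}$. The main obstacle will be bookkeeping around the last-arm boundary, i.e.\ treating the cases where $\tilde{a}$ is a good arm versus a bad arm so that the definitions of $\hat{m}^\ga$ and of $n_i^\ga$ (in particular the use of $\max\{\lceil\delta/(2\rot)\rceil,n_T(a(i))\}$ rather than just $\lceil\delta/(2\rot)\rceil$ at the transitional index) are consistent with the above lower bound on total pulls; this is routine but must be verified in both branches before concluding.
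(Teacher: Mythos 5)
Your proposal is correct and follows essentially the same route as the paper: invoke Lemma~\ref{lem:n_low_bd} together with the virtual-extension definitions of $n_i^\ga$ to get $n_i^\ga\ge\delta/(2\rot)$ for every $i\in[m^\ga]$, sum over episodes to conclude $\sum_{i\in[m^\ga]}(n_i^\ga+\sum_j n_{i,j}^\ba)\ge m^\ga\delta/(2\rot)\ge T$, and hence $R^\pi(T)\le R^\pi_{m^\ga}$. Your explicit remark that each pull contributes nonnegative regret (since mean rewards start in $[0,1]$ and only decrease) is left implicit in the paper but is a legitimate and welcome clarification of why covering $T$ time steps suffices.
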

\begin{proof}
From Lemma~\ref{lem:n_low_bd}, with $\delta=\rot^{1/3}$
we have 
$$\sum_{i\in[m^\ga]}\left(n^\ga_i+\sum_{j\in[m^\ba_i]}n^\ba_{i,j}\right)\ge m^\ga\frac{\delta}{2\rot}\ge T,$$
which implies that $R^\pi(T)\le R^{\pi}_{m^\ga}.$ 

\end{proof}
From the result of Lemma~\ref{lem:regret_bd_prob}, we set $m^\ga=\lceil 2T\rot^{2/3}\rceil$.
  For getting a bound for $\mathbb{E}[R^{\pi}_{m^\ga}]$, we provide bounds for $\mathbb{E}[R^\ga_i]$ and $\mathbb{E}[R^\ba_{i,j}]$ in the following lemma.
  \begin{lemma}
   Under $E_1$ and policy $\pi$, for any $i\in[m^\ga]$, $j\in[m^\ba_i]$, we have 
\begin{align*}
    \mathbb{E}[R_i^\ga]
    =\tilde{O}\left(\frac{\delta}{\rot^{2/3}}+\frac{\delta^2}{\rot}+\frac{1}{\rot^{1/3}}\right),
\end{align*}
and 
\begin{align*}
    \mathbb{E}[R_{i,j}^\ba]= \tilde{O}\left(1+\frac{\delta}{\rot^{1/3}}+\frac{\delta^2}{\rot^{2/3}}\right).
\end{align*}\label{lem:R_good_bad_bd}
  \end{lemma}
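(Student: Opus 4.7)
The plan is to convert the UCB stopping condition into deterministic upper bounds on the number of pulls $n_i^\ga$ of the good arm in episode $i$ and on $n_{i,j}^\ba$ for the $j$-th bad arm. First, under $E_1$, I would unroll the definition of $\tilde\mu_t^o(a)$ using the fact that the $s$-th pull of $a$ occurs with $n_s(a)=$ (pull index $-1$), which yields the algebraic identity $\tilde\mu_t^o(a)-\rot n_t(a) = \hat\mu_t(a) - \rot(n_t(a)+1)/2$. Because $\rot_s\in[0,\rot]$, we have $\bar\mu_t(a)\in[\mu_1(a)-\rot(n_t(a)-1)/2,\,\mu_1(a)]$, and combining this with $|\hat\mu_t(a)-\bar\mu_t(a)|\le\sqrt{10\log(T)/n_t(a)}$ from $E_1$ gives the deterministic bound
$$\UCB_t(a)\;\le\;\mu_1(a)\;-\;\tfrac{\rot}{2}(n_t(a)-1)\;+\;2\sqrt{\tfrac{10\log T}{n_t(a)}}.$$

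Second, I would set $n^\star = 2(\delta+\rot^{1/3})/\rot + C\log(T)/\rot^{2/3}$ for a sufficiently large constant $C$. A short computation verifies that $\UCB_t(a(i))<1-\delta$ whenever $n_t(a(i))\ge n^\star$ and $\mu_1(a(i))\le 1$; hence $n_i^\ga\le n^\star$. Since $\mu_{s_j}(a(i))\ge \mu_1(a(i))-\rot(j-1)$, we obtain the pointwise bound $R_i^\ga \le \Delta_1(a(i))\,n_i^\ga + \rot (n_i^\ga)^2/2$. Using $\Delta_1(a(i))\le\delta/2$ together with $n^\star = \tilde O(\delta/\rot+1/\rot^{2/3})$ yields $\mathbb{E}[R_i^\ga]=\tilde O(\delta^2/\rot+\delta/\rot^{2/3}+1/\rot^{1/3})$, matching the claimed bound.

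Third, for bad arms I would split on the value of $\Delta_1(a(i,j))$. When $\Delta_1(a(i,j))\in(\delta/2,\delta+\rot^{1/3}]$, essentially the same argument as for good arms applies and gives $n_{i,j}^\ba\le n^\star = \tilde O(1/\rot^{2/3})$; when $\Delta_1(a(i,j))\in(\delta+\rot^{1/3},1]$ the variance term dominates and one obtains $n_{i,j}^\ba = O(\log(T)/(\Delta_1(a(i,j))-\delta)^2)$. Conditional on $a(i,j)$ being bad, $\Delta_1(a(i,j))$ is uniform on $(\delta/2,1]$, so I would integrate $\Delta_1(a(i,j))\,n_{i,j}^\ba+\rot(n_{i,j}^\ba)^2/2$ against this density. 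The first regime has mass $O(\delta+\rot^{1/3})$ and per-arm cost $\tilde O(\delta/\rot^{2/3}+1/\rot^{1/3})$, contributing $\tilde O(\delta^2/\rot^{2/3}+\delta/\rot^{1/3}+1)$; the second regime, after the substitution $u=\Delta_1-\delta$, reduces to $\log T\int_{\rot^{1/3}}^{1-\delta}(1/u+\delta/u^2)du$ and $\rot\log^2 T\int_{\rot^{1/3}}^{1-\delta}u^{-4}du$, giving $\tilde O(1+\delta/\rot^{1/3})$ and $\tilde O(1)$ respectively. Summing yields $\mathbb{E}[R_{i,j}^\ba]=\tilde O(1+\delta/\rot^{1/3}+\delta^2/\rot^{2/3})$.

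The main obstacle is to carefully verify the deterministic cutoff $n^\star$ in both regimes: one must show that whichever of the rotting term $\rot n_t/2$ and the variance term $2\sqrt{10\log T/n_t}$ is larger drives the UCB below $1-\delta$ at $n_t=n^\star$, which requires tracking constants through the algebraic identity for $\tilde\mu_t^o-\rot n_t$. Once that cutoff is in place, the only other care needed is in the case split for bad arms, where the threshold $\delta+\rot^{1/3}$ is chosen precisely so that the two integrals match at order $1/\rot^{1/3}$, which is what produces the $1/\rot^{1/3}$ term in the final bound.
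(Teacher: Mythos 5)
Your proposal follows essentially the same route as the paper's proof: the same algebraic identity reducing $\tilde\mu_t^o(a)-\rot n_t(a)$ to $\hat\mu_t(a)$ minus a rotting correction, the same deterministic cutoff $n^\star=2(\delta+\rot^{1/3})/\rot+C\log(T)/\rot^{2/3}$ for good arms and for near-threshold bad arms, the same $O(\log T/(\Delta_1-\delta)^2)$ cutoff for the remaining bad arms via the sufficient condition that drops the rotting term, and the same integration of $\Delta_1 n+\rot n^2/2$ against the uniform law of $\Delta_1$ split at $\delta+\rot^{1/3}$. The only detail you omit is the bookkeeping for the fictitious episodes $i\in[m^\ga]\setminus[\tilde m^\ga]$ (where the paper assigns $\lceil\delta/(2\rot)\rceil$ pulls to good arms and $0$ to bad arms), but those values sit within the same $\tilde O$ bounds, so nothing breaks.
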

  \begin{proof}
   First we provide a bound for $R^\ga_i$ using an upper bound of $n^\ga_i$. Recall that $a(i)$ is the sampled arm for the $i$-th good arm.
   We have
   \begin{align*}
     \tilde{\mu}_t^o(a(i))-\rot n_t(a(i))&=\hat{\mu}_t(a(i))+\rot\frac{\sum_{s=1}^{t-1}n_s(a(i))\mathbbm{1}(a_s=a(i))}{n_t(a(i))}-\rot n_t(a(i))\cr &
     \le \hat{\mu}_t(a(i))+\rot \frac{n_t(a(i))+1}{2}-\rot n_t(a(i))\cr&
     =\hat{\mu}_t(a(i))-\rot \frac{n_t(a(i))-1}{2}.
   \end{align*}
   Then since under $E_1$,
   \begin{align*}
   \hat{\mu}_t(a(i))-(\rot/2)(n_t(a(i))-1)&\le \bar{\mu}_t(a(i))+\sqrt{10\log(T)/n_t(a(i))}-(\rot/2)(n_t(a(i))-1)\cr &\le  1+\sqrt{10\log(T)/n_t(a(i))}-(\rot/2)(n_t(a(i))-1),
   \end{align*}
   for $i\in[\tilde{m}^\ga]$, from the policy $\pi$, we need to get $n$ such that
\begin{align}
    1-\frac{\rot}{2}(n-1) +2\sqrt{10\log(T)/n}< 1-\delta,\label{eq:n_good_con_up}
\end{align} in which $n+1$ is an upper bound for $n^\ga_i$. Let $n_1=2(\delta+\rot^{1/3})/\rot+1$ and $n_2=C\log(T)/\rot^{2/3}$ with some large enough constant $C>0$. Then $n=n_1+n_2$ satisfies \eqref{eq:n_good_con_up} because $1-\rot (n_1-1)/2+2\sqrt{10\log(T)/n_2}< 1-\delta.$ Therefore, for all $i\in[\tilde{m}^\ga]$ we have 
$n_i^\ga=\tilde{O}((\delta+\rot^{1/3})/\rot).$
Then with the fact that $n_i^\ga=\lceil\delta/(2\rot)\rceil$ for $i\in[m^\ga]/[\tilde{m}^\ga]$ if they exist, for any $i\in[m^\ga]$ we have $$n_i^\ga=\tilde{O}((\delta+\rot^{1/3})/\rot).$$  Then for any $i\in[m^\ga]$ we have 
\begin{align*}
    \mathbb{E}[R_i^\ga]&\le \mathbb{E}
    \left[\Delta_1(a(i))n_i^\ga+\frac{n_i^\ga(n_i^\ga-1)}{2}\rot\right]\cr
    &=\tilde{O}\left(\frac{2}{\delta}\int_0^{\delta/2}\frac{\delta+\rot^{1/3}}{\rot}x+\left(\frac{\delta+\rot^{1/3}}{\rot}\right)^2\rot dx\right)\cr
    &=\tilde{O}\left(\frac{\delta}{\rot^{2/3}}+\frac{\delta^2}{\rot}+\frac{1}{\rot^{1/3}}\right),
\end{align*}
where the first equality is obtained from the fact that $\Delta_1(a(i))$ are i.i.d. random variables with uniform distribution on $[0,\delta/2]$ and $n_i^\ga=\tilde{O}((\delta+\rot^{1/3})/\rot)$.

Now we provide an upper bound of $n^\ba_{i,j}$ to get a bound of $R_{i,j}^\ba$ for $i\in[m^\ga],j\in[m_i^\ba]$. Let $a(i,j)$ be a sampled arm for $j$-th bad arm in the $i$-th episode. When $\delta/2<\Delta_1(a(i,j))\le \delta+\rot^{1/3}$, as in the case of the good arm, $n=2(\delta+\rot^{1/3})/\rot+1 +C_1\log(T)/\rot^{2/3}$ for some large enough constant $C_1>0$, satisfies $\eqref{eq:n_good_con_up}$ so that $n_{i,j}^\ba=\tilde{O}((\delta+\rot^{1/3})/\rot)$ for all $i\in[\tilde{m}], j\in[\tilde{m}_i^\ba]$.
Since under $E_1$ 
\begin{align*}
   \hat{\mu}_t(a(i,j))-(\rot/2)(n_t(a(i,j))-1)&\le \bar{\mu}_t(a(i,j))+\sqrt{10\log(T)/n_t(a(i,j))}-(\rot/2)(n_t(a(i,j))-1)\cr &\le  \mu_1(a(i,j))+\sqrt{10\log(T)/n_t(a(i,j))}-(\rot/2)(n_t(a(i,j))-1),
   \end{align*}
when $\delta+\rot^{1/3}<\Delta_1(a(i,j))\le 1$, from the policy $\pi$ under $E_1$, we need to get $n\ge 1$ such that
\begin{align}
    \mu_1(a(i,j))-\frac{\rot}{2} (n-1)+2\sqrt{10\log(T)/n}< 1-\delta, \label{eq:n_bad_con_up}
\end{align}
in which $n+1$ is an upper bound of $n_{i,j}^\ba$ for $i\in[\tilde{m}], j\in[\tilde{m}_i^\ba]$. From a sufficient condition for \eqref{eq:n_bad_con_up} to hold such that $$\mu_1(a(i,j))+2\sqrt{10\log(T)/n}< 1-\delta,$$ we can find that $n=C_2\log(T)/(\Delta_1(a(i,j))-\delta)^2$
for some large constant $C_2>0$ satisfies \eqref{eq:n_bad_con_up}. Therefore, when $\delta+\rot^{1/3}<\Delta_1(a(i,j))\le 1$, for all $i\in[\tilde{m}],j\in[\tilde{m}_i^\ba]$ we have $n_{i,j}^\ba=\tilde{O}(1/(\Delta_1(a(i,j))-\delta)^2).$ Then with the fact that $n_{i,j}^\ba=0$ for $i\in[m^\ga]/[\tilde{m}^\ga]$, $j\in[m_i^\ba]/[\tilde{m}_i^\ba]$ if they exist, for any $i\in[m^\ga]$ and $j\in[m^\ba_i]$, we have
\begin{equation*}
    n_{i,j}^\ba=
    \begin{cases}
    \tilde{O}((\delta+\rot^{1/3})/\rot) & \text{if $\delta/2<\Delta_1(a(i,j))\le\delta+\rot^{1/3}$}\\
    \tilde{O}(1/(\Delta_1(a(i,j))-\delta)^2) & \text{if $\delta+\rot^{1/3}<\Delta_1(a(i,j))\le 1$}
    \end{cases}
\end{equation*}
Then for any $i\in[m^\ga]$, $j\in[m^\ba_i]$, we have 
\begin{align*}
    \mathbb{E}[R_{i,j}^\ba]&\le\mathbb{E}
    \left[\Delta_1(a(i,j))n_{i,j}^\ba+\frac{n_{i,j}^\ba(n_{i,j}^\ba-1)}{2}\rot\right]\cr
    &=\tilde{O}\left(\frac{1}{1-\delta/2}\left(\int_{\delta/2}^{\delta+\rot^{1/3}}\frac{\delta+\rot^{1/3}}{\rot}x+\left(\frac{\delta+\rot^{1/3}}{\rot}\right)^2\rot dx\right . \right . + \left . \left .\int_{\delta+\rot^{1/3}}^1 \frac{1}{(x-\delta)^2}x+\frac{1}{(x-\delta)^4}\rot dx \right)\right)\cr
    &= \tilde{O}\left(1+\frac{\delta}{\rot^{1/3}}+\frac{\delta^2}{\rot^{2/3}}\right).
\end{align*}
  \end{proof}
 
Recall that $R^{\pi}_{m^\ga}=\sum_{i=1}^{m^\ga}(R_i^\ga+\sum_{j\in[m_i^\ba]}R_{i,j}^\ba).$ With $\delta=\rot^{1/3}$ and $m^\ga=\lceil 2T\rot^{2/3}\log(1/\delta^\prime)\rceil$, from Lemmas ~\ref{lem:regret_bd_prob} and \ref{lem:R_good_bad_bd}, and the fact that $m_i^\ba$'s are i.i.d. random variables with geometric distribution with $\mathbb{E}[m_i^\ba]=2/\delta-1$, we have
\begin{align}
\mathbb{E}[R^\pi(T)]&=O(\mathbb{E}[R^{\pi}_{m^\ga}])\cr &=O\left(\mathbb{E}\left[\sum_{i=1}^{m^\ga}\left(R^\ga_i+\sum_{j\in[m^\ba_i]}R^\ba_{i,j}\right)\right]\right)\cr
  &= \tilde{O}\left(T\rot^{2/3}\left(\left(\frac{\delta}{\rot^{2/3}}+\frac{\delta^2}{\rot}+\frac{1}{\rot^{1/3}}\right)+\frac{1}{\delta}\left(1+\frac{\delta}{\rot^{1/3}}\right)\right)\right)\cr
  &=\tilde{O}\left(T\rot^{2/3}\left(\frac{\delta}{\rot^{2/3}}+\frac{\delta^2}{\rot}+\frac{1}{\delta}+\frac{1}{\rot^{1/3}}\right)\right)\cr
  &=\tilde{O}\left(T\rot^{1/3}\right).\label{eq:regret_bd_large_e}
\end{align}

\textbf{Now we consider the case where $\rot=O( 1/T^{3/2})$.} We have  $\delta=\Theta(1/\sqrt{T})$. With a slight abuse of notation, we use $\pi$ for a  modified strategy after $T$.
For getting $R^{\pi}_{m^\ga}$, here we define the policy $\pi$ after time $T$ such that it pulls $T$ amounts for a good arm and once for a bad arm. For the last arm $\tilde{a}$ over the horizon $T$, it pulls the arm  up to $T$ amounts if $\tilde{a}$ is a good arm. With slight abuse of notation,
for $i\in[m^\ga]$, $j\in[m_i^\ba]$ let $n_i^\ga$ and $n_{i,j}^\ba$ be the number of pulling the good arm in $i$-th episode and $j$-th bad arm in $i$-th episode from the policy, respectively. Here we define $n_i^\ga$'s and $n_{i,j}^\ba$'s as follows: 

If $\tilde{a}$ is a good arm,
\begin{equation*}
    n_i^\ga=
    \begin{cases}
    n_T(a(i)) &\text{for } i\in[\tilde{m}^\ga-1]  \\
     T & \text{for } i\in[m^\ga]/[\tilde{m}^\ga-1]
    \end{cases}, 
    n_{i,j}^\ba=
    \begin{cases}
    n_T(a(i,j))&\text{for } i\in[\tilde{m}^\ga],j\in[\tilde{m}_i^\ba]\\
    0 &\text{for } i\in[m^\ga]/[\tilde{m}^\ga],j\in[m^\ba_i]/[\tilde{m}_i^\ba].
    \end{cases}
\end{equation*}
Otherwise,
\begin{equation*}
    n_i^\ga=
    \begin{cases}
    n_T(a(i)) &\text{for } i\in[\tilde{m}^\ga]  \\
     T & \text{for } i\in[m^\ga]/[\tilde{m}^\ga]
    \end{cases}, 
    n_{i,j}^\ba=
    \begin{cases}
    n_T(a(i,j))&\text{for } i\in[\tilde{m}^\ga],j\in[\tilde{m}_i^\ba]\\
    0 &\text{for } i\in[m^\ga]/[\tilde{m}^\ga-1],j\in[m^\ba_i]/[\tilde{m}_i^\ba].
    \end{cases}
\end{equation*}


From Lemma~\ref{lem:n_low_bd}, under $E_1$ we can find that $n_i^\ga\ge \min\{\delta/(2\rot),T\}$ for $i\in[m^\ga]$.
Then if $m^\ga=C_3$ with some large enough constant $C_3>0$, then with $\delta=\Theta(1/\sqrt{T})$ and $\rot=O(1/T^{3/2})$, we have $$\sum_{i\in[m^\ga]}n_i^\ga\ge C_3\min\{\delta/(2\rot),T\}>T,$$ which implies $R^\pi(T)\le R^{\pi}_{m^\ga}$.
Therefore, we set $m^\ga=C_3$.
  For getting a bound for $\mathbb{E}[R^{\pi}_{m^\ga}]$, we provide bounds for $\mathbb{E}[R^\ga_i]$ and $\mathbb{E}[R^\ba_{i,j}]$ in the following lemma.
  \begin{lemma}
Under $E_1$ and policy $\pi$, for any $i\in[m^\ga]$ and $j\in[m^\ba_i]$, we have 
\begin{align*}
    \mathbb{E}[R_i^\ga]
    =O\left(\delta T+T^2\rot\right),
\end{align*}
and 
\begin{align*}
    \mathbb{E}[R_{i,j}^\ba]=\tilde{O}\left(T\delta^2+\sqrt{T}\delta+1\right).
\end{align*}\label{lem:R_good_bad_bd_small_e}
  \end{lemma}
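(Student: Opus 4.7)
The plan is to parallel the structure of Lemma~\ref{lem:R_good_bad_bd} but rework the bounds on $n_i^\ga$ and $n_{i,j}^\ba$ under the small-rotting regime $\rot=O(1/T^{3/2})$ and $\delta=\Theta(1/\sqrt{T})$. In this regime the rotting term $\rot \cdot n$ is too slow to drive the $\UCB$ index below the threshold $1-\delta$ within the horizon, so the horizon itself (rather than the quantity $\tilde{O}((\delta+\rot^{1/3})/\rot)$ used in the large-rotting case) is the binding cap on the number of pulls for arms whose initial gap $\Delta_1$ is smaller than roughly $\delta+1/\sqrt{T}$. By the definition of $n_i^\ga$ in this case (capped at $T$ by construction of $\pi$'s extension past time $T$, so that $R^\pi(T) \le R^\pi_{m^\ga}$), the good arm bound is straightforward: writing $\mathbb{E}[R_i^\ga] \le \mathbb{E}[\Delta_1(a(i)) n_i^\ga + n_i^\ga(n_i^\ga-1)\rot/2]$ and using that $\Delta_1(a(i))$ is uniform on $[0,\delta/2]$ together with $n_i^\ga \le T$ immediately yields $O(\delta T + T^2\rot)$.

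For $\mathbb{E}[R_{i,j}^\ba]$, I would follow the same case split on $\Delta_1(a(i,j))$ used in Lemma~\ref{lem:R_good_bad_bd}. In the small-gap regime $\delta/2 < \Delta_1(a(i,j)) \le \delta + 1/\sqrt{T}$, the analogue of \eqref{eq:n_good_con_up} no longer produces a useful bound because $(\delta+\rot^{1/3})/\rot$ exceeds $T$; here I would simply use the trivial cap $n_{i,j}^\ba \le T$. In the large-gap regime $\delta + 1/\sqrt{T} < \Delta_1(a(i,j)) \le 1$, the variance-driven bound $n_{i,j}^\ba = \tilde{O}(1/(\Delta_1(a(i,j))-\delta)^2)$ from the sufficient condition $\mu_1(a(i,j)) + 2\sqrt{10\log(T)/n} < 1-\delta$ carries over verbatim. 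Using the density $1/(1-\delta/2)$ of $\Delta_1(a(i,j))$ on $[\delta/2,1]$, I would then integrate $\mathbb{E}[\Delta_1(a(i,j)) n_{i,j}^\ba + (n_{i,j}^\ba)^2 \rot/2]$ over the two subintervals. The small-gap piece, of length $\Theta(1/\sqrt{T})$, contributes $O(T\delta^2 + \sqrt{T}\delta + 1)$ after using $\rot=O(1/T^{3/2})$ to bound the $T^2\rot$ term; the large-gap piece integrates to $\tilde{O}(1)$ by the standard $\int 1/(x-\delta)^2 \, dx$ calculation and the rotting-quadratic term is likewise $\tilde{O}(\rot \cdot T^{3/2}) = \tilde{O}(1)$.

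The main obstacle is the small-gap regime for bad arms, where the large-rotting bound on $n_{i,j}^\ba$ becomes vacuous and one must rely on the trivial horizon cap $T$; the length $\Theta(1/\sqrt{T})$ of the small-gap interval, together with $\delta = \Theta(1/\sqrt{T})$, is precisely what keeps the contribution at the claimed $\tilde{O}(T\delta^2 + \sqrt{T}\delta + 1)$ level. A minor technical point is that the artificial extension of $\pi$ past time $T$ (setting $n_i^\ga = T$ for good arms in the phantom episodes and $n_{i,j}^\ba = 0$ for bad arms beyond $\tilde{m}^\ga$) must be defined exactly as in the large-rotting proof so that the i.i.d.\ structure of $\Delta_1(a(i))$ and $\Delta_1(a(i,j))$ across episodes is preserved and the expectations can be computed against the uniform distributions on $[0,\delta/2]$ and $[\delta/2,1]$, respectively.
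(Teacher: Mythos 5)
Your proposal matches the paper's proof essentially step for step: the horizon cap $n_i^\ga \le T$ for good arms, the case split for bad arms at $\Delta_1 = \delta + 1/\sqrt{T}$ with the trivial cap $T$ on the small-gap side and the variance-driven bound $\tilde{O}(1/(\Delta_1-\delta)^2)$ on the large-gap side, and the integration against the uniform densities, with $\rot = O(1/T^{3/2})$ and $\delta = \Theta(1/\sqrt{T})$ absorbing the $T^2\rot$ and $\delta\sqrt{T}$ terms exactly as you describe. The remark about defining the phantom extension of $\pi$ past time $T$ to preserve the i.i.d.\ structure is also how the paper handles it, so there is nothing to add.
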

  \begin{proof}
   First we provide a bound for $R^\ga_i$ using an upper bound of $n^\ga_i$. With the definition of $n_i^\ga=T$ for $i\in[m^\ga]/[\tilde{m}^\ga]$, for any $i\in[m^\ga]$ we have $$n_i^\ga\le T.$$ Recall that $a(i)$ is the sampled arm for the $i$-th good arm. Then for any $i\in[m^\ga]$ we have 
\begin{align*}
    \mathbb{E}[R_i^\ga]&\le\mathbb{E}
    \left[\Delta_1(a(i))n_i^\ga+\frac{n_i^\ga(n_i^\ga-1)}{2}\rot\right]\cr
    &=O\left(\delta T+T^2\rot \right),
\end{align*}
where the first equality is obtained from the fact that $\Delta_1(a(i))$'s are i.i.d. random variables with uniform distribution on $[0,\delta/2]$ and $n_i^\ga\le T$.

Now we provide an upper bound of $n^\ba_{i,j}$ to get a bound of $R_{i,j}^\ba$ for $i\in[m^\ga],j\in[m_i^\ba]$. Let $a(i,j)$ be a sampled arm for $j$-th bad arm in the $i$-th episode. When $\delta/2<\Delta_1(a(i,j))\le \delta+1/\sqrt{T}$, as in the case of the good arm, $n_{i,j}^\ba\le T$ for all $i\in[\tilde{m}], j\in[\tilde{m}_i^\ba]$. When $\delta+1/\sqrt{T}<\Delta_1(a(i,j))\le 1$, 
since under $E_1$
\begin{align*}
    \tilde{\mu}_t^o(a(i,j))-\rot n_t(a(i,j))&=\hat{\mu}_t(a(i,j))+\rot\frac{\sum_{s=1}^{t-1}n_s(a(i,j))\mathbbm{1}(a_s=a(i,j))}{n_t(a(i,j))}-\rot n_t(a(i,j))\cr &
     \le \hat{\mu}_t(a(i,j))+\rot \frac{n_t(a(i,j))+1}{2}-\rot n_t(a(i,j))\cr&
     =\hat{\mu}_t(a(i,j))-\rot \frac{n_t(a(i,j))-1}{2}\cr &\le \bar{\mu}_t(a(i,j))+\sqrt{10\log(T)/n_t(a(i,j))}-(\rot/2)(n_t(a(i,j))-1)\cr &\le  \mu_1(a(i,j))+\sqrt{10\log(T)/n_t(a(i,j))}-(\rot/2)(n_t(a(i,j))-1),
\end{align*}
from the policy $\pi$ we need to get $n$ such that
\begin{align}
    \mu_1(a(i,j))-(\rot/2) (n-1)+2\sqrt{10\log(T)/n}< 1-\delta, \label{eq:n_bad_con_up_small_e}
\end{align}
in which $n+1$ is an upper bound of $n_{i,j}^\ba$ for $i\in[\tilde{m}], j\in[\tilde{m}_i^\ba]$. From a sufficient condition for \eqref{eq:n_bad_con_up_small_e} to hold such that $$\mu_1(a(i,j))+2\sqrt{10\log(T)/n}< 1-\delta,$$ we can find that $n=C_4\log(T)/(\Delta_1(a(i,j))-\delta)^2$
for some large constant $C_4>0$ satisfies \eqref{eq:n_bad_con_up_small_e}. Therefore, when $\delta+1/\sqrt{T}<\Delta_1(a(i,j))\le 1$, for all $i\in[\tilde{m}],j\in[\tilde{m}_i^\ba]$ we have $n_{i,j}^\ba=\tilde{O}(1/(\Delta_1(a(i,j))-\delta)^2).$ Then with the fact that $n_{i,j}^\ba=0$ for $i\in[m^\ga]/[\tilde{m}^\ga]$, $j\in[m_i^\ba]/[\tilde{m}_i^\ba]$, for any $i\in[m^\ga]$, $j\in[m^\ba_i]$, if $\delta/2<\Delta_1(a(i,j))\le\delta+1/\sqrt{T}$, we have $$n_{i,j}^\ba\le T,$$ and if $\delta+1/\sqrt{T}<\Delta_1(a(i,j))\le 1$, we have $$n_{i,j}^\ba=\tilde{O}(1/(\Delta_1(a(i,j))-\delta)^2).$$ Then for any $i\in[m^\ga]$, $j\in[m^\ba_i]$, we have 
\begin{align*}
    \mathbb{E}[R_{i,j}^\ba]&\le\mathbb{E}
    \left[\Delta_1(a(i,j))n_{i,j}^\ba+\frac{n_{i,j}^\ba(n_{i,j}^\ba-1)}{2}\rot\right]\cr
    &=\tilde{O}\left(\frac{1}{1-\delta/2}\left(\int_{\delta/2}^{\delta+1/\sqrt{T}}Tx+T^2\rot dx\right . \right . + \left . \left .\int_{\delta+1/\sqrt{T}}^1 \frac{1}{(x-\delta)^2}x+\frac{1}{(x-\delta)^4}\rot dx \right)\right)\cr
    &= \tilde{O}\left(T\delta^2+\sqrt{T}\delta+1\right).
\end{align*}
  \end{proof}

Then with $\delta=\Theta(1/\sqrt{T})$ and $m^\ga=C_3$, we have
\begin{align}
\mathbb{E}[R^\pi(T)]&=O(\mathbb{E}[R^{\pi}_{m^\ga}])\cr
&=O\left(\mathbb{E}\left[\sum_{i\in[m^\ga]}(R_i^\ga+\sum_{j\in[m_i^\ba]}R_{i,j}^\ba)\right]\right)\cr
&=\tilde{O}\left(\left(\delta T+T^2\rot\right)+\frac{1}{\delta}\left(T\delta^2+\sqrt{T}\delta+1\right)\right)\cr
&=\tilde{O}(\sqrt{T}),    \label{eq:regret_bd_small_e}
\end{align}
where the third equality is obtained from Lemma~\ref{lem:R_good_bad_bd_small_e} and  $\mathbb{E}[m_i^\ba]=2/\delta-1$.

\textbf{Finally, we can conclude the proof:} From \eqref{eq:regret_bd_large_e} and \eqref{eq:regret_bd_small_e}, for $\rot=o(1)$, \\with $\delta=\max\{\rot^{1/3},1/\sqrt{T}\}$ we have
$$\mathbb{E}[R^\pi(T)]=\tilde{O}(\max\{T\rot^{1/3},\sqrt{T}\}).$$

\subsection{Proof of Theorem~\ref{thm:R_upper_bd_no_e}}\label{sec:R_upper_bd_no_e_proof}

Let $\pi_i(\beta)$ for $\beta \in \mathcal{B}$ denote the base policy for time steps between $(i-1)H+1$ and $i\cdot H\wedge T$ in Algorithm~\ref{alg:Alg2} using $\UCB_{i,t}(a,\beta)$ as a UCB index and $1-\beta^{1/3}$ as a threshold. Denote by $a_t^{\pi_i(\beta)}$ the pulled arm at time step $t$ by policy $\pi_i(\beta).$ Then, for $\beta^\dagger \in \mathcal{B}$, which is set later for a near-optimal policy, we have
\begin{equation}
\mathbb{E}[R^\pi(T)]=\mathbb{E}\left[\sum_{t=1}^T 1-\sum_{i=1}^{\lceil T/H\rceil}\sum_{t=(i-1)H+1}^{i\cdot H\wedge T}\mu_t(a_t^{\pi})\right] = \mathbb{E}[R_1^\pi(T)]+\mathbb{E}[R_2^\pi(T)].
\label{eq:regret_up_bd_bob}
\end{equation}

where 
$$
R_1^\pi(T) = \sum_{t=1}^T 1-\sum_{i=1}^{\lceil T/H\rceil}\sum_{t=(i-1)H+1}^{i\cdot H\wedge T}\mu_t(a_t^{\pi_i(\beta^\dagger)})
$$
and
$$
R_2^\pi(T) = \sum_{i=1}^{\lceil T/H\rceil}\sum_{t=(i-1)H+1}^{i\cdot H\wedge T}\mu_t(a_t^{\pi_i(\beta^\dagger)})-\sum_{i=1}^{\lceil T/H\rceil}\sum_{t=(i-1)H+1}^{i\cdot H\wedge T}\mu_t(a_t^{\pi}).
$$
Note that $R_1^\pi(T)$ accounts for the regret caused by the near-optimal base algorithm $\pi_i(\beta^\dagger)$'s against the optimal mean reward and $R_2^\pi(T)$ accounts for the regret caused by the master algorithm by selecting a base with $\beta\in\mathcal{B}$ at every block against the base with $\beta^\dagger$. In what follows, we provide upper bounds for each regret component. We first provide an upper bound for $\mathbb{E}[R_1^\pi(T)]$ by following the proof steps in Theorem~\ref{thm:R_upper_bd_e}. Then we provide an upper bound for $\mathbb{E}[R_2^\pi(T)]$. We set $H=\sqrt{T}$ and $\beta^\dagger$ to be a smallest value in $\mathcal{B}$ which is larger than $\max\{\rot,1/H^{3/2}\}$.

\paragraph{Upper bounding $\mathbb{E}[R_1^\pi(T)]$} We refer to the period starting from time step $(i-1) H+1$ to time step $i\cdot H\wedge T$ as the $i$-th \textit{block}.
For any $i\in\lceil T/H-1\rceil$, policy $\pi_i(\beta^\dagger)$ runs over $H$ time steps independent to other blocks so that  each block has the same expected regret and the last block has a smaller or equal expected regret than other blocks. Therefore, we focus on finding a bound on the regret from the first block equal to $\sum_{t=1}^{   H}1-\mu_t(a_t^{\pi_1(\beta^\dagger)})$. Denote by $\mathcal{A}(i)$ the set of sampled arms in the $i$-th block, which satisfies $|\mathcal{A}(i)|\le H$. For notation simplicity, we use $n_t(a)$ instead of $n_{1,t}(a)$ and $\tilde{\mu}^o_{t}(a)$ instead of $\tilde{\mu}^o_{1,t}(a)$.
 Let 
\begin{align*}
  \hat{\mu}_{t}(a)=\frac{\sum_{s=1}^{t-1}r_s\mathbbm{1}(a_s=a)}{n_t(a)} \text{ and } \bar{\mu}_t(a)=\frac{\sum_{s=1}^{t-1}\mu_s(a)\mathbbm{1}(a_s=a)}{n_t(a)}.  
\end{align*}
 Define the event $E_1=\{|\hat{\mu}_t(a)-\bar{\mu}_t(a)|\le \sqrt{10\log(H)/n_{t}(a)}, \hbox{ for all } t\in [H], a \in \mathcal{A}(1)\}$. Using Lemma~\ref{lem:chernoff_sub-gau}, 
 we have
 $$
 \mathbb{P}(E_1)\ge 1-2/H^{2}.
 $$ 
 We assume that $E_1$ holds true in what follows. Otherwise, the regret for the first block is negligible from $R^{\pi_1(\beta^\dagger)}(H)=o(H^2)$. The proof follows similar steps as in the proof of  Theorem~\ref{thm:R_upper_bd_e}. 
 
From $\pi_1(\beta^\dagger)$ we have $\delta={\beta^\dagger}^{1/3}$. Let $\Delta_1(a)=1-\mu_1(a)$. Then we define an arm $a$ to be a \emph{good} arm if $\Delta_1(a)\le \delta/2$, and, otherwise, $a$ is a \emph{bad} arm. In $\mathcal{A}$, let $\bar{a}_1,\bar{a}_2,\ldots$ be a sequence of arms, which have i.i.d. mean rewards with uniform distribution on $[0,1]$. Given a policy sampling arms in the sequence order, let $m^\ga$ be the number of selections of distinct good arms and $m^{\ba}_i$ be the number of consecutive selections of distinct bad arms between the $i-1$-st and $i$-th selection of a good arm among $m^\ga$ good arms. We refer to the period starting from sampling the $i-1$-st good arm before sampling the $i$-th good arm as the $i$-th \emph{episode}.
Observe that $m^\ba_1,\ldots, m^\ba_{m^\ga}$'s are i.i.d. random variables with geometric distribution with parameter $\delta/2$, conditional on the value of $m^\ga$. Therefore, $\mathbb{P}(m^\ba_i=k)=(1-\delta/2)^k\delta/2$, for $i = 1, \ldots, m^\ga$. Define $\tilde{m}$ to be the number of episodes by following policy $\pi$ over the horizon of $T$ time steps, $\tilde{m}^\ga$ to be the total number of selections of a good arm such that $\tilde{m}^\ga=\tilde{m}$ or $\tilde{m}^\ga=\tilde{m}-1$, and $\tilde{m}_i^\ba$ to be the number of selections of a bad arm in the $i$-th episode by the policy $\pi_1(\beta^\dagger)$ over the horizon $H$. Without loss of generality, we assume that the policy selects arms in the order of the sequence $\bar{a}_1,\bar{a}_2,\ldots.$ 

Under policy $\pi_1(\beta^\dagger)$, let $R_i^\ga$ be the regret (summation of mean reward gaps) contributed by pulling the good arm in the $i$-th episode and $R_{i,j}^\ba$ be the regret contributed by pulling the $j$-th bad arm in the $i$-th episode. Then let $R^{\pi_1(\beta^\dagger)}_{m^\ga}=\sum_{i=1}^{m^\ga}(R_i^\ga+\sum_{j\in[m_i^\ba]}R_{i,j}^\ba)$\footnote{Note that $R_{m^\ba}^{\pi_1(\beta^\dagger)}$ does not contain undefined $R_{i,j}^\ba$ such that $R_{i,j}^\ba$ when $m_i^\ba=0$.}, which is the regret over the period of $m^\ga$ episodes. 

 For $i\in [\tilde{m}^\ga]$, $j\in [\tilde{m}_i^\ba]$, let $a(i)$ be the sampled arm for the $i$-th good arm and $a(i,j)$ be a sampled arm for $j$-th bad arm in the $i$-th episode. Then $n_H(a(i))$ is the number of pulls of the good arm in the $i$-th episode and $n_H(a(i,j))$ is the number of pulls of the $j$-th bad arm in the $i$-th episode by the policy $\pi_1(\beta^\dagger)$ over the horizon $H$. Let $\tilde{a}$ be the last sampled arm over time horizon $H$ by $\pi_1(\beta^\dagger)$. We denote by $\hat{m}^\ga$ and $\hat{m}^\ba_i$ for $i\in[\tilde{m}]$ the number of arms excluding $\tilde{a}$ in the sampled $\tilde{m}^\ga$ number of good arms and $\tilde{m}^\ba_i$ number of bad arms for $i\in[\tilde{m}]$ as follows:  
 \begin{equation*}
    \hat{m}^\ga=
    \begin{cases}
    \tilde{m}^\ga-1 &\text{if  $\tilde{a}$ is a good arm}  \\
     \tilde{m}^\ga& \text{otherwise}
    \end{cases},
\end{equation*}
\begin{equation*}
        \hat{m}_i^\ba=\tilde{m}_i^\ba \text{ for $i\in[\tilde{m}-1]$},
    \text{ and }
    \hat{m}_{\tilde{m}}^\ba=
    \begin{cases}
    \tilde{m}_{\tilde{m}}^\ba  &
    \text{if $\tilde{a}$ is a good arm}  \\
     \tilde{m}_{\tilde{m}}^\ba-1 & \text{otherwise}.
    \end{cases}
\end{equation*}
These notations are defined only if they exist.\footnote{$n_H(a(i))$,  $n_H(a(i,j))$, and $\hat{m}_i^\ba$ are not defined for $i\in[0]$ or $j\in[0]$.}
 Excluding the last arm $\tilde{a}$ which the policy $\pi_1(\beta^\dagger)$ may stop to pull suddenly by reaching the horizon $H$, we provide lower bounds for the number of pulls for each arm, $n_H(a(i))$ for $i\in[\hat{m}^\ga]$ in the following lemma if they exist.

 \begin{lemma}
  Under $E_1$, given  $\hat{m}^\ga$, for any $i\in[\hat{m}^\ga]$ we have 
  $$
  n_H(a(i))\ge\delta/(2\beta^\dagger).
  $$
  \label{lem:n_low_bd_no_e}
 \end{lemma}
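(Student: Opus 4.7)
The plan is to port the argument of Lemma~\ref{lem:n_low_bd} to the block-level UCB index $\UCB_{1,t}(a,\beta^\dagger)$, replacing the true rotting rate $\rot$ by the base parameter $\beta^\dagger$ and the horizon $T$ by the block length $H$. The crucial observation is that $\beta^\dagger$ has been chosen as the smallest element of $\mathcal{B}$ satisfying $\beta^\dagger \ge \max\{\rot,1/H^{3/2}\}$, so in particular $\beta^\dagger \ge \rot \ge \rot_s$ for every $s$. This monotonicity is what makes the overestimating index still a valid upper confidence bound.

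Concretely, I would introduce the auxiliary estimator
\[
\hat{\mu}_{1,t}^o(a) \;=\; \frac{\sum_{s=1}^{t-1}(r_s + \rot_s\, n_{1,s}(a))\mathbbm{1}(a_s=a)}{n_{1,t}(a)},
\]
which uses the true rotting rates and therefore has expectation $\mu_1(a)$. Since $\beta^\dagger \ge \rot_s$ for all $s$, we get term-by-term $\tilde{\mu}_{1,t}^o(a,\beta^\dagger) \ge \hat{\mu}_{1,t}^o(a)$, so
\[
\UCB_{1,t}(a,\beta^\dagger) \;\ge\; \hat{\mu}_{1,t}^o(a) - \beta^\dagger n_{1,t}(a) + \sqrt{10\log(H)/n_{1,t}(a)}.
\]
Under $E_1$, the same Chernoff/sub-Gaussian concentration as in the proof of Theorem~\ref{thm:R_upper_bd_e} yields $|\hat{\mu}_{1,t}^o(a) - \mu_1(a)| \le \sqrt{10\log(H)/n_{1,t}(a)}$ uniformly in $t$ and $a \in \mathcal{A}(1)$.

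Fix $i \in [\hat{m}^\ga]$ and the corresponding good arm $a(i)$, so $\mu_1(a(i)) \ge 1 - \delta/2$ with $\delta = {\beta^\dagger}^{1/3}$. Assume for contradiction that at some $t \le H$ we have $n_{1,t}(a(i)) = \lfloor \delta/(2\beta^\dagger) \rfloor$. Combining the two bounds above gives
\[
\UCB_{1,t}(a(i),\beta^\dagger) \;\ge\; \mu_1(a(i)) - \beta^\dagger n_{1,t}(a(i)) \;\ge\; 1 - \tfrac{\delta}{2} - \tfrac{\delta}{2} \;=\; 1 - \delta,
\]
so the base policy's stopping test fails and the algorithm continues to pull $a(i)$. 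Consequently $n_H(a(i)) > \lfloor \delta/(2\beta^\dagger) \rfloor$, hence $n_H(a(i)) \ge \delta/(2\beta^\dagger)$. The exclusion of the last arm $\tilde{a}$ in the definition of $\hat{m}^\ga$ is exactly what prevents the degenerate case in which the block ends before the stopping condition has a chance to be evaluated.

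The main (and essentially only) delicate point is verifying the inequality $\tilde{\mu}_{1,t}^o(a,\beta^\dagger) \ge \hat{\mu}_{1,t}^o(a)$: it says that overestimating the rotting rate can only inflate the UCB, which is precisely what makes $\beta^\dagger \ge \rot$ a safe (albeit potentially wasteful) choice. Everything else is a direct transcription of Lemma~\ref{lem:n_low_bd}, with $T \mapsto H$ and $\rot \mapsto \beta^\dagger$.
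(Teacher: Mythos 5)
Your proposal is correct and follows essentially the same route as the paper's own proof: introduce the auxiliary estimator $\hat{\mu}_{1,t}^o(a)$ built from the true rotting rates, use $\beta^\dagger\ge\rot\ge\rot_s$ to deduce $\tilde{\mu}_{1,t}^o(a,\beta^\dagger)\ge\hat{\mu}_{1,t}^o(a)$, and then apply the concentration under $E_1$ so that the confidence radius cancels and the UCB index stays at least $\mu_1(a(i))-\beta^\dagger n_{1,t}(a(i))\ge 1-\delta$ whenever $n_{1,t}(a(i))\le\delta/(2\beta^\dagger)$. No substantive differences from the paper's argument.
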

 \begin{proof}
 Let $$\hat{\mu}_t^o(a)=\frac{\sum_{s=1}^{t-1}(r_s+\rot_sn_{s}(a))\mathbbm{1}(a_s=a)}{n_{t}(a)},$$ which satisfies $\tilde{\mu}_t^o(a,\beta^\dagger)\ge \hat{\mu}_t^o(a)$ from $\beta^\dagger\ge\rot$.  Under $E_1$, it is true that for all $t\in[H]$ and $a\in\mathcal{A}(1)$, 
 $$|\hat{\mu}^o_t(a)-\mu_1(a)|\le \sqrt{10\log(H)/n_{t}(a)}.$$
 Let $a(i)$ be a sampled good arm in the $i$-th episode. Suppose that $n_{t}(a(i))=\lfloor\delta/(2\beta^\dagger)\rfloor$ for some $t>0$, then we have \begin{align*}
    \tilde{\mu}_t^o(a(i),\beta^{\dagger})-\beta^\dagger n_{t}(a(i))+\sqrt{10\log(
    H)/n_H(a(i))}&\ge\hat{\mu}_t^o(a(i))-\beta^\dagger n_{t}(a(i))+\sqrt{10\log(H)/n_H(a(i))}\cr &\ge \mu_1(a(i))-\delta/2\cr &\ge 1-\delta,
\end{align*}
where the second inequality is obtained from $E_1$ and  $n_{t}(a(i))\le \delta/(2\beta^\dagger)$, and the third inequality is from $\mu_1(a(i))\ge 1- \delta/2$. Therefore, policy $\pi_1(\beta^\dagger)$ must pull arm $a$ more times than $\lfloor\delta/(2\beta^\dagger)\rfloor$, which implies $n_H(a(i))\ge\delta/(2\beta^\dagger)$  for any $i\in[\hat{m}^\ga]$. 
 \end{proof}
\textbf{We first consider the case when $\rot=\omega(1/H^{3/2})$.} Then we have that $\beta^\dagger$ is the smallest value in $\mathcal{B}$ which exceeds $\rot$ such that $\rot\le \beta^\dagger\le 2\rot$.
For getting $R^{\pi_1(\beta^\dagger)}_{m^\ga}$, here we define how the policy $\pi_1(\beta^\dagger)$ works after time $H$ such that it pulls $\lceil\delta/(2\beta^\dagger)\rceil$ times a good arm and $0$ time a bad arm. We note that defining how $\pi_1(\beta^\dagger)$ works after $H$ is only for the proof to get a regret bound over time horizon $H$. For the last arm $\tilde{a}$ over the horizon $H$, it pulls the arm  up to $\max\{\lceil\delta/(2\beta^\dagger)\rceil,n_H(\tilde{a})\}$ times if $\tilde{a}$ is a good arm. We define that $n_H(a(i))=0$ for $i\in[m^\ga]/[\tilde{m}^\ga]$ for convenience.
For $i\in[m^\ga]$ and $j\in[m_i^\ba]$, let $n_i^\ga$ and $n_{i,j}^\ba$ be the number of pulls of the good arm in the $i$-th episode and the $j$-th bad arm in the $i$-th episode by the policy, respectively. Here we define $n_i^\ga$'s and $n_{i,j}^\ba$'s as follows:

If $\tilde{a}$ is a good arm, then
\begin{equation*}
    n_i^\ga=
    \begin{cases}
    n_H(a(i)) &\text{for } i\in[\tilde{m}^\ga-1]  \\
     \max\{\lceil\delta/(2\beta^\dagger)\rceil,n_H(a(i))\}& \text{for } i\in[m^\ga]/[\tilde{m}^\ga-1]
    \end{cases}, 
    \end{equation*}
\begin{equation*}
        n_{i,j}^\ba=
    \begin{cases}
    n_H(a(i,j))&\text{for } i\in[\tilde{m}^\ga],j\in[\tilde{m}_i^\ba]\\
    0 &\text{for } i\in[m^\ga]/[\tilde{m}^\ga],j\in[m^\ba_i]/[\tilde{m}_i^\ba].
    \end{cases}
\end{equation*}

Otherwise,
\begin{equation*}
    n_i^\ga=
    \begin{cases}
    n_H(a(i)) &\text{for } i\in[\tilde{m}^\ga]  \\
     \lceil\delta/(2\beta^\dagger)\rceil& \text{for } i\in[m^\ga]/[\tilde{m}^\ga]
    \end{cases}, 
    n_{i,j}^\ba=
    \begin{cases}
    n_H(a(i,j))&\text{for } i\in[\tilde{m}^\ga],j\in[\tilde{m}_i^\ba]\\
    0 &\text{for } i\in[m^\ga]/[\tilde{m}^\ga-1],j\in[m^\ba_i]/[\tilde{m}_i^\ba].
    \end{cases}
\end{equation*}
We note that $n_{i}^\ga$'s and $n_{i,j}^\ba$'s are defined only if they exist.\footnote{$n_i^\ga$ and $n_{i,j}^\ba$ are not defined for $i\in[0]$ or $j\in[0]$.}

We provide $m^{\ga}$ such that $R^{\pi_1(\beta^\dagger)}(H)\le R^{\pi_1(\beta^\dagger)}_{m^\ga}$ in the following lemma.

\begin{lemma}
Under $E_1$, when $m^\ga=\lceil 2H{\beta^\dagger}^{2/3}\rceil$, we have 
$$
R^{\pi_1(\beta^\dagger)}(H)\le R^{\pi_1(\beta^\dagger)}_{m^\ga}.
$$
\label{lem:regret_bd_prob_no_e}
\end{lemma}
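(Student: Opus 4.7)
The plan is to mirror the argument used for Lemma~\ref{lem:regret_bd_prob} in the proof of Theorem~\ref{thm:R_upper_bd_e}, now at the block scale with horizon $H$ in place of $T$ and with the base parameter $\beta^\dagger$ playing the role of $\rot$. The target inequality is purely a bookkeeping claim: if the total number of arm pulls executed across the $m^\ga$ episodes is at least $H$, then the regret accumulated over those $m^\ga$ episodes upper bounds the regret of policy $\pi_1(\beta^\dagger)$ over the horizon $H$, because each time step in $[1,H]$ contributes to one of the episodic regret terms counted in $R^{\pi_1(\beta^\dagger)}_{m^\ga}$.

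First I would invoke Lemma~\ref{lem:n_low_bd_no_e}, which, under $E_1$, guarantees $n_H(a(i))\ge \delta/(2\beta^\dagger)$ for each of the $\hat{m}^\ga$ good-arm selections that are not the last-sampled arm. Then I would use the way $n_i^\ga$ has been extended for $i\in[m^\ga]\setminus[\hat{m}^\ga]$: the last-arm case is defined as $\max\{\lceil \delta/(2\beta^\dagger)\rceil, n_H(a(i))\}$ when $\tilde a$ is good, and the ``post-$H$'' good selections are defined to contribute $\lceil\delta/(2\beta^\dagger)\rceil$ each. Hence for every $i\in[m^\ga]$ we have $n_i^\ga\ge \delta/(2\beta^\dagger)$, and the bad-arm counts $n_{i,j}^\ba$ are non-negative.

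Next, since $\delta={\beta^\dagger}^{1/3}$ in the regime $\rot=\omega(1/H^{3/2})$ (so $\beta^\dagger\ge\rot$ and $\delta^3=\beta^\dagger$), plugging in $m^\ga=\lceil 2H{\beta^\dagger}^{2/3}\rceil$ yields
\[
\sum_{i\in[m^\ga]}\Bigl(n_i^\ga+\sum_{j\in[m_i^\ba]}n_{i,j}^\ba\Bigr)\ \ge\ m^\ga\cdot\frac{\delta}{2\beta^\dagger}\ \ge\ 2H{\beta^\dagger}^{2/3}\cdot\frac{{\beta^\dagger}^{1/3}}{2\beta^\dagger}\ =\ H.
\]
Therefore the $m^\ga$ episodes (with the post-$H$ extensions of the policy taken into account) cover at least $H$ time steps, so every time step actually played in $[1,H]$ is counted in $R^{\pi_1(\beta^\dagger)}_{m^\ga}$; since the per-step contributions to both quantities coincide on $[1,H]$ and the additional post-$H$ contributions to $R^{\pi_1(\beta^\dagger)}_{m^\ga}$ are non-negative, we obtain $R^{\pi_1(\beta^\dagger)}(H)\le R^{\pi_1(\beta^\dagger)}_{m^\ga}$.

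The only subtle point, and the one I would be most careful about, is the boundary handling of the last sampled arm $\tilde a$: the lower bound from Lemma~\ref{lem:n_low_bd_no_e} applies to $\hat m^\ga$ rather than $\tilde m^\ga$, but the definition of $n_i^\ga$ compensates by forcing $n_i^\ga\ge\lceil\delta/(2\beta^\dagger)\rceil$ either through the $\max$ with $n_H(a(i))$ when $\tilde a$ is good, or through the post-$H$ extension otherwise. I would write out this case split explicitly to make clear that in both cases every term $n_i^\ga$ in the sum meets the required lower bound, which closes the inequality.
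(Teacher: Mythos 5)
Your proposal is correct and follows essentially the same route as the paper: invoke Lemma~\ref{lem:n_low_bd_no_e} together with the post-$H$ extension of the pull counts to get $n_i^\ga\ge\delta/(2\beta^\dagger)$ for every $i\in[m^\ga]$, then compute $m^\ga\cdot\delta/(2\beta^\dagger)\ge H$ with $\delta={\beta^\dagger}^{1/3}$, which is exactly the paper's one-line argument. Your explicit case split on the last sampled arm $\tilde a$ only spells out what the paper leaves implicit in the definitions of $n_i^\ga$.
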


\begin{proof}
 From Lemma~\ref{lem:n_low_bd_no_e}, with $\delta={\beta^\dagger}^{1/3}$ we have
 $$\sum_{i\in[m^\ga]}\left(n_i^\ga+\sum_{j\in[m_i^\ba]}n_{i,j}^\ba\right)\ge m^\ga \frac{\delta}{2\beta^\dagger}\ge H,$$
which implies that $R^{\pi_1(\beta^\dagger)}(H)\le R^{\pi_1(\beta^\dagger)}_{m^\ga}.$
\end{proof}
From the result of Lemma~\ref{lem:regret_bd_prob_no_e}, we set $m^\ga=\lceil 2H{\beta^\dagger}^{2/3}\rceil$.
  For getting a bound for $\mathbb{E}[R^{\pi_1(\beta^\dagger)}_{m^\ga}]$, we provide bounds for $\mathbb{E}[R^\ga_i]$ and $\mathbb{E}[R^\ba_{i,j}]$ in the following lemma.
  \begin{lemma} Under $E_1$ and policy $\pi_1(\beta^\dagger)$,
   for any $i\in[m^\ga]$ and $j\in[m^\ba_i]$,  we have
\begin{align*}
    \mathbb{E}[R_i^\ga]
    =\tilde{O}\left(\frac{\delta}{\rot^{2/3}}+\frac{\delta^2}{\rot}+\frac{1}{\rot^{1/3}}\right),
\end{align*}
and 
\begin{align*}
    \mathbb{E}[R_{i,j}^\ba]= \tilde{O}\left(1+\frac{\delta}{\rot^{1/3}}+\frac{\delta^2}{\rot^{2/3}}\right).
\end{align*}\label{lem:R_good_bad_bd_no_e}
  \end{lemma}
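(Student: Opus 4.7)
The plan is to mirror the proof of Lemma~\ref{lem:R_good_bad_bd}, with $\rot$ replaced by $\beta^\dagger$ and $T$ replaced by $H$ in the estimator/UCB definitions, and then translate the final bounds back into $\rot$-terms using the sandwich relation $\rot \le \beta^\dagger \le 2\rot$ (which holds by the construction of $\beta^\dagger$ as the smallest dyadic value in $\mathcal{B}$ exceeding $\max\{\rot,1/H^{3/2}\}$, in the regime $\rot=\omega(1/H^{3/2})$).

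First I would derive the deterministic UCB upper bound under $E_1$. By expanding $\tilde{\mu}^o_{1,t}(a,\beta^\dagger)$ and using the fact that the sum $\sum_{s<t,a_s=a}n_{1,s}(a)$ over pulls is at most $n_{1,t}(a)(n_{1,t}(a)+1)/2$, one obtains $\tilde{\mu}^o_{1,t}(a,\beta^\dagger)-\beta^\dagger n_{1,t}(a) \le \hat{\mu}_t(a)-(\beta^\dagger/2)(n_{1,t}(a)-1)$. Under $E_1$ we have $\hat{\mu}_t(a)\le \bar{\mu}_t(a)+\sqrt{10\log(H)/n_{1,t}(a)} \le \mu_1(a)+\sqrt{10\log(H)/n_{1,t}(a)}$, giving
\[
\UCB_{1,t}(a,\beta^\dagger)\le \mu_1(a)-\frac{\beta^\dagger}{2}(n_{1,t}(a)-1)+2\sqrt{\frac{10\log H}{n_{1,t}(a)}}.
\]

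For the good arm $a(i)$ in the $i$-th episode, $\mu_1(a(i))\le 1$, so to force $\UCB_{1,t}(a(i),\beta^\dagger)<1-\delta$ it suffices to take $n$ of the form $n_1+n_2$ where $n_1=2(\delta+{\beta^\dagger}^{1/3})/\beta^\dagger+1$ handles the rotting term and $n_2=C\log(H)/{\beta^\dagger}^{2/3}$ handles the confidence term, yielding $n_i^\ga=\tilde O((\delta+{\beta^\dagger}^{1/3})/\beta^\dagger)=\tilde O(1/\rot^{2/3})$ for $i\in[\tilde m^\ga]$, and the same order on $[m^\ga]\setminus[\tilde m^\ga]$ by construction. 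Using that $\Delta_1(a(i))$ is uniformly distributed on $[0,\delta/2]$ and bounding the episode regret by $\Delta_1(a(i))n_i^\ga+n_i^\ga(n_i^\ga-1)\rot/2$, the expected integral reproduces the claimed bound $\tilde O(\delta/\rot^{2/3}+\delta^2/\rot+1/\rot^{1/3})$.

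For the $j$-th bad arm in episode $i$, I would split on $\Delta_1(a(i,j))$: in the near-good regime $\delta/2<\Delta_1(a(i,j))\le \delta+{\beta^\dagger}^{1/3}$ the same pull-count bound $\tilde O(1/\rot^{2/3})$ applies; in the far regime $\Delta_1(a(i,j))>\delta+{\beta^\dagger}^{1/3}$ the confidence term is dominant and solving $\mu_1(a(i,j))+2\sqrt{10\log(H)/n}<1-\delta$ gives $n_{i,j}^\ba=\tilde O(1/(\Delta_1(a(i,j))-\delta)^2)$. Using that $\Delta_1(a(i,j))$ is uniform on $[\delta/2,1]$ (conditional on being bad), the expected bad-arm regret $\mathbb{E}[\Delta_1(a(i,j))n_{i,j}^\ba+n_{i,j}^\ba(n_{i,j}^\ba-1)\rot/2]$ splits into the same two integrals as in Lemma~\ref{lem:R_good_bad_bd}, evaluating to $\tilde O(1+\delta/\rot^{1/3}+\delta^2/\rot^{2/3})$.

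The main thing to check carefully is that the substitution $\rot\leftrightarrow\beta^\dagger$ in the deterministic parts of the UCB does not degrade the rates: since $\beta^\dagger=\Theta(\rot)$, terms of the form $1/{\beta^\dagger}^{2/3}$, $\rot/{\beta^\dagger}^2$, etc. collapse to the $\rot$-scalings stated in the lemma, and the $\log H\le \log T$ replacement is absorbed into the $\tilde O(\cdot)$. No other step differs from Lemma~\ref{lem:R_good_bad_bd}, so there is no genuinely new obstacle beyond bookkeeping.
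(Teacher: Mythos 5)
Your proposal follows essentially the same route as the paper's proof: the same deterministic UCB upper bound under $E_1$, the same $n_1+n_2$ split for the required pull count of a good arm, the same two-regime case analysis for bad arms with the $\tilde O(1/(\Delta_1(a(i,j))-\delta)^2)$ bound in the far regime, and the same conversion of $\beta^\dagger$-scalings back to $\rot$-scalings via $\rot\le\beta^\dagger\le 2\rot$. No substantive difference or gap.
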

  \begin{proof}
First we provide a bound for $R^\ga_i$ using an upper bound of $n^\ga_i$. Recall that $a(i)$ is the sampled arm for the $i$-th good arm. We have 
\begin{align*}
     \tilde{\mu}_t^o(a(i),\beta^\dagger)-\beta^\dagger n_t(a(i))&=\hat{\mu}_t(a(i))+\beta^\dagger\frac{\sum_{s=1}^{t-1}n_s(a(i))\mathbbm{1}(a_s=a(i))}{n_t(a(i))}-\beta^\dagger n_t(a(i))\cr &
     \le \hat{\mu}_t(a(i))+\beta^\dagger \frac{n_t(a(i))+1}{2}-\beta^\dagger n_t(a(i))\cr&
     =\hat{\mu}_t(a(i))-\beta^\dagger \frac{n_t(a(i))-1}{2}.
   \end{align*}
   Since, under $E_1$,
   \begin{align*}
       \hat{\mu}_t(a(i))-(\beta^{\dagger}/2)(n_t(a(i))-1)&\le \bar{\mu}_t(a(i))+\sqrt{10\log(H)/n_t(a(i))}-(\beta^{\dagger}/2)(n_t(a(i))-1)\cr &\le  1+\sqrt{10\log(H)/n_t(a(i))}-(\beta^{\dagger}/2)(n_t(a(i))-1),
   \end{align*}
   for $i\in[\tilde{m}^\ga]$, from the policy $\pi$, we need to get $n$ such that
\begin{align}
    1-\frac{\beta^{\dagger}}{2}(n-1) +2\sqrt{10\log(H)/n}< 1-\delta,\label{eq:n_good_con_up_no_e}
\end{align} in which $n+1$ is an upper bound for $n^\ga_i$. Let $n_1=2(\delta+{\beta^{\dagger}}^{1/3})/\beta^{\dagger}+1$ and $n_2=C\log(H)/{\beta^{\dagger}}^{2/3}$ with some large enough constant $C>0$. Then $n=n_1+n_2$ satisfies \eqref{eq:n_good_con_up_no_e} because $1-\beta^{\dagger} (n_1-1)/2+2\sqrt{10\log(H)/n_2}< 1-\delta.$ Therefore, for all $i\in[\tilde{m}^\ga]$ we have 
$n_i^\ga=\tilde{O}((\delta+{\beta^{\dagger}}^{1/3})/\beta^{\dagger}).$
Then with the fact that $n_i^\ga=\lceil\delta/(2\beta^{\dagger})\rceil$ for $i\in[m^\ga]/[\tilde{m}^\ga]$ if they exist and $\beta^{\dagger}=\Theta(\rot)$, for any $i\in[m^\ga]$ we have $$n_i^\ga=\tilde{O}((\delta+{\beta^{\dagger}}^{1/3})/\beta^{\dagger})=\tilde{O}((\delta+\rot^{1/3})/\rot).$$  Then for any $i\in[m^\ga]$ we have 
\begin{align*}
    \mathbb{E}[R_i^\ga]&\le\mathbb{E}
    \left[\Delta_1(a(i))n_i^\ga+\frac{n_i^\ga(n_i^\ga-1)}{2}\rot\right]\cr
    &=\tilde{O}\left(\frac{2}{\delta}\int_0^{\delta/2}\frac{\delta+\rot^{1/3}}{\rot}x+\left(\frac{\delta+\rot^{1/3}}{\rot}\right)^2\rot dx\right)\cr
    &=\tilde{O}\left(\frac{\delta}{\rot^{2/3}}+\frac{\delta^2}{\rot}+\frac{1}{\rot^{1/3}}\right),
\end{align*}
where the first equality is obtained from the fact that $\Delta_1(a(i))$'s are i.i.d. random variables with uniform distribution on $[0,\delta/2]$ and $n_i^\ga=\tilde{O}((\delta+\rot^{1/3})/\rot)$.

Now we provide an upper bound of $n^\ba_{i,j}$ to get a bound of $R_{i,j}^\ba$ for $i\in[m^\ga],j\in[m_i^\ba]$. Let $a(i,j)$ be a sampled arm for $j$-th bad arm in the $i$-th episode. When $\delta/2<\Delta_1(a(i,j))\le \delta+\rot^{1/3}$, as in the case of the good arm, $n=2(\delta+{\beta^{\dagger}}^{1/3})/\beta^{\dagger} +1+C_1\log(H)/{\beta^{\dagger}}^{2/3}$ for some large enough constant $C_1>0$, satisfies $\eqref{eq:n_good_con_up_no_e}$ so that $n_{i,j}^\ba=\tilde{O}((\delta+\rot^{1/3})/\rot)$ for all $i\in[\tilde{m}], j\in[\tilde{m}_i^\ba]$. When $\delta+\rot^{1/3}<\Delta_1(a(i,j))\le 1$, since under $E_1$
\begin{align*}
    \tilde{\mu}_t^o(a(i,j))-\beta^{\dagger} n_t(a(i,j))&=\hat{\mu}_t(a(i,j))+\beta^{\dagger}\frac{\sum_{s=1}^{t-1}n_s(a(i,j))\mathbbm{1}(a_s=a(i,j))}{n_t(a(i,j))}-\beta^{\dagger} n_t(a(i,j))\cr &
     \le \hat{\mu}_t(a(i,j))+\beta^{\dagger} \frac{n_t(a(i,j))+1}{2}-\beta^{\dagger} n_t(a(i,j))\cr&
     =\hat{\mu}_t(a(i,j))-\beta^{\dagger} \frac{n_t(a(i,j))-1}{2}\cr &\le \bar{\mu}_t(a(i,j))+\sqrt{10\log(H)/n_t(a(i,j))}-(\beta^{\dagger}/2)(n_t(a(i,j))-1)\cr &\le  \mu_1(a(i,j))+\sqrt{10\log(H)/n_t(a(i,j))}-(\beta^{\dagger}/2)(n_t(a(i,j))-1),
\end{align*}
from the policy $\pi$ we need to get $n\ge 1$ such that
\begin{align}
    \mu_1(a(i,j))-\frac{\beta^\dagger}{2}(n-1)+2\sqrt{10\log(H)/n}< 1-\delta, \label{eq:n_bad_con_up_no_e}
\end{align}
in which $n+1$ is an upper bound of $n_{i,j}^\ba$ for $i\in[\tilde{m}], j\in[\tilde{m}_i^\ba]$. From a sufficient condition for \eqref{eq:n_bad_con_up_no_e} to hold such that $$\mu_1(a(i,j))+2\sqrt{10\log(H)/n}< 1-\delta,$$ we can find that $n=C_2\log(H)/(\Delta_1(a(i,j))-\delta)^2$
for some large constant $C_2>0$ satisfies \eqref{eq:n_bad_con_up_no_e}. Therefore, when $\delta+\rot^{1/3}<\Delta_1(a(i,j))\le 1$, for all $i\in[\tilde{m}],j\in[\tilde{m}_i^\ba]$ we have $n_{i,j}^\ba=\tilde{O}(1/(\Delta_1(a(i,j))-\delta)^2).$ Then with the fact that $n_{i,j}^\ba=0$ for $i\in[m^\ga]/[\tilde{m}^\ga]$, $j\in[m_i^\ba]/[\tilde{m}_i^\ba]$ if they exist, for any $i\in[m^\ga]$ and $j\in[m^\ba_i]$, we have
\begin{equation*}
    n_{i,j}^\ba=
    \begin{cases}
    \tilde{O}((\delta+\rot^{1/3})/\rot) & \text{if $\delta/2<\Delta_1(a(i,j))\le\delta+\rot^{1/3}$}\\
    \tilde{O}(1/(\Delta_1(a(i,j))-\delta)^2) & \text{if $\delta+\rot^{1/3}<\Delta_1(a(i,j))\le 1$}
    \end{cases}
\end{equation*}
Then for any $i\in[m^\ga]$, $j\in[m^\ba_i]$, we have 
\begin{align*}
    \mathbb{E}[R_{i,j}^\ba]&\le\mathbb{E}
    \left[\Delta_1(a(i,j))n_{i,j}^\ba+\frac{n_{i,j}^\ba(n_{i,j}^\ba-1)}{2}\rot\right]\cr
    &=\tilde{O}\left(\frac{1}{1-\delta/2}\left(\int_{\delta/2}^{\delta+\rot^{1/3}}\frac{\delta+\rot^{1/3}}{\rot}x+\left(\frac{\delta+\rot^{1/3}}{\rot}\right)^2\rot dx\right . \right . + \left . \left .\int_{\delta+\rot^{1/3}}^1 \frac{1}{(x-\delta)^2}x+\frac{1}{(x-\delta)^4}\rot dx \right)\right)\cr
    &= \tilde{O}\left(1+\frac{\delta}{\rot^{1/3}}+\frac{\delta^2}{\rot^{2/3}}\right).
\end{align*}
  \end{proof}
 
Recall that $R^{\pi_1(\beta^\dagger)}_{m^\ga}=\sum_{i=1}^{m^\ga}(R_i^\ga+\sum_{j\in[m_i^\ba]}R_{i,j}^\ba)$. With $\beta^\dagger=\Theta(\rot),$ $\delta=\Theta(\rot^{1/3})$, and $m^\ga=\lceil 2H{\beta^\dagger}^{2/3}\rceil$, from Lemmas~\ref{lem:regret_bd_prob_no_e} and \ref{lem:R_good_bad_bd_no_e}, and the fact that $m_i^\ba$'s are i.i.d. random variables with geometric distribution with $\mathbb{E}[m_i^\ba]=2/\delta-1$, we have
\begin{align}
\mathbb{E}[R^{\pi_1(\beta^\dagger)}(H)]&=O(\mathbb{E}[R^{\pi_1(\beta^\dagger)}_{m^\ga}])\cr &=O\left(\mathbb{E}\left[\sum_{i=1}^{m^\ga}\left(R^\ga_i+\sum_{j\in[m^\ba_i]}R^\ba_{i,j}\right)\right]\right)\cr
  &= \tilde{O}\left(H\rot^{2/3}\left(\left(\frac{\delta}{\rot^{2/3}}+\frac{\delta^2}{\rot}+\frac{1}{\rot^{1/3}}\right)+\frac{1}{\delta}\left(1+\frac{\delta}{\rot^{1/3}}\right)\right)\right)\cr
  &=\tilde{O}\left(H\rot^{2/3}\left(\frac{\delta}{\rot^{2/3}}+\frac{\delta^2}{\rot}+\frac{1}{\delta}+\frac{1}{\rot^{1/3}}\right)\right)\cr
  &=\tilde{O}\left(H\rot^{1/3}\right).\label{eq:regret_bd_large_no_e}
\end{align}

\textbf{Now we consider the case when $\rot=O(1/H^{3/2}).$} We have set $\beta^\dagger$ as the smallest element in $\mathcal{B}$ that exceeds  $\max\{\rot,1/H^{3/2}\}$; hence we have $ \beta^\dagger=\Theta(1/H^{3/2})$. 
For getting $R^{\pi_1(\beta^\dagger)}_{m^\ga}$, here we define how the policy $\pi_1(\beta^\dagger)$ works after $H$ time steps such that it pulls $H$ times a good arm and zero a bad arm. For the last arm $\tilde{a}$ over the horizon $H$, it pulls the arm up to $H$ times if $\tilde{a}$ is a good arm. With slight abuse of notation,
for $i\in[m^\ga]$ and $j\in[m_i^\ba]$, let $n_i^\ga$ and $n_{i,j}^\ba$ be the number of pulls of the good arm in the $i$-th episode and the $j$-th bad arm in the $i$-th episode by the policy, respectively. Here we define $n_i^\ga$'s and $n_{i,j}^\ba$'s as follows: 

If $\tilde{a}$ is a good arm, then
\begin{equation*}
    n_i^\ga=
    \begin{cases}
    n_H(a(i)) &\text{for } i\in[\tilde{m}^\ga-1]  \\
     H & \text{for } i\in[m^\ga]/[\tilde{m}^\ga-1]
    \end{cases}, 
    n_{i,j}^\ba=
    \begin{cases}
    n_H(a(i,j))&\text{for } i\in[\tilde{m}^\ga],j\in[\tilde{m}_i^\ba]\\
    0 &\text{for } i\in[m^\ga]/[\tilde{m}^\ga],j\in[m^\ba_i]/[\tilde{m}_i^\ba].
    \end{cases}
\end{equation*}
Otherwise,
\begin{equation*}
    n_i^\ga=
    \begin{cases}
    n_H(a(i)) &\text{for } i\in[\tilde{m}^\ga]  \\
     H & \text{for } i\in[m^\ga]/[\tilde{m}^\ga]
    \end{cases}, 
    n_{i,j}^\ba=
    \begin{cases}
    n_H(a(i,j))&\text{for } i\in[\tilde{m}^\ga],j\in[\tilde{m}_i^\ba]\\
    0 &\text{for } i\in[m^\ga]/[\tilde{m}^\ga-1],j\in[m^\ba_i]/[\tilde{m}_i^\ba].
    \end{cases}
\end{equation*}


From Lemma~\ref{lem:n_low_bd_no_e}, we observe that $n_i^\ga\ge \min\{\delta/(2\beta^{\dagger}),H\}$ for $i\in[m^\ga]$.
Then under $E_1$, if $m^\ga=C_3$ for some large enough constant $C_3>0$, then with $\delta={\beta^\dagger}^{1/3}$ and $\beta^\dagger=\Theta( 1/H^{3/2})$, we have 
$$
\sum_{i\in[m^\ga]}n_i^\ga\ge H,
$$ 
which implies $R^{\pi_1(\beta^\dagger)}(H)\le R^{\pi_1(\beta^\dagger)}_{m^\ga}$.
Therefore, we set $m^\ga=C_3$.
  For getting a bound for $\mathbb{E}[R^{\pi_1(\beta^\dagger)}_{m^\ga}]$, we provide bounds for $\mathbb{E}[R^\ga_i]$ and $\mathbb{E}[R^\ba_{i,j}]$ in the following lemma.

\begin{lemma}
Under $E_1$ and policy $\pi_1(\beta^\dagger)$, for any $i\in[m^\ga]$ and $j\in[m^\ba_i]$, we have
\begin{align*}
    \mathbb{E}[R_i^\ga]
    =O\left(\delta H+H^2\rot\right),
\end{align*}
and 
\begin{align*}
    \mathbb{E}[R_{i,j}^\ba]=\tilde{O}\left(H\delta^2+\sqrt{H}\delta+1\right).
\end{align*}\label{lem:R_good_bad_bd_small_no_e}
  \end{lemma}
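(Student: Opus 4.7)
The plan is to mirror, nearly verbatim, the argument used in Lemma~\ref{lem:R_good_bad_bd_small_e} (the small rotting case of Theorem~\ref{thm:R_upper_bd_e}), with $T$ replaced by $H$ throughout and with the slight modification that the UCB uses $\beta^\dagger$ (which lies in $\mathcal{B}$ and satisfies $\beta^\dagger \ge \rot$) rather than $\rot$ itself, while the actual rotting of the mean reward in the regret sum is still driven by $\rot$. The crucial pivotal identity is that, by telescoping,
\[
\tilde{\mu}_t^o(a,\beta^\dagger) - \beta^\dagger n_t(a) = \hat{\mu}_t(a) - \beta^\dagger\frac{n_t(a)+1}{2} \le \hat{\mu}_t(a) - \frac{\beta^\dagger}{2}(n_t(a)-1),
\]
so under $E_1$ and $\bar{\mu}_t(a) \le \mu_1(a)$, the UCB index is at most $\mu_1(a) + 2\sqrt{10\log(H)/n_t(a)} - (\beta^\dagger/2)(n_t(a)-1)$.

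First, for the good arm in episode $i$: from the construction in the small-rotting case above the lemma, $n_i^\ga \le H$ for every $i\in[m^\ga]$. Since pulling arm $a(i)$ a total of $n_i^\ga$ times contributes regret bounded by $\Delta_1(a(i)) n_i^\ga + \tfrac{n_i^\ga(n_i^\ga-1)}{2}\rot$ (initial gap plus accumulated rotting), taking expectation with $\Delta_1(a(i))$ uniform on $[0,\delta/2]$ yields $\mathbb{E}[R_i^\ga] = O(\delta H + H^2\rot)$, which is exactly the claimed bound.

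Second, for the $j$-th bad arm in episode $i$, split on the size of $\Delta_1(a(i,j))$. If $\delta/2 < \Delta_1(a(i,j)) \le \delta + 1/\sqrt{H}$, use the crude $n_{i,j}^\ba \le H$. If $\delta + 1/\sqrt{H} < \Delta_1(a(i,j)) \le 1$, drop the (negative) $-(\beta^\dagger/2)(n_t(a)-1)$ term and solve
\[
\mu_1(a(i,j)) + 2\sqrt{10\log(H)/n} < 1 - \delta
\]
to obtain $n_{i,j}^\ba = \tilde{O}(1/(\Delta_1(a(i,j)) - \delta)^2)$, exactly as in the small-rotting case of Lemma~\ref{lem:R_good_bad_bd_small_e}. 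Using $\mathbb{E}[R_{i,j}^\ba] \le \mathbb{E}[\Delta_1(a(i,j)) n_{i,j}^\ba + \tfrac{n_{i,j}^\ba(n_{i,j}^\ba-1)}{2}\rot]$, conditioning on the uniform distribution of $\Delta_1(a(i,j))$ on $[\delta/2, 1]$, and splitting the integral into the two ranges above produces
\[
\mathbb{E}[R_{i,j}^\ba] = \tilde{O}\!\left(\int_{\delta/2}^{\delta + 1/\sqrt{H}} (Hx + H^2\rot)\,dx + \int_{\delta + 1/\sqrt{H}}^{1}\left(\frac{x}{(x-\delta)^2} + \frac{\rot}{(x-\delta)^4}\right)dx\right) = \tilde{O}(H\delta^2 + \sqrt{H}\delta + 1),
\]
using $\rot = O(1/H^{3/2})$ to control the $\rot$-term in the tail integral.

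I expect the proof to go through without surprises because the only structural change from Lemma~\ref{lem:R_good_bad_bd_small_e} is the substitution $T\mapsto H$ and the replacement $\rot\mapsto\beta^\dagger$ inside the UCB index; the latter is harmless since $\beta^\dagger \ge \rot$ merely strengthens the inequality used to justify the lower bound on $n_i^\ga$ (Lemma~\ref{lem:n_low_bd_no_e}) and at the same time the dropped negative term in the bad-arm analysis only makes the bound on $n_{i,j}^\ba$ larger, hence valid. The only mild bookkeeping point is to ensure the boundary $\Delta_1(a(i,j)) = \delta + 1/\sqrt{H}$ is chosen to match $\delta = \Theta(1/\sqrt{H})$ under $\beta^\dagger = \Theta(1/H^{3/2})$, so that the second integral is finite and contributes $\tilde{O}(1)$.
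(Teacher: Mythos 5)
Your proposal is correct and follows essentially the same route as the paper's proof: bound $n_i^\ga$ by $H$ for the good arm, split the bad arms at $\Delta_1(a(i,j))=\delta+1/\sqrt{H}$ using the crude bound $n_{i,j}^\ba\le H$ on the near-good range and the sufficient condition $\mu_1(a(i,j))+2\sqrt{10\log(H)/n}<1-\delta$ (dropping the negative $\beta^\dagger$ term) on the rest, then integrate against the uniform density of $\Delta_1$. The telescoping identity for $\tilde{\mu}_t^o(a,\beta^\dagger)-\beta^\dagger n_t(a)$ and the final integral computation match the paper's argument step for step.
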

  \begin{proof}
   First we provide a bound for $R^\ga_i$ using an upper bound of $n^\ga_i$. With the definition of $n_i^\ga=H$ for $i\in[m^\ga]/[\tilde{m}^\ga]$, for any $i\in[m^\ga]$ we have $$n_i^\ga\le H.$$ Recall that $a(i)$ is the sampled arm for the $i$-th good arm. Then, for any $i\in[m^\ga]$, we have 
\begin{align*}
    \mathbb{E}[R_i^\ga]&\le\mathbb{E}
    \left[\Delta_1(a(i))n_i^\ga+\frac{n_i^\ga(n_i^\ga-1)}{2}\rot\right]\cr
    &=O\left(\delta H+H^2\rot \right),
\end{align*}
where the first equality is obtained from the fact that $\Delta_1(a(i))$'s are i.i.d. random variables with uniform distribution on $[0,\delta/2]$ and $n_i^\ga\le H$.

Now we provide an upper bound of $n^\ba_{i,j}$ to get a bound of $R_{i,j}^\ba$ for $i\in[m^\ga],j\in[m_i^\ba]$. Let $a(i,j)$ be a sampled arm for $j$-th bad arm in the $i$-th episode. When $\delta/2<\Delta_1(a(i,j))\le \delta+1/\sqrt{H}$, as in the case of the good arm, $n_{i,j}^\ba\le H$ for all $i\in[\tilde{m}], j\in[\tilde{m}_i^\ba]$. When $\delta+1/\sqrt{H}<\Delta_1(a(i,j))\le 1$, from the policy $\pi_1(\beta^\dagger)$ under $E_1$,  we need to get $n\ge 1$ such that
\begin{align}
    \mu_1(a(i,j))-\frac{\beta^\dagger}{2} (n-1)+2\sqrt{10\log(H)/n}< 1-\delta, \label{eq:n_bad_con_up_small_no_e}
\end{align}
in which $n+1$ is an upper bound of $n_{i,j}^\ba$ for $i\in[\tilde{m}], j\in[\tilde{m}_i^\ba]$. From a sufficient condition for \eqref{eq:n_bad_con_up_small_no_e} to hold such that $$\mu_1(a(i,j))+2\sqrt{10\log(H)/n}< 1-\delta,$$ we can find that $n=C_4\log(H)/(\Delta_1(a(i,j))-\delta)^2$
for some large constant $C_4>0$ satisfies \eqref{eq:n_bad_con_up_small_no_e}. Therefore, when $\delta+1/\sqrt{H}<\Delta_1(a(i,j))\le 1$, for all $i\in[\tilde{m}],j\in[\tilde{m}_i^\ba]$ we have $n_{i,j}^\ba=\tilde{O}(1/(\Delta_1(a(i,j))-\delta)^2).$ Then with the fact that $n_{i,j}^\ba=0$ for $i\in[m^\ga]/[\tilde{m}^\ga]$, $j\in[m_i^\ba]/[\tilde{m}_i^\ba]$, for any $i\in[m^\ga]$, $j\in[m^\ba_i]$, if $\delta/2<\Delta_1(a(i,j))\le\delta+1/\sqrt{H}$, we have 
$$
n_{i,j}^\ba\le H,
$$ 
and if $\delta+1/\sqrt{H}<\Delta_1(a(i,j))\le 1$, we have 
$$
n_{i,j}^\ba=\tilde{O}(1/(\Delta_1(a(i,j))-\delta)^2).
$$ 
For any $i\in[m^\ga]$, $j\in[m^\ba_i]$, we obtain 
\begin{align*}
    \mathbb{E}[R_{i,j}^\ba]&\le\mathbb{E}
    \left[\Delta_1(a(i,j))n_{i,j}^\ba+\frac{n_{i,j}^\ba(n_{i,j}^\ba-1)}{2}\rot\right]\cr
    &=\tilde{O}\left(\frac{1}{1-\delta/2}\left(\int_{\delta/2}^{\delta+1/\sqrt{H}}Hx+H^2\rot dx\right . \right . + \left . \left .\int_{\delta+1/\sqrt{H}}^1 \frac{1}{(x-\delta)^2}x+\frac{1}{(x-\delta)^4}\rot dx \right)\right)\cr
    &= \tilde{O}\left(H\delta^2+\sqrt{H}\delta+1\right).
\end{align*}
  \end{proof}

It follows that, with $\delta=\Theta(1/\sqrt{H})$ and $m^\ga=C_3$, we have
\begin{align}
\mathbb{E}[R^{\pi_1(\beta^\dagger)}(H)]&=O(\mathbb{E}[R^{\pi_1(\beta^\dagger)}_{m^\ga}])\cr
&=O\left(\mathbb{E}\left[\sum_{i\in[m^\ga]}\left(R_i^\ga+\sum_{j\in[m_i^\ba]}R_{i,j}^\ba\right)\right)\right]\cr
&=\tilde{O}\left(\left(\delta H+H^2\rot\right)+\frac{1}{\delta}\left(H\delta^2+\sqrt{H}\delta+1\right)\right)\cr
&=\tilde{O}(\sqrt{H}),    \label{eq:regret_bd_small_no_e}
\end{align}
where the third equality is obtained from Lemma~\ref{lem:R_good_bad_bd_small_no_e} and  $\mathbb{E}[m_i^\ba]=2/\delta-1$.

Finally, we can conclude the proof by noting that from \eqref{eq:regret_bd_large_no_e} and \eqref{eq:regret_bd_small_no_e}, for $\rot=o(1)$, we have
$$\mathbb{E}[R^{\pi_1(\beta^\dagger)}(H)]=\tilde{O}(\max\{H\rot^{1/3},\sqrt{H}\}).$$ 
Therefore, by summing regrets from $\lceil T/H\rceil$ number of blocks, we have shown that
\begin{align}
\mathbb{E}[R_1^\pi(T)]=\tilde{O}((T/H)\max\{H\rot^{1/3},\sqrt{H}\})=\tilde{O}(\max\{T\rot^{1/3},T/\sqrt{H}\}).\label{eq:regret_bd_aducb_e}
\end{align}
 

\paragraph{Upper bounding $\mathbb{E}[R_2^\pi(T)]$} We observe that the EXP3 is run for $\lceil T/H \rceil$ decision rounds and the number of policies (i.e. $\pi_i(\beta)$ for $\beta\in\mathcal{B}$) is $B$. Denote the maximum absolute sum of rewards of any block with length $H$ by a random variable $Q^\prime$. 
We first provide a bound for $Q^\prime$ using concentration inequalities. For any block $i$, we have 
\begin{align}
    \left|\sum_{t=(i-1)H+1}^{i\cdot H\wedge T}\mu_t(a_t)+\eta_t\right|\le \left|\sum_{t=(i-1)H+1}^{i\cdot H\wedge T}\mu_t(a_t)\right|+\left|\sum_{t=(i-1)H+1}^{i\cdot H\wedge T}\eta_t\right|.\label{eq:Q_bd_no_e}
\end{align}
Denote by $\mathcal{T}_i$ the set of time steps in the $i$-th block. We define the event $E_2(i)=\{|\hat{\mu}_t(a)-\bar{\mu}_t(a)|\le \sqrt{8\log(T)/n_t(a)}, \hbox{ for all } t\in \mathcal{T}_i, a\in\mathcal{A}(i)\}$ and $E_2=\bigcap_{i\in[\lceil T/H \rceil]}E_2(i).$ From Lemma~\ref{lem:chernoff_sub-gau}, with $H=\sqrt{T}$ we have 
$$
\mathbb{P}(E_2^c)\le \sum_{i\in[\lceil T/H\rceil]}\frac{2H^3}{T^{4}}\le \frac{2}{T^2}.
$$ 
By assuming that $E_2$ holds true, we can get a lower bound for $\mu_t(a_t)$, which may be a negative value from rotting, 
for getting an upper bound for $|\sum_{t=(i-1)H+1}^{i\cdot H\wedge T}\mu_t(a_t)|$. Let $\beta_{\max}$ denote the maximum value in $\mathcal{B}$.  From the policy $\pi$ with $H=\sqrt{T}$, when $\mu_1(a)=0$ for some arm $a$, since 
\begin{align*}
    &\tilde{\mu}_t^o(a(i,j))-\rot n_t(a(i,j))+\sqrt{\frac{10\log(H)}{n_t(a(i,j))}}\cr &\le \hat{\mu}_t(a(i,j))-(\rot/2)(n_t(a(i,j))-1)+\sqrt{\frac{4\log(T)}{n_t(a(i,j))}}\cr
    &= \bar{\mu}_t(a(i,j))-(\rot/2)(n_t(a(i,j))-1)+\sqrt{\frac{4\log(T)}{n_t(a(i,j))}}+\sqrt{
    \frac{8\log(T)}{n_t(a(i,j))}}
    \cr &\le\mu_1(a)-(\rot/2)(n_t(a(i,j))-1)+\sqrt{\frac{4\log(T)}{n_t(a(i,j))}}+\sqrt{
    \frac{8\log(T)}{n_t(a(i,j))}}\cr &\le \sqrt{\frac{4\log(T)}{n_t(a(i,j))}}+\sqrt{
    \frac{8\log(T)}{n_t(a(i,j))}}, 
\end{align*}
we need to find an positive integer $n$ such that
   \begin{align*}
       \sqrt{4\log(T)/n}+\sqrt{8\log(T)/n}\le 1-\beta_{\max}^{1/3},
   \end{align*} 
   in which $n$ is an upper bound for the number of pulls of arm $a$. From $\beta_{\max}^{1/3}=1/2$, we can observe that the condition holds with $n=\lceil C\log(T)\rceil$ for some large enough $C$. From this fact, we can find that for any sampled arm $a$ from $\pi$, we have $\mu_t(a)\ge -(C\log(T)+1)\rot  $. Then under $E_2$, with $\mu_t(a)\le 1$, for any $i\in[\lceil T/H\rceil]$ we have $|\sum_{t=(i-1)H+1}^{i\cdot H\wedge T}\mu_t(a_t)|\le\max\{ (C \log(T)+1)\rot H,H\}$. 

Next we provide a bound for $|\sum_{t=(i-1)H+1}^{i\cdot H\wedge T}\eta_t|$. We define the event $E_3(i)=\{|\sum_{t=(i-1)H+1}^{i\cdot H\wedge T}\eta_t| \le 2\sqrt{H\log( T)}\}$ and $E_3=\bigcap_{i\in[\lceil T/H\rceil]}E_3(i)$. From Lemma~\ref{lem:chernoff_sub-gau}, for any $i\in[\lceil T/H\rceil]$, we have
$$
\mathbb{P}\left(E_3(i)^c\right)\le \frac{2}{T^2}.
$$
Then, under $E_2\cap E_3$, with \eqref{eq:Q_bd_no_e}, we have 
$$
Q^\prime\le \max\{ C H\log(T)\rot,H\}+2\sqrt{H\log(T)}\le 93 H\log(T)+2\sqrt{H\log(T)} ,
$$ which implies
 $1/2+\sum_{t=(i-1)H}^{i\cdot H\wedge T}r_t/(2CH\log T+4\sqrt{H\log T})\in[0,1]$. With the rescaling and translation of rewards in Algorithm~\ref{alg:Alg2}, from Corollary 3.2. in \cite{auer}, we have    
\begin{align}  
\mathbb{E}[R_2^\pi(T)|E_2\cap E_3]= \tilde{O}\left((CH\log T+2\sqrt{H\log T})\sqrt{BT/H}\right)=\tilde{O}\left(\sqrt{HBT}\right).\label{eq:regret_bd_exp3_Q}
\end{align}
Note that the expected regret from EXP3 is trivially bounded by $o(H^2(T/H))=o(TH)$ and $B=O(\log(T))$. Then, with \eqref{eq:regret_bd_exp3_Q}, we have
\begin{align}
\mathbb{E}[R_2^\pi(T)]
&=\mathbb{E}[R_2^\pi(T)|E_2\cap E_3]\mathbb{P}(E_2 \cap E_3)+\mathbb{E}[R_2^\pi(T)|E_2^c\cup E_3^c]\mathbb{P}(E_2^c \cup E_3^c)\cr
    &= \tilde{O}\left(\sqrt{HT}\right)+o\left(TH\right)(4/T^2)\cr
    &= \tilde{O}\left(\sqrt{HT}\right). \label{eq:regret_bd_exp3_e}
\end{align}
Finally, from \eqref{eq:regret_up_bd_bob}, \eqref{eq:regret_bd_aducb_e}, and \eqref{eq:regret_bd_exp3_e}, with $H=\sqrt{T}$, we have
$$
\mathbb{E}[R^\pi(T)]=\tilde{O}\left(\max\{T^{3/4},\rot^{1/3}T\}\right),$$ which concludes the proof.

\end{document}